\title{Improved Online Conformal Prediction via Strongly Adaptive Online Learning}
\author{
  Aadyot Bhatnagar\thanks{Salesforce AI Research. Email: \texttt{\{abhatnagar, huan.wang, cxiong, yu.bai\}@salesforce.com}.}
  \and
  Huan Wang\footnotemark[1]
  \and
  Caiming Xiong\footnotemark[1]
  \and
  Yu Bai\footnotemark[1]
}
\def\shownotes{0}  %
\newcommand{\authnote}[2]{{\scriptsize $\ll$\textsf{#1 notes: #2}$\gg$}}
\newcommand{\authnote}[2]{}
\newcommand{\yub}[1]{{\color{red}\authnote{Yu}{#1}}}
\colorlet{linkequation}{blue}
\begin{document}

\maketitle

\begin{abstract}

We study the problem of uncertainty quantification via prediction sets, in an online setting where the data distribution may vary arbitrarily over time. Recent work develops {\em online conformal prediction} techniques that leverage regret minimization algorithms from the online learning literature to learn prediction sets with approximately valid coverage and small regret. However, standard regret minimization could be insufficient for handling changing environments, where performance guarantees may be desired not only over the full time horizon but also in all (sub-)intervals of time.
We develop new online conformal prediction methods that minimize the {\em strongly adaptive regret}, which measures the worst-case regret over all intervals of a fixed length. We prove that our methods achieve near-optimal strongly adaptive regret for all interval lengths simultaneously, and approximately valid coverage. 
Experiments show that our methods consistently obtain better coverage and smaller prediction sets than existing methods on real-world tasks, such as time series forecasting and image classification under distribution shift.

\end{abstract}
\section{Introduction}
Modern machine learning models make highly accurate predictions in many settings. In high stakes decision-making tasks, it is just as important to estimate the model's uncertainty by quantifying how much the true label may deviate from the model's prediction. A common approach for uncertainty quantification is to learn \emph{prediction sets} that associate each input with a set of candidate labels, such as prediction intervals for regression, and label sets for classification. The most important requirement for learned prediction sets is to achieve valid \emph{coverage}, i.e. they should cover the true label with at least $1-\alpha$ (such as $90\%$) probability. 

Conformal prediction~\citep{vovk2005alrw} is a powerful framework for augmenting any \emph{base predictor} (such as a pretrained model) into prediction sets with valid coverage guarantees~\citep{angelopoulos-gentle}. These guarantees require almost no assumptions on the data distribution, except \emph{exchangeability} (i.i.d.\ data is a sufficient condition). However, exchangeability fails to hold in many real-world settings such as time series data~\citep{chernozhukov18a} or data corruption~\citep{hendrycks2018using}, where the data may exhibit \emph{distribution shift}. Various approaches have been proposed to handle such distribution shift, such as reweighting~\citep{tibshirani2019cov_shift,barber2022nexcp} or distributionally robust optimization~\citep{cauchois2022knowing}.

A recent line of work develops \emph{online conformal prediction} methods for the setting where the data arrives in a sequential order~\citep{gibbs2021aci,gibbs2022faci,zaffran2022ts_aci,feldman2022risk}. At each step, their methods output a prediction set parameterized by a single \emph{radius} parameter that controls the size of the set. After receiving the true label, they adjust this parameter adaptively via \emph{regret minimization} techniques---such as Online Gradient Descent (OGD)~\citep{zinkevich2003olo}---on a certain quantile loss over the radius. These methods are shown to achieve empirical coverage frequency close to $1-\alpha$, regardless of the data distribution~\citep{gibbs2021aci}. In addition to coverage, importantly, these methods achieve sublinear regret with respect to the quantile loss~\citep{gibbs2022faci}. Such regret guarantees ensure that the size of the prediction set is reasonable, and rule out ``trivial'' algorithms that achieve valid coverage by alternating between predicting the empty set and full set (cf.\ Section~\ref{sec:related_conformal_ts} for a discussion).

While regret minimization techniques achieve coverage and regret guarantees, they may fall short in more dynamic environments where we desire a strong performance not just over the entire time horizon (as captured by the regret), but also within every \emph{sub-interval} of time. For example, if the data distribution shifts abruptly for a few times, we rather desire strong performance within each contiguous interval between two consecutive shifts, in addition to the entire horizon. \citet{gibbs2022faci} address this issue partially by proposing the Fully Adaptive Conformal Inference (FACI) algorithm, a meta-algorithm that aggregates multiple \emph{experts} (base learners) that are OGD instances with different learning rates. However, their algorithm may not be best suited for achieving such interval-based guarantees, as each expert still runs over the full time horizon and is not really localized. This is also reflected in the fact that FACI achieves a near-optimal $\tO(\sqrt{k})$ regret within intervals of a fixed length $k$, but is unable to achieve this over all lengths $k\in[T]$ simultaneously.

In this paper, we design improved online conformal prediction algorithms by leveraging \emph{strogly adaptive regret minimization}, a technique for attaining strong performance on all sub-intervals simultaneously in online learning~\citep{daniely2015saol, jun2017cbce}. Our proposed algorithm, Strongly Adaptive Online Conformal Prediction (\method{}), is a new meta-algorithm that manages multiple experts, with the key difference that each expert now only operates on its own \emph{active interval}. We summarize our contributions:
\begin{itemize}[leftmargin=1em,topsep=0pt,itemsep=0pt]
\item We propose \method{}, a new algorithm for online conformal prediction. \method{} is a meta-algorithm that maintains multiple experts each with its own active interval, building on strongly adaptive regret minimization techniques (Section~\ref{section:saocp}). We instantiate the experts as Scale-Free OGD (\methodBasic{}), an anytime variant of the OGD, which we also study as an independent algorithm.
\item We prove that \method{} achieves a near-optimal strongly adaptive regret of $\tO(\sqrt{k})$ regret over all intervals of length $k$ simultaneously, and that both \method{} and \methodBasic{} achieve approximately valid coverage (Section~\ref{sec:theory}).
\item We show experimentally that \method{} and \methodBasic{} attain better coverage in localized windows and smaller prediction sets than existing methods, on two real-world tasks: time series forecasting and image classification under distribution shift (Section~\ref{sec:experiments}).\footnote{The code for our experiments can be found at \url{https://github.com/salesforce/online_conformal}.}
\end{itemize}

\subsection{Related work}

\paragraph{Conformal prediction}
The original idea of conformal prediction (utilizing exchangeable data) is developed in the early work of~\citet{vovk1999original,vovk2005alrw,shafer2008tutorial}. Learning prediction sets via conformal prediction has since been adopted as a major approach for uncertainty quantification in regression~\citep{papadopoulos2008inductive, vovk2012conditional, lei2014distribution, vovk2018cross, romano2019cqr, gupta2019nested, barber2021jackknife, barber2022nexcp} and classification~\citep{lei2013classification, romano2020aps, cauchois2020robust, cauchois2022knowing, angelopoulos2021uncertainty}, with further applications in general risk control~\citep{bates2021distribution,angelopoulos2021learn,angelopoulos2022conformal}, biological imaging~\citep{angelopoulos2022image}, and protein design~\citep{fannjiang2022conformal}, to name a few. 

Recent work also proposes to optimize the prediction sets' {\em efficiency} (e.g.\ width or cardinality) in addition to coverage \citep{pearce2018high,Park2020PAC,yang2021efficient,stutz2022learning,angelopoulos2021learn,angelopoulos2021uncertainty,bai2022efficient}. The regret that we consider can be viewed as a (surrogate) measure for efficiency in the online setting.

\paragraph{Conformal prediction under distribution shift}
For the more challenging case where data may exhibit distribution shift (and thus are no longer exchangeable), several approaches are proposed to achieve approximately valid covearge, such as reweighting (using prior knowledge about the data's dependency structure)~\citep{tibshirani2019cov_shift, podkopaev2021label_shift, candes2021survival, barber2022nexcp}, distributionally robust optimization~\citep{cauchois2020robust}, or doubly robust calibration~\citep{yang2022robust}. 

Our work makes an addition to the online conformal prediction line of work~\citep{gibbs2021aci, gibbs2022faci, zaffran2022ts_aci, feldman2022risk}, which uses regret minimization techniques from the online learning literature~\citep{zinkevich2003olo, hazan2022oco} to adaptively adjust the size of the prediction set based on recent observations. Closely related to our work is the FACI algorithm of~\citet{gibbs2022faci}, which is a meta-algorithm that uses multiple experts for handling changing environments. Our meta-algorithm \method{} differs in style from theirs, in that our experts only operate on their own active intervals, and it achieves a better guarantee on the strongly adaptive regret.

A related line of work studies conformal prediction for time series data. %
\citet{chernozhukov18a, xu2021enbpi, sousa2022general} use randomization and ensembles to produce valid prediction sets for time series that are ergodic in a certain sense.
Some other works directly apply vanilla conformal prediction to time series either without theoretical guarantees or requiring weaker notions of exchangeability~\citep{dashevskiy2008network, wisniewski2020application, kath2021power,stankeviciute2021conformal_ts_exchangeable, sun2022copula}.

\paragraph{Strongly adaptive online learning}
Our algorithms adapt techniques from the online learning literature, notably strongly adaptive regret minimization~\citep{daniely2015saol, jun2017cbce, zhang2018dynamic} and scale-free algorithms for achieving other kinds of adaptive (e.g. anytime) regret guarantees~\citep{orabona2018ogd}.
\section{Preliminaries}
\label{sec:related_conformal_ts}

We consider standard learning problems in which we observe examples $(x,y)\in\cX\times\cY$ and wish to predict a label $y$ from input $x$. A \emph{prediction set} $C:\cX\to 2^\cY$ is a set-valued function that maps any input $x$ to a \emph{set} of predicted labels $C(x)\subset \cY$. Two prevalent examples are \emph{prediction intervals} for regression in which $\cY=\R$ and $C(x)$ is an interval, and \emph{label prediction sets} for ($m$-class) classification in which $\cY=[m]$ and $C(x)$ is a subset of $[m]$. Prediction sets are a popular approach to quantify the uncertainty associated with the point prediction $\hat{y} = f(x)$ of a black box model.

We study the problem of learning prediction sets in the \emph{online} setting, in which the data $(X_1,Y_1),\dots,(X_T,Y_T)$ arrive sequentially. At each time step $t\in[T]$, we output a prediction set $\hat{C}_t=\hat{C}_t(X_t)$ based on the current input $X_t$ and past observations $\set{(X_i,Y_i)}_{i\le t-1}$, \emph{before} observing the true label $Y_t$. The primary goal of the prediction set is to achieve \emph{valid coverage}: $\P[Y_t \in \hat{C}_t(X_t)] = 1-\alpha$, where $1-\alpha\in(0,1)$ is the target coverage level pre-determined by the user.
Standard choices for $\alpha$ include $\set{0.1,0.05}$, which correspond to $\set{90\%,95\%}$ target coverage respectively.

\paragraph{Online conformal prediction} We now review the idea of \emph{online conformal prediction}, initiated by~\citet{gibbs2021aci,gibbs2022faci}. This framework for learning prediction sets in the online setting achieves coverage guarantees even under distribution shift.

At each time $t \in [T]$, online conformal prediction assumes that we have a family of prediction sets $\cC_t=\{\hat{C}_t(x,s)\}_{x\in\cX,s\in\R}$ specified by a \emph{radius} parameter $s\in\R$, and we need to predict $\hat{s}_t\in\R$ and output prediction set $\hat{C}_t=\hat{C}_t(X_t, \hat{s}_t)\subset \cY$. The family $(\cC_t)_{t\in[T]}$ is typically defined through \emph{base predictors} $\hat{f}_t$ (for example, $\hat{f}_t\equiv f$ can be a fixed pretrained model). A standard example in regression is that we have a base predictor $\hat{f}_t:\cX\to\R$, and we can choose $\hat{C}_t(X_t, s) \defeq [\hat{f}_t(X_t) - s, \hat{f}_t(X_t) + s]$ to be a prediction interval around $\hat{f}_t(X_t)$, in which case the radius $s$ is the (half) width of the interval. In general, we allow any $\cC_t$
that are \emph{nested sets}~\citep{gupta2019nested} in the sense that $\hat{C}_t(x, s)\subseteq \hat{C}_t(x, s')$ for all $x\in\cX$ and $s\le s'$, so that a larger radius always yields a larger set.

Online conformal prediction adopts online learning techniques to learn $\hat{s}_t$ based on past observations. Defining the ``true radius'' $S_t\defeq \inf\{s\in\R: Y_t\in\hat{C}_t(X_t, s)\}$ (i.e. the smallest radius $s$ such that $\hat{C}_t$ covers $Y_t$), we consider the $(1-\alpha)$-\emph{quantile loss} (aka \emph{pinball loss}~\citep{koenker1978regression}) between $S_t$ and any predicted radius $\hat{s}$:
\begin{align}
    \ellt(\hat{s}) &= \ell_{1-\alpha}(S_t, \hat{s})
    \label{eq:quantile_loss}
    \defeq \max\{ (1-\alpha)(S_t - \hat{s}), \alpha(\hat{s} - S_t) \}.
\end{align}
Throughout the rest of this paper, we assume that all true radii are bounded: $S_t \in [0, D]$ almost surely for all $t\in[T]$. 

After observing $X_t$, predicting the radius $\hat{s}_t$, and observing the label $Y_t$ (and hence $S_t$), the gradient\footnote{More precisely, $\grad \ell^{(t)}(\hat{s}_t)$ is a subgradient.} $\grad\ellt(\hat{s}_t)$ can be evaluated and has the following simple form:
\begin{equation}
\label{eq:grad_loss}
\begin{aligned}
& \grad\ellt(\hat{s}_t) = \alpha - \indics{\hat{s_t} < S_t} = \alpha - \underbrace{\indics{Y_t \notin \hat{C}_t}}_{\defeq \err_t},
\end{aligned}
\end{equation}
where $\err_t$ is the indicator of miscoverage at time $t$ ($\err_t=1$ if $\hat{C}_t$ did not cover $Y_t$).
\citet{gibbs2021aci} perform an Online Gradient Descent (OGD) step to obtain $\hat{s}_{t+1}$:
\begin{align}
\label{eq:aci_ogd}
    \hat{s}_{t+1} = \hat{s}_t - \eta \grad \ell^{(t)}(\hat{s}_t) = \hat{s}_t + \eta(\err_t - \alpha),
\end{align}
where $\eta>0$ is a learning rate, and the algorithm is initialized at some $\hat{s}_1\in\R$. Update~\eqref{eq:aci_ogd} increases the predicted radius if $\hat{C}_t$ did not cover $Y_t$ ($\err_t=1$), and decreases the radius otherwise. This makes intuitive sense as an approach for adapting the radius to recent observations. 

\paragraph{Adaptive Conformal Inference (ACI)}
The ACI algorithm of~\citet{gibbs2021aci} uses update~\eqref{eq:aci_ogd} in conjunction with a specific choice of $\cC_t=\cC_t^{\aci}$, where
\begin{align}
    \hat{C}^{\aci}_t(X_t,\hat{s}_t)  =  \set{y: \wt{S}_t(X_t, y) \le Q_{\hat{s}_t}\paren{ \{\wt{S}_{\tau}\}_{\tau=1}^{t-1} } },  \label{eq:aci_ct}
\end{align}
where $\wt{S}_t:\cX\times\cY\to\R$ is any function (termed the \emph{score} function), $\wt{S}_\tau\defeq \wt{S}_\tau(X_\tau, y_\tau)$ denotes score of the $\tau$-th observation, and $Q_{s}(\cdot)$ denotes the $s\in(0,1)$-th empirical quantile of a set (cf.~\eqref{eq:def_Q}). In words, ACI's confidence set contains all $y$ whose score $\wt{S}_t(X_t, y)$ is below the $\hat{s}_t$-th quantile of the past scores, and they use~\eqref{eq:aci_ogd} to learn this quantile level.
The framework presented here generalizes the ACI algorithm where we allow any choice of $\cC_t$ that are nested sets, including but not limited to~\eqref{eq:aci_ct}. For convenience of discussions, unless explicitly specified, we will also refer to this general version of~\eqref{eq:aci_ogd} as ACI throughout this paper.

Empirically, we show in Section~\ref{sec:experiments} that FACI \citep{gibbs2022faci} (an extension of ACI) performs better when trained to predict $S_t$ directly under our more general parameterization, i.e. $\hat{C}(X_t, \hat{s}_t) = \{y : \wt{S}_{t}(X_t, y) \le \hat{s}_t\}$.

\begin{algorithm*}[t]
\small
\setcounter{AlgoLine}{0}
\DontPrintSemicolon
\SetNoFillComment
\SetKwInOut{Input}{Input}
\newcommand\mycommfont[1]{\footnotesize\ttfamily\textcolor{blue}{#1}}
\SetCommentSty{mycommfont}
\Input{Target coverage $1 - \alpha\in(0,1)$; maximum possible radius $D>0$}

\For{$t = 1, \ldots, T$}{
    \tcp{Obtain prediction interval by aggregating active experts}
    Initialize new expert $\mathcal{A}_t = \texttt{SF-OGD}\big(\alpha\setto \alpha;~\eta\setto D/\sqrt{3};~\hat{s}_1\setto \hat{s}_{t-1})$ (Algorithm~\ref{alg:ogd}), and set weight $w_{t,t} = 0$ \label{line:cbce-expert}
    
    Compute active set $\mathrm{Active}(t) = \{i \in [T] : t - L(i) < i \le t\}$, where $L(i)$ is defined in~\eqref{eq:lifetime}

    Compute prior probability $\pi_i \propto i^{-2}(1 + \lfloor \log_2 i \rfloor)^{-1}\one[i \in \mathrm{Active}(t)]$~\label{line:cbce_prior}

    Compute un-normalized probability $\hat{p}_i = \pi_{i} [w_{t, i}]_+$ for all $i \in [t]$~\label{line:cbce_unnormalized}

    Normalize $p = \hat{p} / \norm{\hat{p}}_1 \in \Delta^t$ if $\norm{\hat{p}}_1 > 0$, else $p = \pi$~\label{line:cbce_normalized}

    Compute predicted radius $\hat{s}_t = \sum_{i \in \mathrm{Active}(t)} p_{i} \hat{s}_{i,t}$ (for $t\ge 2$), and $\hat{s}_t = 0$ for $t = 1$~\label{line:cbce_aggregation}
    
    Observe input $X_t\in\cX$ and \textbf{return} prediction set $\hat{C}_t(X_t, \hat{s}_t)$
    
    \tcp{Use meta loss and per-expert losses to update experts}
    Observe true label $Y_t\in \cY$, compute true radius $S_t = \inf\{s\in\R: Y_t\in\hat{C}_t(X_t, s)\}$ and loss function $\ell^{(t)}(\cdot) = \ell_{1-\alpha}(S_t, \cdot)$

    \For{$i \in \mathrm{Active}(t)$\label{line:cbce_for}}{
        Update expert $\mathcal{A}_i$ with $(X_t, Y_t)$ and obtain next predicted radius $\hat{s}_{i,t+1}$ 

        Compute %
        $g_{i,t} = \begin{cases} 
            \frac{1}{D}\big(\ell^{(t)}(\hat{s}_t) - \ell^{(t)}(\hat{s}_{i,t})\big) & w_{i,t} > 0 \\
            \frac{1}{D}\big[\ell^{(t)}(\hat{s}_t) - \ell^{(t)}(\hat{s}_{i,t})\big]_+ & w_{i,t} \le 0 \\
        \end{cases}$~~~\label{line:cbce_gt}

        Update expert weight $w_{i,t+1} = \frac{1}{t - i + 1} \qty(\sum_{j=i}^{t} g_{i,j}) \qty( 1 + \sum_{j=i}^{t} w_{i,j} g_{i,j})$~\label{line:cbce_end}
    }
}
\caption{Strongly Adaptive Online Conformal Prediction (\method{}), adapted from \citet{jun2017cbce}.}
\label{alg:cbce}
\end{algorithm*}

\paragraph{Coverage and regret guarantees}

\citet{gibbs2021aci} show that ACI\footnote{Their results are established on the specific choice of $\cC_t$ in~\eqref{eq:aci_ct}, but can be extended directly to any $\cC_t$ that are nested sets.} achieves approximately valid (empirical) coverage in the sense that the empirical miscoverage frequency is close to the target level $\alpha$:
\begin{align}
\label{eq:aci_coverage}
\textstyle
\Err(T) \defeq \abs{ \frac{1}{T}\sum_{t=1}^T \err_t - \alpha } \le \frac{D + \eta}{\eta T}.
\end{align}
In addition to coverage, by standard online learning analyses, ACI achieves a \emph{regret} bound on the quantile losses $\{\ell^{(t)}\}_{t\in[T]}$: we have $\Reg(T) \le \cO(D^2/\eta+\eta T) \le \cO(D\sqrt{T})$ (with optimally chosen $\eta$), where
\begin{align}
\label{eq:aci_regret}
\textstyle
\Reg(T) \defeq \sum_{t=1}^T \ell^{(t)}(\hat{s}_t) - \inf_{s^\star\in\R} \sum_{t=1}^T \ell^{(t)}(s^\star).
\end{align}
One advantage of the regret as an additional performance measure alongside coverage is that it rules out certain algorithms that achieve good coverage in a ``trivial'' fashion and are not useful in practice --- for example, $\hat{C}_t$ may simply alternate between the empty set and the full set $\{\alpha,1-\alpha\}$ proportion of the time, which satisfies the coverage bound~\eqref{eq:aci_coverage} on arbitrary data distributions, but suffers from linear regret even on certain simple data distributions (cf. Appendix~\ref{appendix:trivial_alg}).

We remark that the regret has a connection to the coverage error in that $\Err(T)=|\frac{1}{T}\sum_{t=1}^T \grad \ell^{(t)}(\hat{s}_t)|$, i.e.\ the coverage error is equal to the average gradient (derivative) of the losses. However, without further distributional assumptions, regret bounds and coverage bounds do not imply each other in a black-box fashion (see e.g.~\citet[Appendix A]{gibbs2021aci}) and need to be established separately for each algorithm.

\section{Strongly Adaptive Online Conformal Prediction}
\label{section:saocp}

Our approach is motivated from the observation that regret minimization is in a sense limited, as the regret measures performance over the \emph{entire time horizon} $[T]$, which may be insufficient when the algorithm encounters changing environments. For example, if $S_t = 1$ for $1 \le t \le T / 2$ and $S_t = 100$ for $T / 2 < t \le T$, then achieving small regret on all \emph{(sub)-intervals} of size $T / 2$ is a much stronger guarantee than achieving small regret over $[T]$. For this reason, we seek {\em localized} guarantees over all intervals simultaneously, to prevent worst-case scenarios such as significant miscoverage or high radius within a specific interval.

The \emph{Strongly Adaptive Regret} (SARegret)~\citep{daniely2015saol,zhang2018dynamic} has been proposed in the online learning literature as a generalization of the regret that captures the performance of online learning algorithms over all intervals simultaneously. Concretely, for any $k\in[T]$, the SARegret of length $k$ of any algorithm is defined as
\begin{align}
\label{eq:sa_regret}
\SAReg(T,k) &\defeq \max_{[\tau,\tau+k-1]\subseteq [T]} \paren{ \sum_{t = \tau}^{\tau + k - 1} \ell^{(t)}(\hat{s}_t) - \inf_{s^\star} \sum_{t = \tau}^{\tau + k - 1} \ell^{(t)}(s^\star) }
\end{align}
$\SAReg(T,k)$ measures the maximum regret over all intervals of length $k$, which reduces to the usual regret at $k=T$, but may in addition be smaller for smaller $k$.

\paragraph{Algorithm: \method{}}
We leverage techniques for minimizing the strongly adaptive regret to perform online conformal prediction. Our main algorithm, Strongly Adaptive Online Conformal Prediction (\method{}, described in Algorithm~\ref{alg:cbce}), adapts the work of \citet{jun2017cbce} to the online conformal prediction setting.
At a high level, \method{} is a meta-algorithm that manages multiple \emph{experts}, where each expert is itself an arbitrary online learning algorithm taking charge of its own \emph{active interval} that has a finite \emph{lifetime}. At each $t\in[T]$, Algorithm~\ref{alg:cbce} instantiates a new expert $\mathcal{A}_t$ with active interval $[t,t+L(t)-1]$, where $L(t)$ is its lifetime:
\begin{align}
L(t) \defeq g \cdot \max_{n \in \Z}\{2^n : t \equiv 0 \text{ mod } 2^n \},
\label{eq:lifetime}
\end{align}
and $g\in\Z_{\ge 1}$ is a multiplier for the lifetime of each expert. It is straightforward to see that at most $g \lfloor \log_2 t \rfloor$ experts are active at any time $t$ under choice~\eqref{eq:lifetime}, granting Algorithm~\ref{alg:cbce} a total runtime of $\mathcal{O}(T \log T)$ for any $g = \Theta(1)$.
Then, at any time $t$, the predicted radius $\hat{s}_t$ is obtained by aggregating the predictions of active experts (Line~\ref{line:cbce_aggregation}):
\begin{align*}
\textstyle
    \hat{s}_t=\sum_{i\in\Active(t)} p_{i,t}\hat{s}_{i,t},
\end{align*}
where the weights $\{p_{i,t}\}_{i\in[t]}$ (Lines~\ref{line:cbce_prior}-\ref{line:cbce_normalized}) rely on the $\{w_{i,t}\}_{i\in[t]}$ computed by the \emph{coin betting} scheme~\citep{orabona2016coin,jun2017cbce} in Lines~\ref{line:cbce_gt}-\ref{line:cbce_end}.

\paragraph{Choice of expert}
In principle, \method{} allows any choice of the expert that is a good regret minimization algorithm over its own active interval satisfying \emph{anytime} regret guarantees. We choose the experts to be Scale-Free OGD (\methodBasic{}; Algorithm~\ref{alg:ogd})~\citep{orabona2018ogd}, a variant of OGD that decays its effective learning rate based on cumulative past gradient norms (cf.~\eqref{eq:ogd_update}). 

On the quantile loss~\eqref{eq:quantile_loss} (executed over the full horizon $[T]$ with learning rate $\eta = \Theta(D)$; $\eta = D / \sqrt{3}$ is optimal), \methodBasic{} enjoys an anytime regret guarantee (Proposition~\ref{prop:ogd_regret}) 
\begin{align}
\label{eq:ogd_regret}
\textstyle
    \Reg(t)\le \cO(D\sqrt{t})~~~\textrm{for all}~t\in[T],
\end{align}
which follows directly by applying~\citet[Theorem 2]{orabona2018ogd}. Plugging \methodBasic{} into Line~\ref{line:cbce-expert} of Algorithm~\ref{alg:cbce} gives our full \method{} algorithm.

\paragraph{\methodBasic{} as an independent algorithm}
As a strong regret minimization algorithm itself, \methodBasic{} can also be run independently (over $[T]$) as an algorithm for online conformal prediction (described in Algorithm~\ref{alg:ogd}). We find empirically that \methodBasic{} itself already achieves strong performances in many scenarios (Section~\ref{sec:experiments}).

\begin{algorithm}[t]
\small
\DontPrintSemicolon
\SetNoFillComment
\SetKwInOut{Input}{Input}
\newcommand\mycommfont[1]{\footnotesize\ttfamily\textcolor{blue}{#1}}
\SetCommentSty{mycommfont}
\Input{$\alpha\in(0,1)$, learning rate $\eta>0$, init. $\hat{s}_1\in\R$}

\For{$t \ge 1$}{
    Observe input $X_t\in\cX$

    \textbf{Return} prediction set $\hat{C}_t(X_t, \hat{s}_t)$

    Observe true label $Y_t\in\cY$ and compute true radius $S_t = \inf\{s\in\R: Y_t\in\hat{C}_t(X_t, s)\}$.
    
    Compute loss $\ell^{(t)}(\cdot) = \ell_{1-\alpha}(S_t, \cdot)$

    Update predicted radius
    \vspace{-1em}
    \begin{align}
    \label{eq:ogd_update}
    \hat{s}_{t+1} = \hat{s}_t - \eta \frac{\grad \ell^{(t)}(\hat{s}_t)}{\sqrt{\sum_{i=1}^{t} \norm{\grad \ell^{(i)}(\hat{s}_i)}_2^2}}
    \end{align}
    \vspace{-1em}
}
\caption{Scale-Free Online Gradient Descent (\methodBasic{}), adapted from \citet{orabona2018ogd}.}
\label{alg:ogd}
\end{algorithm}

\section{Theory}
\label{sec:theory}

\subsection{Strongly Adaptive Regret}
\label{sec:saregret}

We begin by showing the SARegret guarantee of \method{}. As we instantiate \method{} with \methodBasic{} as the experts, the proof follows directly by plugging the regret bound for \methodBasic{}~\eqref{eq:ogd_regret} into the SARegret guarantee for \method{}~\citep{jun2017cbce}, and can be found in Appendix~\ref{appendix:proof_saregret}.
\begin{proposition}[SARegret bound for \method{}]
\label{prop:saregret}\label{thm:cbce}
Algorithm~\ref{alg:cbce} achieves the following SARegret bound \emph{simultaneously for all lengths $k\in[T]$}:%
\begin{align}
\label{eq:cbce_saregret}
\SAReg(T, k) \le 15 D \sqrt{k(\log T + 1)} \le \tO(D\sqrt{k}).
\end{align}    
\end{proposition}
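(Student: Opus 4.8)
The plan is to reduce the SARegret bound for \method{} to two ingredients: (i) the anytime regret guarantee for each individual expert \methodBasic{} over its own active interval, namely $\Reg_{\mathcal{A}_i}(I)\le \cO(D\sqrt{|I|})$ for any prefix $I$ of the expert's active interval (this is exactly \eqref{eq:ogd_regret} applied to the subsequence of rounds seen by $\mathcal{A}_i$, since the expert's loss sequence is again a sequence of $(1-\alpha)$-quantile losses with the same bound $D$ on true radii); and (ii) the meta-algorithm regret bound for the coin-betting aggregation scheme of \citet{jun2017cbce}, which controls, on any interval $[\tau,\tau+k-1]$, the difference $\sum_t \ell^{(t)}(\hat s_t)-\sum_t \ell^{(t)}(\hat s_{i,t})$ between the meta-algorithm's loss and the loss of any single expert $i$ that was born at or before $\tau$ and is still alive. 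Combining these via a telescoping/covering argument over the dyadic lifetimes will give the claim.

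Concretely, the first step is to fix an arbitrary interval $[\tau,\tau+k-1]\subseteq[T]$ and a comparator $s^\star$. By the geometric-cover property of the lifetime schedule \eqref{eq:lifetime} (any interval of length $k$ contains a sub-interval of dyadic length $\ge k/4$ that is exactly the active interval — or a prefix thereof — of some expert born inside it), I would select a single ``responsible'' expert $i^\star\in\Active$ whose active interval $[i^\star, i^\star+L(i^\star)-1]$ covers a constant fraction of $[\tau,\tau+k-1]$. Then decompose
\[
\sum_{t=\tau}^{\tau+k-1}\ell^{(t)}(\hat s_t)-\sum_{t=\tau}^{\tau+k-1}\ell^{(t)}(s^\star)
= \underbrace{\sum_t \big(\ell^{(t)}(\hat s_t)-\ell^{(t)}(\hat s_{i^\star,t})\big)}_{\text{meta regret against } i^\star}
+ \underbrace{\sum_t \big(\ell^{(t)}(\hat s_{i^\star,t})-\ell^{(t)}(s^\star)\big)}_{\text{expert regret of } i^\star},
\]
where on rounds outside $i^\star$'s lifetime we bound the summands crudely (there are only $O(k)$ of them and each is $O(D)$, or better, we recurse on the uncovered pieces). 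The second term is $\cO(D\sqrt{k})$ by the anytime \methodBasic{} bound \eqref{eq:ogd_regret}; the first term is $\cO(D\sqrt{k\log T})$ by the coin-betting meta-regret analysis, where the $\log T$ factor comes from the prior weights $\pi_i\propto i^{-2}(1+\lfloor\log_2 i\rfloor)^{-1}$ in Line~\ref{line:cbce_prior} (paying $\sqrt{\log(1/\pi_{i^\star})}=\sqrt{\cO(\log T)}$ to compete with expert $i^\star$). Since this holds for every interval and every comparator, taking the max over intervals and the inf over $s^\star$ yields \eqref{eq:cbce_saregret}; chasing the constants through \citet{jun2017cbce}'s bound with the normalization $1/D$ in Line~\ref{line:cbce_gt} and the optimal $\eta=D/\sqrt3$ gives the explicit factor $15$.

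The main obstacle is verifying that the coin-betting meta-algorithm of \citet{jun2017cbce}, which is stated there for abstract bounded-loss experts, applies verbatim after our rescaling: the per-expert ``gains'' $g_{i,t}$ in Line~\ref{line:cbce_gt} are normalized by $D$ so that they lie in $[-1,1]$ (using $\ell^{(t)}(\hat s_t),\ell^{(t)}(\hat s_{i,t})\in[0,D]$, which follows from $S_t\in[0,D]$, $\hat s_t,\hat s_{i,t}\in[0,D]$, and the $1$-Lipschitzness of the quantile loss), and the asymmetric definition for $w_{i,t}\le0$ is exactly the ``sleeping expert'' correction that makes the black-box regret bound of \citet{jun2017cbce} go through. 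I would state this as a lemma invoking their Theorem, check the boundedness and range hypotheses, and then the rest is the covering/telescoping bookkeeping sketched above. This is the reason the proof is deferred to Appendix~\ref{appendix:proof_saregret} rather than given inline.
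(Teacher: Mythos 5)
Your two ingredients are exactly the ones the paper uses (the anytime \methodBasic{} bound of Proposition~\ref{prop:ogd_regret} plugged into the coin-betting meta-regret machinery of \citet{jun2017cbce}, with the $\log T$ coming from the prior $\pi_i$), so the overall plan is the right one. However, the concrete decomposition you commit to has a genuine gap: picking a \emph{single} responsible expert $i^\star$ whose active interval covers only a constant fraction of $[\tau,\tau+k-1]$ and bounding the remaining rounds ``crudely'' at $\cO(D)$ each yields an additive $\cO(Dk)$ term, which swamps the target $\tO(D\sqrt{k})$ and makes the bound vacuous. The parenthetical ``or we recurse on the uncovered pieces'' is not an optional refinement --- it \emph{is} the proof, and it has to be carried out. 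The paper (following \citet{jun2017cbce}) partitions $[\tau,\tau+k-1]$ into consecutive sub-intervals $\bar J^{(1)},\dots,\bar J^{(n)}$, where each $\bar J^{(i)}$ is a prefix of the active interval of the expert born at its left endpoint $\tau_i$, and the lengths decay geometrically so that $\sum_i \sqrt{|\bar J^{(i)}|}\le \frac{\sqrt2}{\sqrt2-1}\sqrt{k}$ (their Lemma~3). On \emph{each} piece one pays both the meta-regret against that piece's expert, $D\sqrt{|\bar J^{(i)}|(7\log T+5)}$ (their Lemma~2, which is where your $\log(1/\pi_{i^\star})$ accounting lives), and the expert's own regret from Proposition~\ref{prop:ogd_regret}; summing the two geometric series gives the explicit $15D\sqrt{k(\log T+1)}$. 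Note also that applying \eqref{eq:ogd_regret} per expert is legitimate precisely because each expert is a fresh \methodBasic{} instance started at its birth time, so the anytime bound holds on every prefix of its active interval --- your version, which invokes the bound only for the one expert $i^\star$, never has to confront the rounds before $i^\star$ is born, and no single expert can serve there.

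With that repair (full geometric partition in place of ``one expert plus crude remainder''), your argument coincides with the paper's; the remaining items you flag --- the $1/D$ normalization putting $g_{i,t}\in[-1,1]$ and the sleeping-expert truncation for $w_{i,t}\le 0$ --- are indeed just hypothesis-checking for the black-box invocation of \citet{jun2017cbce}.
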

The $\tO(D\sqrt{k})$ rate achieved by \method{} is near-optimal for general online convex optimization problems, due to the standard regret lower bound $\Omega(D\sqrt{k})$ over any fixed interval of length $k$~\citep[Theorem 5.1]{orabona2019modern}.

\paragraph{Comparison with FACI}
The SARegret guarantee of \method{} in~\eqref{eq:cbce_saregret} improves substantially over the FACI (Fully Adaptive Conformal Inference) algorithm~\citep{gibbs2022faci}, an extension of ACI. 
Concretely, \eqref{eq:cbce_saregret} holds \emph{simultaneously for all lengths $k$}. By contrast, FACI achieves $\SAReg(T, k) \le \tO(D^2/\eta + \eta k)$ in our setting (cf. their Theorem 3.2), where $\eta>0$ is their meta-algorithm learning rate. This can imply the same rate $\tO(D\sqrt{k})$ for a \emph{single $k$} by optimizing $\eta$, but not multiple values of $k$ simultaneously.

Also, in terms of algorithm styles, while both \method{} and FACI are meta-algorithms that maintain multiple experts (base algorithms), a main difference between them is that all experts in FACI differ in their learning rates and are all active over $[T]$, whereas experts in \method{} differ in their active intervals (cf.~\eqref{eq:lifetime}).

\paragraph{Dynamic regret}
The dynamic regret---which measures the performance of an online learning algorithm against an arbitrary sequence of comparators---is another generalization of the regret for capturing the performance in changing environments~\citep{zinkevich2003olo,besbes2015non}. Building on the reduction from dynamic regret to strongly adaptive regret~\citep{zhang2018dynamic}, we show that \method{} also achieves near-optimal dynamic regret with respect to the optimal comparators on any interval within $[T]$, with rate depending on a certain \emph{path length} of the true radii $\set{S_t}_{t\ge 1}$; see Proposition~\ref{prop:cbce_dynamic_regret} and the discussions thererafter.

\subsection{Coverage}
\label{sec:coverage}

Recall the empirical coverage error defined in~\eqref{eq:aci_coverage}:
\begin{align*}
\textstyle
\Err(T) = \abs{\frac{1}{T}\sum_{t=1}^T \err_t - \alpha}.
\end{align*}    
Without any distributional assumptions, we show that \methodBasic{} achieves $\Err(t) \le \mathcal{O}(t^{-1/4}\log t)$ for any $t\in[T]$.
So its empirical coverage converges to the target $1 - \alpha$ as $T \to \infty$, similar to ACI (though with a slightly slower rate). The proof (Appendix~\ref{appendix:proof_coverage_ogd}) builds on a grouping argument and the fact that the effective learning rate $\eta/\sqrt{\sum_{\tau=1}^t\ltwo{\grad\elltau(\hat{s}_\tau)}^2}$ of \methodBasic{} changes slowly in $t$.
\begin{theorem}[Coverage bound for \methodBasic{}]
    \label{thm:ogd_coverage}
    Algorithm~\ref{alg:ogd} with any learning rate $\eta = \Theta(D)$ and any initialization $\hat{s}_1\in[0,D]$ achieves $\Err(T) \le \mathcal{O}(\alpha^{-2} T^{-1/4}\log T)$ for any $T\ge 1$.
\end{theorem}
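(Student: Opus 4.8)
The plan is to bound the coverage error $\Err(T) = |\frac{1}{T}\sum_{t=1}^T \err_t - \alpha|$ by exploiting the identity $\err_t - \alpha = -\grad\ellt(\hat{s}_t)$ and the telescoping structure of the \methodBasic{} update. Writing $\gamma_t \defeq \eta / \sqrt{\sum_{i=1}^t \ltwo{\grad\elli(\hat{s}_i)}^2}$ for the effective step size at time $t$, the update~\eqref{eq:ogd_update} reads $\hat{s}_{t+1} = \hat{s}_t + \gamma_t(\err_t - \alpha)$, so $\err_t - \alpha = (\hat{s}_{t+1} - \hat{s}_t)/\gamma_t$. If $\gamma_t$ were constant over a block of times, summing would telescope and give something controlled by $(\hat{s}_{\mathrm{end}} - \hat{s}_{\mathrm{start}})/\gamma$, which is $O(D/\gamma)$ since all iterates stay in a bounded range. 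The twist is that $\gamma_t$ is not constant, but it varies slowly; the proof idea stated in the excerpt ("a grouping argument and the fact that the effective learning rate changes slowly") is exactly this.

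Concretely, I would proceed in the following steps. First, establish that the iterates $\hat{s}_t$ remain in a bounded interval, roughly $[-C D, C D]$ for an absolute constant $C$: since each gradient has $\ltwo{\grad\ellt(\hat{s}_t)} = |\alpha - \err_t| \le 1$, a single step moves $\hat{s}_t$ by at most $\gamma_t \le \eta = \Theta(D)$, and one can show more carefully (as in Orabona–Pál) that the range stays $O(D)$; this uses $\hat{s}_1\in[0,D]$ and $\eta = \Theta(D)$. Second, lower bound the cumulative gradient-norm sum $G_t^2 \defeq \sum_{i=1}^t \ltwo{\grad\elli(\hat{s}_i)}^2$. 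Here a case analysis on $\alpha$ versus $\err_i$ shows that in each step the gradient norm is either $\alpha$ or $1-\alpha$, so $G_t^2 \ge (\min\{\alpha,1-\alpha\})^2 \cdot (\text{number of nonzero terms})$; to get a clean $\Omega(\alpha^2 t)$ lower bound one argues that $\err_i$ cannot be $1$ (or $0$) for too long a stretch without the other value appearing — if $\err_i$ were always, say, $0$, then $\hat{s}_i$ would keep decreasing by $\gamma_i\alpha$ each step, eventually leaving the bounded range, a contradiction — yielding $G_t \gtrsim \alpha \sqrt{t}$ and hence $\gamma_t \lesssim D/(\alpha\sqrt{t})$, while also $\gamma_t \gtrsim D/\sqrt{t}$ trivially. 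Third, partition $[T]$ into consecutive blocks $B_1, B_2, \ldots$ chosen so that within each block $\gamma_t$ changes by at most a constant factor (e.g.\ geometric-type blocking on the value of $G_t$, or blocks of geometrically growing length); within block $B_j$ of length $|B_j|$, telescoping gives $|\sum_{t\in B_j}(\err_t - \alpha)| \le \max_{t\in B_j}(1/\gamma_t)\cdot |\hat{s}_{\max B_j + 1} - \hat{s}_{\min B_j}| + (\text{error from } \gamma_t \text{ drift}) \lesssim \alpha^{-1}\sqrt{|B_j|}\cdot O(1)$, after using the two-sided bound on $\gamma_t$; here the bounded range of $\hat{s}$ is what kills the telescoped term. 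Fourth, sum over the $O(\log T)$ blocks: $\sum_j \alpha^{-1}\sqrt{|B_j|}$ with $\sum_j |B_j| = T$ and geometrically growing block lengths is $\lesssim \alpha^{-1}\sqrt{T}$, but the drift-in-$\gamma_t$ error terms, when carefully accounted, contribute the extra $\alpha^{-1}$ and $\log T$ factors, giving $\sum_{t=1}^T (\err_t - \alpha) \le O(\alpha^{-2}\sqrt{T}\log T)$; dividing by $T$ yields $\Err(T) \le O(\alpha^{-2} T^{-1/4}\log T)$. The statement claims the bound for all $T$, but since the blocking and telescoping only use prefixes, the same argument applied to $[t]$ works verbatim.

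The main obstacle I anticipate is the third step — making the "grouping / slowly-changing learning rate" argument rigorous with the right dependence on $\alpha$. The naive telescoping within a block leaves a residual term of the form $\sum_{t\in B_j} (\hat{s}_{t+1}-\hat{s}_t)(1/\gamma_t - 1/\gamma_{t^*})$ for a reference time $t^*\in B_j$, and bounding this requires both that $\hat{s}_{t+1}-\hat{s}_t = \gamma_t(\err_t - \alpha)$ is small (size $\gamma_t$) and that $1/\gamma_t - 1/\gamma_{t^*}$ is small relative to $1/\gamma_{t^*}$ across the block; controlling the accumulation of these residuals over all blocks is where the $T^{-1/4}$ rate (rather than $T^{-1/2}$) and the $\alpha^{-2}$ dependence emerge, and it is the only genuinely delicate piece — the rest is bookkeeping with the explicit gradient form~\eqref{eq:grad_loss} and the boundedness of the iterates. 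A secondary subtlety is the lower bound $G_t \gtrsim \alpha\sqrt{t}$ on the cumulative gradient norms: one must rule out long runs of $\err_i \equiv 0$ or $\err_i\equiv 1$, which follows because such a run would drive $\hat{s}_i$ monotonically out of its $O(D)$ range, but quantifying "how long before contradiction" in terms of $\gamma_i$ (which itself depends on $G_i$) requires a short self-referential argument.
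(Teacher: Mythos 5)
Your overall skeleton (write $\err_t-\alpha=(\hat{s}_{t+1}-\hat{s}_t)/\gamma_t$, use boundedness of the iterates to control telescoped sums, and handle the slowly varying step size by grouping time into blocks with a frozen reference step size plus a drift correction) is the same as the paper's, which proves exactly such a lemma (Lemma~\ref{lem:sqrt_sum}) with $a_t=\err_t-\alpha$ and $M=(D+2\eta)/\eta$. However, there is a genuine gap at precisely the step you flag as delicate: your concrete blocking choice does not work, and your accounting of the drift is both unsubstantiated and internally inconsistent. With blocks chosen so that $\gamma_t$ changes by a constant factor (equivalently, geometrically growing lengths, $O(\log T)$ blocks), the per-step drift term $|\hat{s}_{t+1}-\hat{s}_t|\,|1/\gamma_t-1/\gamma_{t^*}|\le 1-G_{t^*}/G_t\le (t-t^*)/(2\alpha^2 t^*)$ is of constant order $\Theta(\alpha^{-2})$ near the end of a doubling block, so the drift within one late block is $\Theta(|B_j|/\alpha^2)$ and the total drift is $\Theta(T/\alpha^2)$ --- linear in $T$, which destroys the bound. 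Moreover, your claimed total $\sum_{t\le T}(\err_t-\alpha)\le O(\alpha^{-2}\sqrt{T}\log T)$ would give $\Err(T)\le O(\alpha^{-2}T^{-1/2}\log T)$ after dividing by $T$, not the stated $T^{-1/4}$; the mismatch signals that the $T^{-1/4}$ rate was never actually derived. The paper instead uses roughly $T^{1/4}$ \emph{equal-length} blocks of length $L=\lceil T^{\beta}\rceil$: freezing the denominator at the block start, the telescoped main terms contribute $\sum_k M\sqrt{(k-1)L}\lesssim MT^{3/2-\beta}$ while the drift terms contribute $\sum_k L/(2\alpha^2(k-1))\lesssim \alpha^{-2}T^{\beta}\log T$, and balancing at $\beta=3/4$ is exactly where $T^{3/4}$ (hence $T^{-1/4}$ after dividing by $T$) comes from. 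Any correct completion of your argument has to make this block-length optimization explicit; ``constant-factor change of $\gamma_t$ per block'' is not the right criterion.

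A secondary remark: the lower bound on the cumulative gradient norm needs no argument about runs of $\err_i\equiv 0$ or $\err_i\equiv 1$. By~\eqref{eq:grad_loss} the gradient equals $\alpha$ or $\alpha-1$ and is never zero, so $|\grad\ellt(\hat{s}_t)|\in[\alpha,1]$ (for $\alpha\le 1/2$) and $\sum_{s\le t}(\err_s-\alpha)^2\ge \alpha^2 t$ immediately; the ``self-referential'' argument you anticipate is unnecessary. Your first step (bounded iterates, as in Lemma~\ref{lem:bounded_iterates_ogd}) and the observation that the bound extends to all prefixes are fine.
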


We now provide a distribution-free coverage bound for \method{}, building on a similar grouping argument as in Theorem~\ref{thm:ogd_coverage}.
\begin{theorem}[Coverage bound for \method{}; Informal version of Theorem~\ref{thm:cbce_coverage_nonexp_full}]
\label{thm:cbce_coverage_nonexp}
    For any $T \ge 1$, a randomized variant of Algorithm~\ref{alg:cbce} where Line~\ref{line:cbce_aggregation} is replaced by sampling an expert $i\sim p_{t}$ and predicting $\hat{s}_t\defeq \hat{s}_{t,i}$ achieves
    \begin{align}
    \label{eq:coverage_cbce} \textstyle
    \Err(T) \le \cO\big( \inf_{\beta} ( T^{1/2-\beta} + T^{\beta-1} S_\beta(T))\big).
    \end{align}
\end{theorem}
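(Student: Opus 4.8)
The plan is to start from the identity $\Err(T)=\frac1T\abs{\sum_{t=1}^T\grad\ellt(\hat s_t)}$ (valid since $\grad\ellt(\hat s_t)=\alpha-\err_t$, as noted after~\eqref{eq:aci_regret}) and bound the right-hand side by a grouping argument in the spirit of the proof of Theorem~\ref{thm:ogd_coverage}. The first step is to use the randomization to linearize over the expert mixture: since $\hat s_t=\hat s_{i_t,t}$ with $i_t\sim p_t$ drawn with fresh randomness after $p_t$ is fixed, the differences $\grad\ellt(\hat s_{i_t,t})-\sum_i p_{i,t}\grad\ellt(\hat s_{i,t})$ form a bounded ($\abs{\cdot}\le 1$) martingale-difference sequence, so an Azuma argument reduces $\sum_t\grad\ellt(\hat s_t)$ to $\sum_t\sum_i p_{i,t}\grad\ellt(\hat s_{i,t})$ up to an $\tO(\sqrt T)$ fluctuation. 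This is exactly why the randomized variant is analyzed: the indicator-valued $\grad\ellt$ does not commute with the deterministic average in Line~\ref{line:cbce_aggregation}, whereas sampling restores the ``weighted average of per-expert gradients'' structure.

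Next I would partition $[T]$ into $\approx T^{1-\beta}$ consecutive blocks of length $\approx T^\beta$. Because the lifetimes in~\eqref{eq:lifetime} are dyadic, each block $B_j$ has a \emph{governing expert} $\mathcal{A}^{(j)}$ --- one started at a power-of-two index near the left endpoint of $B_j$, hence with lifetime $\gtrsim T^\beta$ --- that is active throughout $B_j$ and runs \methodBasic{} for $\approx T^\beta$ steps. On $B_j$ I would decompose $\sum_i p_{i,t}\grad\ellt(\hat s_{i,t})$ around $\grad\ellt(\hat s^{(j)}_t)$, leaving (a) the governing expert's \emph{own} coverage error over $B_j$, controlled exactly as in Theorem~\ref{thm:ogd_coverage} --- a telescoping of $\hat s^{(j)}_{t+1}-\hat s^{(j)}_t$ combined with the slow variation of \methodBasic{}'s effective learning rate within the block and the bound $S_t\in[0,D]$ --- which is benign when the radii on $B_j$ are stable; and (b) a deviation term measuring disagreement between the aggregate prediction and $\mathcal{A}^{(j)}$, controlled through the strongly adaptive regret bound of Proposition~\ref{prop:saregret} specialized to $B_j$, i.e.\ $\SAReg(T,\abs{B_j})\le\tO(D\sqrt{T^\beta})$. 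Summing over the $\approx T^{1-\beta}$ blocks: the deviation terms accumulate to $\tO(T^{1-\beta}\cdot D\sqrt{T^\beta})$, which after dividing by $T$ is dominated by the claimed $T^{1/2-\beta}$ for $\beta\le 1$; the governing-expert coverage errors accumulate into the quantity that the formal statement denotes $S_\beta(T)$ (recording how far the per-block radii wander from what a \methodBasic{} expert restarted on each block can track), contributing $T^{\beta-1}S_\beta(T)$ after normalization; and the $\tO(\sqrt T)$ randomization fluctuation is lower order. Taking the infimum over the block-length exponent $\beta$ yields~\eqref{eq:coverage_cbce}.

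The hardest part will be step (b): converting a \emph{loss}-regret bound on a block into a bound on the \emph{gradient sum} $\sum_{t\in B_j}\grad\ellt(\hat s_t)$, since regret and coverage do not imply one another in general (cf.\ the remark after~\eqref{eq:aci_regret}). The resolution I anticipate exploits the two-slope form $\grad\ellt(s)=\alpha-\indics{s<S_t}$ together with $S_t\in[0,D]$: comparing the block's run against a comparator shifted by a small $\delta>0$ from the block's empirical $(1-\alpha)$-quantile converts the regret inequality into a one-sided bound on $\sum_{t\in B_j}\grad\ellt(\hat s_t)$, up to a boundary term counting $t\in B_j$ with $S_t$ within $\delta$ of that quantile, and a symmetric $-\delta$ argument handles the other side. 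Optimizing $\delta$ against the $\tO(D\sqrt{T^\beta})$ per-block regret is what produces the $\sqrt{T^\beta}$ scaling of the per-block coverage error, hence the exponent $1/2-\beta$ in the final bound. Making the dyadic covering, the per-block learning-rate-stability estimates, and this regret-to-coverage conversion simultaneously valid with a single choice of $\delta$ and block structure across all $T^{1-\beta}$ blocks is the step I expect to demand the most care.
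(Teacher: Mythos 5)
Your overall scaffolding (blocks of length $\approx T^\beta$, a telescoping argument within each block, a residual term that becomes $S_\beta(T)$) matches the shape of the paper's proof, but the core of your step (b) is a genuine gap. You propose to control the disagreement between the aggregated prediction and a per-block expert via the strongly adaptive \emph{regret} bound, and then to convert a loss-regret bound on a block into a bound on the gradient sum $\sum_{t\in B_j}\grad\ellt(\hat{s}_t)$ by comparing against a $\pm\delta$-shifted quantile. Distribution-free, this conversion fails: the ``boundary term'' counting the $t\in B_j$ with $S_t$ within $\delta$ of the block quantile can be as large as $|B_j|$ (e.g.\ all the $S_t$ in the block cluster at a single value), and nothing in the assumptions lets you trade it off against $\delta$. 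This is exactly the obstruction the paper flags when it says regret and coverage do not imply one another in a black-box fashion; the paper only performs a regret-to-coverage conversion under density lower-bound assumptions (Assumption~\ref{ass:pdf_lb}, via the self-calibration inequality of Theorem~\ref{thm:cbce_coverage}), and Theorem~\ref{thm:cbce_coverage_nonexp_full} deliberately avoids regret altogether. A secondary problem is that the quantity your argument would produce (per-block dynamic-tracking error of a restarted expert) is not the $S_\beta(T)$ of Theorem~\ref{thm:cbce_coverage_nonexp_full}, which is the specific weight-smoothness quantity $\sum_j \max_{t\in G_j}\sum_i\abs{p_{t,i}-p_{t_{j-1},i}\,G^i_{i:t_{j-1}}/G^i_{i:t}}$; so even a repaired version of your argument would not prove the stated inequality, only an analogue with a different residual.

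The paper's actual mechanism is more elementary and bypasses both issues. Because each expert is \methodBasic{}, its update gives the exact identity $\err_{t,i}-\alpha=\frac{1}{\eta}\,G^i_{i:t}(\hat{s}_{t+1,i}-\hat{s}_{t,i})$, so the expected miscoverage of the randomized variant satisfies $\wt{\err}_t-\alpha=\frac{1}{\eta}\sum_i p_{t,i}G^i_{i:t}(\hat{s}_{t+1,i}-\hat{s}_{t,i})$. Within each block one freezes the coefficients $p_{t_{j-1},i}G^i_{i:t_{j-1}}$ at the block's start: the frozen part telescopes in $\hat{s}_{\cdot,i}$ and is bounded by $\cO(\sqrt{T})$ per block using bounded iterates (Lemma~\ref{lem:bounded_iterates_ogd}) and $G^i_{i:t}\le\sqrt{T}$, giving the $T^{3/2-\beta}$ (hence $T^{1/2-\beta}$ after normalizing) term; the unfrozen remainder is bounded, via $\abs{G^i_{i:t}(\hat{s}_{t+1,i}-\hat{s}_{t,i})}\le\eta$, by $|G_j|$ times the weight-smoothness sum, which is precisely how $S_\beta(T)$ enters. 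No governing expert, no regret bound, and no Azuma step are needed (the theorem bounds the expected miscoverage $\wt{\err}_t$ directly); also note there is no single expert active throughout an arbitrary length-$T^\beta$ block, only a logarithmic dyadic cover, so that part of your construction would need repair as well. If you want to salvage your route, the regret-based block analysis is the piece to discard and replace with the per-expert update identity plus frozen-coefficient telescoping.
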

Theorem~\ref{thm:cbce_coverage_nonexp} considers a randomized variant of \method{}, and its coverage bound depends on a quantity $S_\beta(T)$ (full definition in Theorem~\ref{thm:cbce_coverage_nonexp_full}) that measures the smoothness of the expert weights and the cumulative gradient norms for each individual expert. Both are expected for technical reasons and also appear in coverage bounds for other expert-style meta-algorithms such as FACI~\citep{gibbs2022faci}. For instance, if there exists $\beta\in(1/2,1)$ so that $S_\beta(T)\le \tO(T^\gamma)$ for some $\gamma<1-\beta$, then~\eqref{eq:coverage_cbce} implies a coverage bound $\Err(T)\le \tO(T^{-\min\{1/2-\beta,\beta-1+\gamma\}})=o_T(1)$.

We remark that Theorem~\ref{thm:cbce_coverage_nonexp} also holds more generically for other choices of the expert weights $\set{p_t}_{t\in[T]}$ (Line~\ref{line:cbce_unnormalized}-\ref{line:cbce_normalized}) and active intervals, not just those specified in Algorithm~\ref{alg:cbce}. In particular, \methodBasic{} is the special case where there is only a single active expert over $[T]$. In this case, we can recover the $\tO(\alpha^{-2}T^{-1/4})$ bound of Theorem~\ref{thm:ogd_coverage} (see Appendix~\ref{appendix:subsume_discussion} for a detailed discussion).

\paragraph{Additional coverage guarantee under distributional assumptions} Under some mild regularity assumptions on the distributions of $S_1, \ldots, S_T$, we show in Theorem~\ref{thm:cbce_coverage} that \method{} achieves approximately valid coverage on {\em every} sub-interval of time. Its coverage error on any interval $I=[\tau, \tau + k - 1] \subseteq [1, T]$ is $\tO(k^{-1/(2q)}+({\rm Var}_I/k)^{1/q})$ for a certain $q\ge 2$ that quantifies the regularity of the distribution, and ${\rm Var}_I$ is a certain notion of variation between the true quantiles of $S_t$ over $t\in I$ (cf.~\eqref{eqn:interval_variation}). In particular, we obtain an approximately valid coverage on any interval $I$ for which ${\rm Var}_I=o(k)$.

\setlength{\tabcolsep}{4pt}
\AtBeginEnvironment{tabular}{\footnotesize}

\section{Experiments}
\label{sec:experiments}

We test \methodBasic{} (Algorithm~\ref{alg:ogd}) and \method{} (Algorithm~\ref{alg:cbce}) empirically on two representative real-world online uncertainty quantification tasks: time series forecasting (Section~\ref{sec:exp_time_series}) and image classification under distribution shift (Section~\ref{sec:exp_image}). Choices of the prediction sets $\{\hat{C}_t(x,s)\}_{x,s}$ will be described within each experiment. 
In both experiments, we compare against the following methods:
\begin{enumerate}[topsep=-2pt,itemsep=0pt,parsep=2pt]
\item \scp{}: standard Split Conformal Prediction \citep{vovk2005alrw} adapted to the online setting, which simply predicts the $(1-\alpha)$-quantile of the past radii. \scp{} does not admit a valid coverage guarantee in our settings as the data may not be exchangeable in general;
\item \nexcp{}: Non-Exchangeable SCP \citep{barber2022nexcp}, a variant of SCP that handles non-exchangeable data by reweighting. We follow their recommendations and use an exponential weighting scheme that upweights more recent observations;
\item \faci{}~\citep{gibbs2022faci} with their specific ``quantile parametrization''~\eqref{eq:aci_ct}, and score function $\wt{S}_t$ corresponding to our choice of $\hat{C}_t$;
\item \facis{}: Generalized version of FACI applied to predicting the radius $\hat{s}_t$'s on our choice of $\hat{C}_t$ directly.
\end{enumerate}
Additional details about all methods can be found in Appendix~\ref{appendix:exp_details}. Throughout this section we choose the target coverage level to be the standard $1-\alpha=90\%$.

\subsection{Time Series Forecasting}
\label{sec:exp_time_series}

\paragraph{Setup}
We consider multi-horizon time series forecasting problems with real-valued observations $\{y_t\}_{t\ge 1}\in\R$, where the base predictor $\hat{f}$ uses the history $X_t\defeq y_{1:t}$ to predict $H$ steps into the future, i.e. $\hat{f}(X_t)=\{\hat{f}^{(h)}(X_t)\}_{h\in[H]} =\{\hat{y}_{t+h}^{(h)}\}_{h\in[H]}$, where $\hat{y}_{t+h}^{(h)}$ is a prediction for $y_{t+h}$. Using $\hat{f}(X_t)$, we produce fixed-width prediction intervals
\begin{align}
\label{eq:time_series_ct}
\hat{C}_{t}^{(h)}(X_t, \hat{s}_{t}^{(h)}) \defeq \big[ \hat{y}^{(h)}_{t+h} - \hat{s}_{t}^{(h)}, \hat{y}^{(h)}_{t+h} + \hat{s}_{t}^{(h)} \big],
\end{align}
where $\hat{s}_{t}^{(h)}$ is predicted by an independent copy of the online conformal prediction algorithm for each $h\in[H]$ (so that there are $H$ such algorithms in parallel). We form our online setting using a standard rolling window evaluation loop, wherein each \emph{batch} consists of predicting all $H$ intervals $\{\hat{C}_t^{(h)}\}_{h\in[H]}$, observing all $H$ true values $\{y_{t+h}\}_{h\in[H]}$, and moving to the next batch by setting $t\to t+H$. For each $h \in [H]$, we only evaluate $y_{t+h}$ against one interval $\hat{C}^{(h)}_t(X_t, \hat{s}^{(h)}_t)$. After the evaluation is done, we use all pairs $\{(y_{t+k}, \hat{y}_{t+k}^{(h)})\}_{k \in [H]}$ to update $\hat{s}_{t}^{(h)} \to \hat{s}^{(h)}_{t+H}$.

\paragraph{Base predictors}
We consider three diverse types of base predictors (models), and we use their implementations in Merlion v2.0.0~\citep{bhatnagar2021merlion}:%

\begin{enumerate}[topsep=-2pt,itemsep=0pt,parsep=2pt]
\item LGBM: A model which uses gradient boosted trees to predict $\hat{y}_{t+h}^{(h)} = \hat{f}^{(h)}(y_{t-L+1}, \ldots, y_t)$. This approach attains strong performance on many time series benchmarks~\citep{elsayed2021gbtforecast,bhatnagar2021merlion}.
\item $\mathrm{ARIMA}(10, d^\star, 10)$: The classical AutoRegressive Integrated Moving Average stochastic process model for a time series, where the difference order $d^\star$ is chosen by KPSS stationarity test \citep{kpss1992}.
\item Prophet \citep{taylor2017prophet}: A popular Bayesian model which directly predicts the value $y$ as a function of time, i.e.\ $\hat{y}_t = \hat{f}(t)$.
\end{enumerate}

\paragraph{Datasets}
We evaluate on four datasets totaling 5111 time series: the hourly (414 time series), daily (4227 time series), and weekly (359 time series) subsets of the M4 Competition, a dataset of time series from many domains including industries, demographics, environment, finance, and transportation \citep{makridakis2018m4}; and NN5, a dataset of 111 time series of daily banking data \citep{bentaieb2012nn5}. We normalize each time series to lie in $[0, 1]$. 

We use horizons $H$ of 24, 30, and 26 for hourly, daily, and weekly data, respectively. Each time series of length $L$ is split into a training set of length $L-120$ with 80\% for training the base predictor and 20\% for initializing the UQ methods, and a test set of length $120$ to test the UQ methods.

\paragraph{Metrics}
For each experiment, we average the following statistics across all time series: global coverage, median width, worst-case local coverage error
\begin{align}
    \label{eq:sa_err}
    \mathrm{LCE}_k \defeq \max_{[\tau, \tau + k - 1] \subseteq [1, T]} \textstyle \abs{\alpha - \frac{1}{k} \sum_{t=\tau}^{\tau+k-1} \err_t},
\end{align} and strongly adaptive regret $\SAReg(T, k)$~\eqref{eq:sa_regret}, which we abbreviate as $\SAReg_k$. In all cases, we use an interval length of $k = 20$. We also report the average mean absolute error (MAE) of each base predictor.

\begin{table*}[t]
\centering
\begin{tabular}{l|cccc|cccc|cccc}
\toprule
 & \multicolumn{4}{c|}{LGBM (MAE = 0.06)} & \multicolumn{4}{c|}{ARIMA (MAE = 0.18)} & \multicolumn{4}{c}{Prophet (MAE = 0.12)} \\
Method & Coverage & Width & $\mathrm{LCE}_k$ & $\SAReg_k$ & Coverage & Width & $\mathrm{LCE}_k$ & $\SAReg_k$ & Coverage & Width & $\mathrm{LCE}_k$ & $\SAReg_k$ \\
\midrule
SCP & \color{red} 0.844 & 0.127 & 0.252 & 0.017 & \color{ForestGreen} 0.871 & 0.245 & 0.237 & 0.039 & \color{red} 0.783 & 0.178 & 0.355 & 0.019 \\
NExCP & \color{ForestGreen} 0.875 & 0.134 & 0.197 & 0.013 & \color{ForestGreen} 0.871 & 0.245 & 0.227 & 0.040 & \color{ForestGreen} 0.856 & 0.187 & 0.231 & 0.010 \\
FACI & \color{ForestGreen} 0.866 & \bfseries 0.113 & 0.180 & \bfseries 0.009 & \color{ForestGreen} 0.866 & \underline{0.232} & 0.214 & \underline{0.034} & \color{ForestGreen} 0.867 & \underline{0.175} & 0.184 & \underline{0.006} \\
\methodBasic{} & \color{ForestGreen} 0.889 & 0.138 & \underline{0.154} & 0.011 & \color{ForestGreen} 0.877 & 0.250 & \underline{0.195} & 0.037 & \color{ForestGreen} 0.888 & 0.186 & \bfseries 0.138 & 0.007 \\
FACI-S & \color{ForestGreen} 0.883 & 0.128 & 0.163 & 0.010 & \color{ForestGreen} 0.872 & 0.238 & 0.201 & 0.035 & \color{ForestGreen} 0.885 & 0.180 & 0.144 & \underline{0.006} \\
\method{} & \color{ForestGreen} 0.882 & \underline{0.121} & \bfseries 0.143 & \bfseries 0.009 & \color{ForestGreen} 0.864 & \bfseries 0.221 & \bfseries 0.190 & \bfseries 0.033 & \color{ForestGreen} 0.872 & \bfseries 0.173 & \underline{0.143} & \bfseries 0.005 \\
\bottomrule
\end{tabular}
\caption{Results on M4 Hourly (414 time series) with target coverage $1 - \alpha = 0.9$ and interval size $k = 20$. Best results are {\bfseries bold}, while second best are \underline{underlined}, as long as the method's global coverage is in $(0.85, 0.95)$ (green). For all base predictors, \method{} achieves the best or second-best width, local coverage error, and strongly adaptive regret.
}
\label{tab:m4_hourly}
\end{table*}

\begin{table*}
    \centering
\begin{tabular}{l|cccc|cccc|cccc}
\toprule
 & \multicolumn{4}{c|}{LGBM (MAE = 0.14)} & \multicolumn{4}{c|}{ARIMA (MAE = 0.06)} & \multicolumn{4}{c}{Prophet (MAE = 0.32)} \\
Method & Coverage & Width & $\mathrm{LCE}_k$ & $\SAReg_k$ & Coverage & Width & $\mathrm{LCE}_k$ & $\SAReg_k$ & Coverage & Width & $\mathrm{LCE}_k$ & $\SAReg_k$ \\
\midrule
SCP & \color{red} 0.769 & 0.184 & 0.466 & 0.031 & \color{ForestGreen} 0.896 & 0.122 & 0.290 & 0.018 & \color{red} 0.599 & 0.349 & 0.614 & 0.051 \\
NExCP & \color{red} 0.818 & 0.183 & 0.420 & 0.015 & \color{ForestGreen} 0.891 & 0.116 & 0.296 & 0.012 & \color{red} 0.715 & 0.356 & 0.559 & 0.019 \\
FACI & \color{red} 0.846 & 0.169 & 0.308 & 0.008 & \color{ForestGreen} 0.886 & \underline{0.101} & 0.259 & \bfseries 0.008 & \color{red} 0.767 & 0.344 & 0.397 & 0.014 \\
\methodBasic{} & \color{ForestGreen} 0.873 & 0.173 & 0.246 & 0.011 & \color{ForestGreen} 0.892 & 0.106 & 0.245 & 0.011 & \color{ForestGreen} 0.862 & 0.354 & 0.220 & 0.008 \\
FACI-S & \color{ForestGreen} 0.875 & \underline{0.169} & \underline{0.240} & \underline{0.010} & \color{ForestGreen} 0.891 & 0.103 & \underline{0.243} & 0.010 & \color{ForestGreen} 0.866 & \underline{0.352} & \underline{0.210} & \underline{0.007} \\
\method{} & \color{ForestGreen} 0.869 & \bfseries 0.162 & \bfseries 0.213 & \bfseries 0.007 & \color{ForestGreen} 0.875 & \bfseries 0.093 & \bfseries 0.238 & \bfseries 0.008 & \color{ForestGreen} 0.867 & \bfseries 0.349 & \bfseries 0.172 & \bfseries 0.005 \\
\bottomrule
\end{tabular}
\caption{Results on M4 Daily (4227 time series) with target coverage $1 - \alpha = 0.9$ and interval size $k = 20$. Best results are {\bfseries bold}, while second best are \underline{underlined}, as long as the method's global coverage is in $(0.85, 0.95)$ (green). \method{} achieves the best width, local coverage error, and strongly adaptive regret for all base predictors. The only methods which achieve global coverage in $(0.85, 0.95)$ for LGBM and Prophet are the ones that predict $\hat{s}_{t+1}$ directly, not as a quantile of $S_1, \ldots, S_t$.}
\label{tab:m4_daily}
\end{table*}

\paragraph{Results}
We report results on M4 Hourly and M4 Daily in Tables \ref{tab:m4_hourly}, \ref{tab:m4_daily}, and on M4 Weekly and NN5 Daily in Tables \ref{tab:m4_weekly}, and \ref{tab:nn5_daily} (Appendix~\ref{appendix:more_ts}).

\method{} consistently achieves global coverage in $(0.85, 0.95)$, and it obtains the best or second-best interval width, local coverage error, and strongly adaptive regret for all base predictors on all 3 M4 datasets. FACI-S generally achieves better $\mathrm{LCE}_k$ and $\SAReg_k$ than FACI, showing the benefits of predicting $\hat{s}_{t+1}$ directly, rather than as a quantile of $S_1, \ldots, S_t$. The relative performance of FACI-S and \methodBasic{} varies, though FACI-S is usually a bit better. However, \method{} consistently achieves better $\mathrm{LCE}_k$ and $\SAReg_k$ than both FACI-S and \methodBasic{}.

There are multiple instances where all of \scp{}/\nexcp{}/\faci{} fail to attain global coverage in $(0.85, 0.95)$ (Tables~\ref{tab:m4_daily} and \ref{tab:m4_weekly}). The base predictor's MAE is at least 0.14 in all these cases, suggesting an advantage of predicting $\hat{s}_{t+1}$ directly as in \methodBasic{}/\method{} when the underlying base predictor is inaccurate.

\paragraph{Additional experiments with ensemble models} 
In Appendix~\ref{appendix:ensemble}, we use EnbPI \citep{xu2021enbpi} to train a bootstrapped ensemble, and we compare EnbPI's results with those obtained by applying NExCP, FACI, \methodBasic{}, and \method{} to the residuals produced by that ensemble. The results largely mirror those in the main paper. %

\begin{figure*}[t]
    \centering
    \includegraphics[width=0.9\textwidth,trim={0 0.7cm 0 0}]{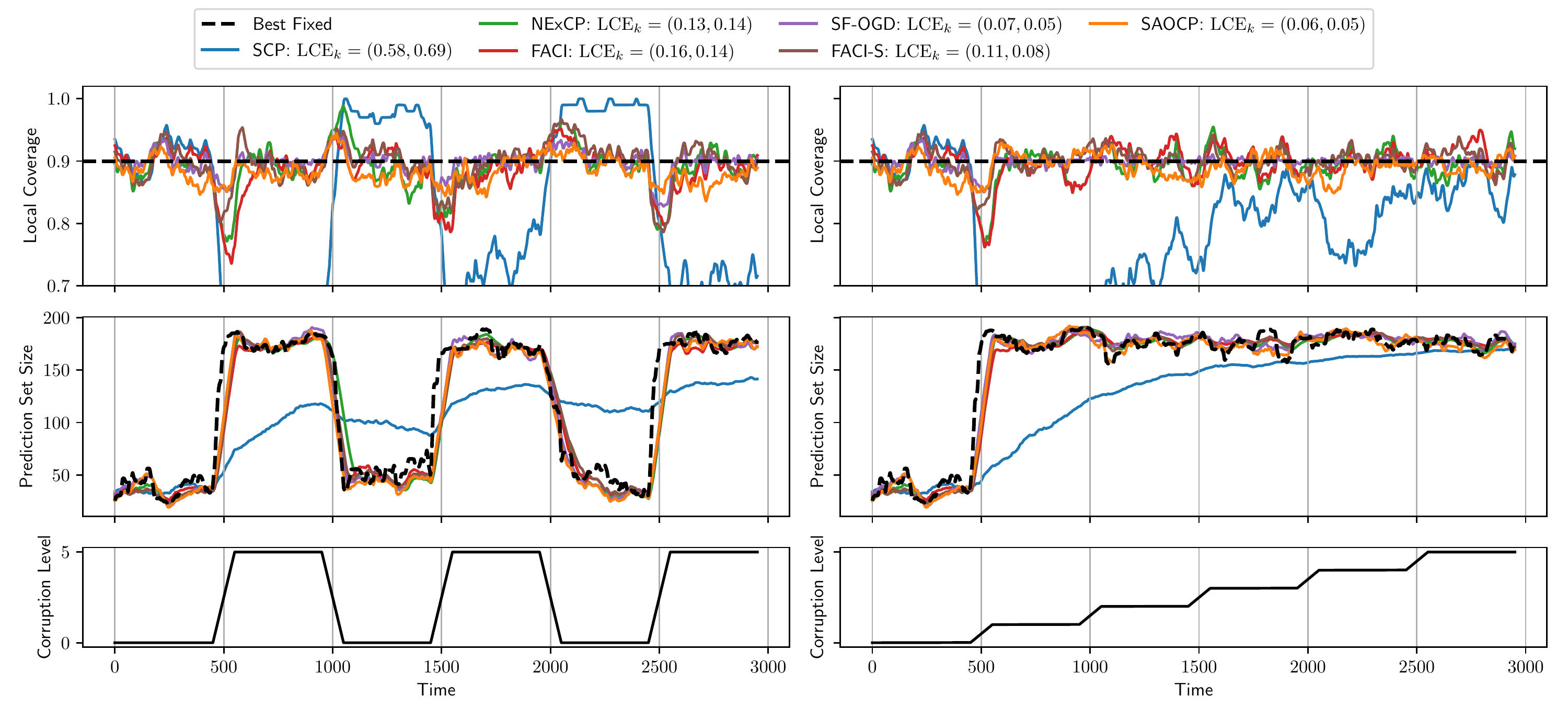}
    \caption{Local coverage (top row) and prediction set size (second row) achieved by various UQ methods when the distribution shifts between TinyImageNet and TinyImageNet-C every 500 steps. We plot moving averages with a window size of 100. Left: sudden shifts between corruption level 0 and 5. Right: gradual shift from level 0 to 5. \method{} and \methodBasic{}'s local coverage remain the closest to the target of 0.9, especially at the change points. While their prediction sets have similar size, $\mathrm{LCE}_k$ is lower for \method{} than \methodBasic{}.}
    \label{fig:tinyimagenet}
\end{figure*}

\subsection{Image Classification Under Distribution Shift}
\label{sec:exp_image}
\paragraph{Datasets and Setup} We evaluate the ability of conformal prediction methods to maintain coverage when the underlying distribution shifts in a systematic manner. We use a ResNet-50 classifier \citep{he2015resnet} pre-trained on ImageNet and implemented in PyTorch \citep{pytorch}. Here, $x \in \cX$ is an image, and $y \in \cY  = [m]$ is its class. To construct structured distribution shifts away from the training distribution, we use TinyImageNet-C and ImageNet-C \citep{hendrycks2019benchmarking}, which are corrupted versions of the TinyImageNet ($m = 200$ classes) \citep{Le2015TinyIV} and ImageNet ($m = 1000$ classes) \citep{deng2009imagenet} test sets designed to evaluate model robustness. These corrupted datasets apply 15 visual corruptions at 5 different severity levels to each image in the original test set. 

We consider two regimes: {\bf sudden shifts} where the corruption level alternates between 0 (the base test set) and 5, and {\bf gradual shifts} where the corruption level increases in the order of $\set{0,1,\dots,5}$. We randomly sample 500 data points for each corruption level before changing to the next level. 

\paragraph{Prediction Sets} We follow \citet{angelopoulos2021uncertainty} to construct our prediction sets. Let $\hat{f}: \R^d \to \Delta^m$ be a classifier that outputs a probability distribution on the $m$-simplex. At each $t$, we sample $U_t \sim \mathrm{Unif}[0, 1]$ and let
\begin{align}
    \textstyle \notag
    S_t(x, y) &= \textstyle \lambda \sqrt{[k_y - k_{reg}]_+} + U_t \hat{f}_y(x) + \sum_{i=1}^{k_y - 1} \hat{f}_{\pi(i)}(x) \\
    \label{eq:raps}
    \hat{C}_t(X_t) &= \{ y: S_t(X_t, y) \le \hat{s}_t \},
\end{align}
where $\pi$ is the permutation that ranks $\hat{f}(x)$ in decreasing order, $\pi(k_y) = y$, and $\lambda$ and $k_{reg}$ are regularization parameters designed to reduce the size of the prediction set. For TinyImageNet, we use $\lambda = 0.01$ and $k_{reg} = 20$. For ImageNet, we use $\lambda = 0.01$ and $k_{reg} = 10$.

\paragraph{Metrics} When evaluating the UQ methods, we plot the local coverage and prediction set size (PSS) of each method using an interval length of $k = 100$,
\begin{align*}
    \mathrm{LocalCov}(t) &= \textstyle \frac{1}{100} \sum_{i=t}^{t+99} \one[Y_i \in \hat{C}_i(X_i)] \\
    \mathrm{LocalPSS}(t) &= \textstyle \frac{1}{100}\sum_{i=t}^{t+99} \abs{\hat{C}_i(X_i)}.
\end{align*}
We compare the local coverage to a target of $1 - \alpha$, while we compare the local PSS to the $1 - \alpha$ empirical quantile of the oracle set sizes $\mathrm{PSS}_t^\star = \abs{\{y: S_t(X_t, y) \le S_t(X_t, Y_t)\}}$. These targets are the ``best fixed'' values in each window. We also report the worst-case local coverage error $\mathrm{LCE}_{100}$~\eqref{eq:sa_err}.

\paragraph{Results} We evaluate the UQ methods on TinyImageNet and TinyImageNet-C in Figure~\ref{fig:tinyimagenet}, and on ImageNet and ImageNet-C in Figure~\ref{fig:imagenet} (Appendix~\ref{appendix:more_cv}). In both sudden and gradual distribution shift, the local coverage of \method{} and \methodBasic{} remains the closest to the target of 0.9. The difference is more notable when the distribution shifts suddenly. When the distribution shifts more gradually, NExCP, FACI, and FACI-S have worse coverage than \method{} and \methodBasic{} at the first change point, which is where the largest change in the best set size occurs. 

All methods besides SCP predict sets of similar sizes, though FACI's, FACI-S's, and NExCP's prediction set sizes adapt more slowly to changes in the best fixed size (e.g. $t \in [500, 700]$ for gradual shift in Figure~\ref{fig:imagenet}). On TinyImageNet, \method{} obtains slightly better local coverage than \methodBasic{}, and they both have similar prediction set sizes (Figure~\ref{fig:tinyimagenet}). On ImageNet, \method{} and \methodBasic{} attain similar local coverages, but \method{} tends to attain that coverage with a smaller prediction set (Figure~\ref{fig:imagenet}).

\vspace{-.5em}
\section{Conclusion}
This paper develops new algorithms for online conformal prediction under arbitrary distribution shifts. Our algorithms achieve approximately valid coverage and better strongly adaptive regret than existing work. On real-world experiments, our proposed algorithms achieve coverage closer to the target within local windows, and they produce smaller prediction sets than existing methods. Our work opens up many questions for future work, such as obtaining stronger coverage guarantees, or characterizing the optimality of the learned radii under various settings with distribution shift.

\bibliography{main}
\bibliographystyle{plainnat}

\appendix

\section{Basic Properties of Online Conformal Prediction Algorithms}

\subsection{Properties of \methodBasic{}}
We consider the \methodBasic{} algorithm (Algorithm~\ref{alg:ogd}). We first show that the iterates of \methodBasic{} are bounded within a range slightly larger than the range of the true radii. The proof is similar to~\citet[Lemma 4.1]{gibbs2021aci}.

\begin{lemma}[Bounded iterates for \methodBasic{}]
\label{lem:bounded_iterates_ogd}
Suppose the true radii are bounded: $S_t\in[0,D]$ for all $t\in[T]$. Then Algorithm~\ref{alg:ogd} with any initialization $\hat{s}_1\in[-\eta,D+\eta]$ and learning rate $\eta>0$ admits bounded iterates:
\begin{align*}
\hat{s}_t \in \brac{-\eta, D+\eta}~~~\textrm{for all}~t\in[T].
\end{align*}    
\end{lemma}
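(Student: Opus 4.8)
The plan is to prove the statement by a straightforward induction on $t$, exploiting the explicit two-valued form of the subgradient $\grad\ellt(\hat{s}_t) = \alpha - \indics{\hat{s}_t < S_t} \in \set{\alpha,\ \alpha-1}$ from~\eqref{eq:grad_loss} together with the scale-free normalization in the update~\eqref{eq:ogd_update}. The base case $t=1$ is exactly the hypothesis $\hat{s}_1\in[-\eta, D+\eta]$.

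The first ingredient I would record is that a single \methodBasic{} step changes $\hat{s}_t$ by at most $\eta$ in absolute value. Indeed, the denominator $\sqrt{\sum_{i=1}^{t}\norm{\grad\ell^{(i)}(\hat{s}_i)}_2^2}$ contains the nonnegative term $\norm{\grad\ell^{(t)}(\hat{s}_t)}_2^2$, so it is at least $\norm{\grad\ell^{(t)}(\hat{s}_t)}_2$, and therefore $\abs{\hat{s}_{t+1}-\hat{s}_t} = \eta\,\norm{\grad\ell^{(t)}(\hat{s}_t)}_2 \big/ \sqrt{\sum_{i=1}^{t}\norm{\grad\ell^{(i)}(\hat{s}_i)}_2^2} \le \eta$. (The denominator is strictly positive because every gradient norm equals $\alpha$ or $1-\alpha$, both nonzero for $\alpha\in(0,1)$, so the update is well defined.)

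For the inductive step, assuming $\hat{s}_t\in[-\eta, D+\eta]$, I would split on the sign of the gradient. If $\hat{s}_t \ge S_t$, then $\grad\ellt(\hat{s}_t) = \alpha > 0$, so the step is \emph{downward}: $\hat{s}_{t+1} < \hat{s}_t \le D+\eta$ gives the upper bound, while $\hat{s}_{t+1} \ge \hat{s}_t - \eta \ge S_t - \eta \ge -\eta$ (using $S_t\ge 0$) gives the lower bound. If instead $\hat{s}_t < S_t$, then $\grad\ellt(\hat{s}_t) = \alpha-1 < 0$, so the step is \emph{upward}: $\hat{s}_{t+1} > \hat{s}_t \ge -\eta$ gives the lower bound, and $\hat{s}_{t+1} \le \hat{s}_t + \eta < S_t + \eta \le D+\eta$ (using $S_t\le D$) gives the upper bound. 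In either case $\hat{s}_{t+1}\in[-\eta, D+\eta]$, closing the induction.

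I do not expect a genuine obstacle here; the only subtlety worth flagging is that the step-size bound $\abs{\hat{s}_{t+1}-\hat{s}_t}\le\eta$ by itself is far too weak (it would only yield $\hat{s}_t\in[-(t-1)\eta-\eta,\,\ldots]$), so the argument must crucially use that whenever $\hat{s}_t$ lies outside $[0,D]$ — the range of the true radii $S_t$ — the gradient necessarily points back toward $[0,D]$; this is exactly what the two-case split above encodes. The overall structure mirrors the barrier argument of~\citet[Lemma 4.1]{gibbs2021aci} for constant-step-size ACI, with the one new observation being that the scale-free normalization never inflates a step beyond the nominal learning rate $\eta$.
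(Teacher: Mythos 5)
Your proposal is correct and uses essentially the same argument as the paper: the two ingredients are identical (the scale-free step never exceeds $\eta$ in magnitude, and the sign of $\grad\ellt(\hat{s}_t)$ pushes the iterate back toward $[0,D]$ whenever it would otherwise escape), with the only difference being that you package it as a direct induction while the paper phrases it as a minimal-counterexample contradiction, mirroring \citet[Lemma 4.1]{gibbs2021aci}.
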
 
\begin{proof}
Recall by~\eqref{eq:grad_loss} that
\begin{align}
\label{eq:grad_loss_bound}
\grad \ellt(\hat{s}_t) = \alpha - \indic{\hat{s}_t < S_t} = \alpha - \err_t \in \set{-(1-\alpha), \alpha} \subset [-1, 1].
\end{align}
for all $t\in[T]$. Therefore, Algorithm~\ref{alg:ogd} satisfies for any $t\ge 1$ that
\begin{align}
\label{eq:bounded_diff_ogd}
\abs{\hat{s}_{t+1} - \hat{s}_t} = \eta\abs{ \frac{\alpha - \err_t}{\sqrt{\sum_{\tau=1}^t (\alpha - \err_\tau)^2}} } \le \eta.
\end{align}
We prove the lemma by contradiction. Suppose there exists some $t$ such that $\hat{s}_t\notin [-\eta, D+\eta]$. Let $t\ge 2$ be the smallest such time index (abusing notation slightly). Suppose $\hat{s}_t>D+\eta$, then by~\eqref{eq:bounded_diff_ogd} we must have $\hat{s}_{t-1}>D$ but $\hat{s}_{t-1}\le D+\eta$. Note that $\hat{s}_{t-1}>D\ge S_{t-1}$ by our precondition, so that the $(t-1)$-th prediction set must cover and thus $\err_{t-1}=0$. Therefore by the algorithm update~\eqref{eq:ogd_update} we have
\begin{align*}
\hat{s}_t = \hat{s}_{t-1} - \eta \frac{\alpha-\err_{t-1}}{\sqrt{\sum_{\tau=1}^{t-1} (\alpha-\err_{\tau})^2}} < \hat{s}_{t-1} \le D+\eta,
\end{align*}
contradicting with our assumption that $\hat{s}_t>D+\eta$. A similar contradiction can be derived for the other case where $\hat{s}_t<-\eta$. This proves the desired result.
\end{proof}

The following regret bound follows directly by applying the generic regret bound of Scale-Free OGD~\citep[Theorem 2]{orabona2018ogd} to the quantile loss~\eqref{eq:quantile_loss}.

\begin{proposition}[Anytime regret bound for \methodBasic{}]
\label{prop:ogd_regret}
Suppose the true radii are bounded: $S_t\in[0,D]$ for all $t\in[T]$. Then Algorithm~\ref{alg:ogd} with any initialization $\hat{s}_1\in[0,D]$ and learning rate $\eta=D/\sqrt{3}$ achieves the following regret bound for any $t\in[T]$:
\begin{align*}
\Reg(t) \le (\sqrt{3}+1)D \sqrt{\sum_{\tau=1}^{t}\norm{\nabla \elltau(\hat{s}_\tau)}_2^2} \le \cO(D\sqrt{t}).
\end{align*}
\end{proposition}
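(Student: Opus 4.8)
The claim is a direct instantiation of the generic Scale-Free OGD regret bound, so the plan is essentially a verification that our setting meets the hypotheses of \citet[Theorem 2]{orabona2018ogd}, together with a bit of bookkeeping on the constants. First I would recall the precise statement of that theorem: for Scale-Free OGD run on convex losses over a bounded domain of diameter $B$, with the update~\eqref{eq:ogd_update}, one has $\Reg(t) \le \frac{B^2}{2\eta}\sqrt{\sum_{\tau=1}^t \|\grad\elltau(\hat s_\tau)\|_2^2}\big/\sqrt{\sum_\tau \|\grad\elltau\|^2}$ -- more precisely a bound of the form $(\text{const})\cdot B \sqrt{\sum_{\tau=1}^t \|\grad\elltau(\hat s_\tau)\|_2^2}$ once $\eta$ is tuned proportionally to $B$. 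The quantile loss~\eqref{eq:quantile_loss} is convex in its second argument (it is a max of two affine functions), so convexity is immediate.

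\textbf{Key steps.} The three things to check are: (i) the effective domain is bounded; (ii) identify the diameter $B$; (iii) plug in $\eta = D/\sqrt 3$ and simplify the constant. For (i), by Lemma~\ref{lem:bounded_iterates_ogd} the iterates satisfy $\hat s_t \in [-\eta, D+\eta]$ for all $t$, provided $\hat s_1 \in [0,D] \subseteq [-\eta, D+\eta]$; and since the minimizer of $\sum_\tau \elltau$ lies in $[\min_\tau S_\tau, \max_\tau S_\tau] \subseteq [0,D]$ (the pinball loss is minimized at an empirical quantile of the $S_\tau$), the comparator $s^\star$ in $\Reg(t)$ can be taken in $[0,D] \subseteq [-\eta, D+\eta]$ as well. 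Hence everything happens inside an interval of length $D + 2\eta$, so $B = D+2\eta$. For (iii), with $\eta = D/\sqrt3$ we get $B = D(1 + 2/\sqrt3)$, and the theorem yields $\Reg(t) \le C \cdot B \sqrt{\sum_\tau \|\grad\elltau(\hat s_\tau)\|^2}$ for the appropriate absolute constant; chasing through Orabona--P\'al's constant and the algebra gives the stated $(\sqrt3+1)D$ prefactor. Finally, since $\|\grad\elltau(\hat s_\tau)\|_2 \le 1$ by~\eqref{eq:grad_loss_bound}, the sum is at most $t$, giving $\Reg(t) \le (\sqrt3+1)D\sqrt{t} = \cO(D\sqrt t)$.

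\textbf{Main obstacle.} There is no real obstacle -- the only mildly delicate point is the constant-matching in step (iii): one has to be careful that the domain used in~\citet[Theorem 2]{orabona2018ogd} is stated in terms of a bound on $\|\hat s_\tau - s^\star\|$ (or the domain diameter) rather than on $|\hat s_\tau|$ alone, and that the specific choice $\eta = D/\sqrt3$ is what makes the somewhat arbitrary-looking constant $(\sqrt3+1)$ come out cleanly. One should also confirm that Scale-Free OGD as stated in Algorithm~\ref{alg:ogd} does not require projection onto the domain (it does not, since Lemma~\ref{lem:bounded_iterates_ogd} shows the unprojected iterates already stay bounded), so that the unconstrained update~\eqref{eq:ogd_update} coincides with the projected version to which the cited regret bound applies. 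Everything else is routine substitution.
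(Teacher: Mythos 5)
Your proposal follows essentially the same route as the paper's proof: invoke \citet[Theorem 2]{orabona2018ogd} for the convex quantile losses, use Lemma~\ref{lem:bounded_iterates_ogd} to keep the (unprojected) iterates in $[-\eta, D+\eta]$, restrict the comparator to $[0,D]$ since the pinball-loss minimizer is an empirical quantile of the $S_\tau$, and bound each gradient norm by $1$ via~\eqref{eq:grad_loss_bound}. The only difference is the constant bookkeeping you left vague: the paper instantiates the theorem's coefficient exactly as $\tfrac{1}{\lambda} + \sup_{\tau} B_R(s^\star, \hat{s}_\tau) = \eta + \tfrac{(D+\eta)^2}{2\eta}$ for the regularizer $R(s) = s^2/(2\eta)$, which evaluates to $(\sqrt{3}+1)D$ at $\eta = D/\sqrt{3}$, rather than a generic absolute constant times the diameter $D+2\eta$.
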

\begin{proof}
The second inequality follows directly by~\eqref{eq:grad_loss_bound}.

To prove the first inequality (the regret bound), we note that Algorithm~\ref{alg:ogd} is a special case of the Scale-Free Mirror Descent algorithm of~\citet[Section 4]{orabona2018ogd} with convex loss $\ell^{(t)}(\cdot)=\ell_{1-\alpha}(S_t, \cdot)$, and regularizer $R(s)\defeq s^2/(2\eta)$ (in their notation) which is $\lambda=1/\eta$-strongly convex with respect to the $\ell_2$ norm on $\R$. Further, by Lemma~\ref{lem:bounded_iterates_ogd} we have $\hat{s}_t\in[-\eta, D+\eta]$ for all $t\in[T]$. Therefore, applying \citet[Theorem 2]{orabona2018ogd} gives that for any $t\in[T]$,
\begin{align*}
& \quad \sum_{\tau=1}^t \elltau(\hat{s}_\tau) - \inf_{s^\star\in[0, D]}\sum_{\tau=1}^t \elltau(s^\star) \le \paren{\frac{1}{\lambda} + \sup_{\tau\ge 1}B_R(s^\star, \hat{s}_\tau)} \cdot \sqrt{\sum_{\tau=1}^{t}\norm{\nabla \elltau(\hat{s}_\tau)}_2^2} \\
& = \paren{ \eta + \sup_{s^\star\in[0,D], s'\in[-\eta, D+\eta]}\frac{1}{2\eta}(s^\star-s')^2} \cdot \sqrt{\sum_{\tau=1}^{t}\norm{\nabla \elltau(\hat{s}_\tau)}_2^2} \\
& = \paren{\eta + \frac{(D+\eta)^2}{2\eta}} \cdot \sqrt{\sum_{\tau=1}^{t}\norm{\nabla \elltau(\hat{s}_\tau)}_2^2},
\end{align*}
where $B_R(\cdot, \cdot)$ denotes the Bregman divergence associated with $R$.
Choosing $\eta=D/\sqrt{3}$, the leading coefficient is $(\sqrt{3}+1)D$. The desired result follows by noting that
\begin{align*}
\Reg(t) = \sum_{\tau=1}^t \elltau(\hat{s}_\tau) - \inf_{s^\star\in\R}\sum_{\tau=1}^t 
 \elltau(s^\star) = \sum_{\tau=1}^t \elltau(\hat{s}_\tau) - \inf_{s^\star\in[0, D]}\sum_{\tau=1}^t \elltau(s^\star)
\end{align*}
by our assumption that $S_t\in[0,D]$ and basic properties of the quantile losses $\set{\elltau(\cdot)}_{\tau \ge 1}$.
\end{proof}

\subsection{Example of ``Trivial'' Algorithm with Coverage Guarantee}
\label{appendix:trivial_alg}

We consider the following ``trivial'' online conformal prediction algorithm that does not utilize the data at all: Simply predict the maximum radius $D$ for $(1-\alpha)$ proportion of the time steps, then predict the minimum radius $0$ for $\alpha$ proportion of the time steps:
\begin{align}
\label{eq:trivial_alg}
\left\{
\begin{aligned}
& \hat{s}_t \defeq D~~~\textrm{for}~~t\in\set{1, \dots, \floor{(1-\alpha)T}} \eqdef T_{\full}, \\
& \hat{s}_t \defeq 0~~~\textrm{for}~~t\in\set{\floor{(1-\alpha)T} + 1, \dots, T} \eqdef T_{\emp}.
\end{aligned}
\right.
\end{align}
By our assumption that $S_t\in[0,D]$ almost surely and the nested set structure of $\hat{C}_t(X_t, \cdot)$, we have $\err_t=\indics{Y_t\in\hat{C}_t}=\indics{\hat{s}_t \ge S_t}=0$ for all $t\in T_{\full}$, and similarly $\err_t=1$ for all $t\in T_{\emp}$. Therefore, algorithm~\eqref{eq:trivial_alg} directly satisfies
\begin{align}
\Err(T) = \abs{\frac{1}{T}\sum_{t=1}^T \err_t - \alpha} = \abs{\frac{\abs{T_{\emp}}}{T} - \alpha} =  \abs{\frac{ T - \floor{(1-\alpha)T} }{T} - \alpha} \le \frac{1}{T},
\end{align}
i.e. the algorithm achieves approximately $(1-\alpha)$ empirical coverage, with error $O(1/T)$. It is also straightforward to see that, by slightly modifying the definition of $T_{\full},T_{\emp}$ (making the two index sets alternate), we can make the above coverage bound hold in an anytime sense (for $t\in[T]$).

However, it is straightforward to construct examples of data distributions for which the trivial algorithm~\eqref{eq:trivial_alg} suffers linear regret on the quantile loss $\ellt$ defined in~\eqref{eq:quantile_loss}, and such data distributions can be chosen to be fairly simple. For example, suppose all data points admit the same true radius $D/2$, i.e. 
\begin{align*}
    S_t\equiv D/2~~~\textrm{for all}~t\in[T].
\end{align*}
Then for $s^\star=D/2$ we have $\ellt(s^\star)=\ell_{1-\alpha}(D/2, D/2)=0$ for all $t\in[T]$, which achieves total loss $\sum_{t=1}^T\ellt(s^\star)=0$ (the smallest possible, since $\ellt(\cdot)\ge 0$). On the other hand, algorithm~\eqref{eq:trivial_alg} achieves loss
\begin{align*}
\ellt(\hat{s}_t) = \ell_{1-\alpha}(S_t, \hat{s}_t) = \left\{
\begin{aligned}
& \ell_{1-\alpha}(D/2, D) = \alpha(D/2) ~~~\textrm{for}~~t\in T_{\full}, \\
& \ell_{1-\alpha}(D/2, 0) = (1-\alpha)(D/2)~~~\textrm{for}~~t\in T_{\emp}.
\end{aligned}
\right.
\end{align*}
Therefore, we have
\begin{align*}
& \quad \Reg(T) = \sum_{t=1}^T \ellt(\hat{s}_t) - \inf_{s^\star} \ellt(s_\star) = \sum_{t=1}^T \ellt(\hat{s}_t) = \alpha D/2 \cdot \abs{T_{\full}} + (1-\alpha) D/2 \cdot \abs{T_{\emp}} \\
& = \alpha D/2 \cdot \floor{(1-\alpha)T} + (1-\alpha)D/2 \cdot (T - \floor{(1-\alpha)T}) \ge \alpha(1-\alpha)D T = \Omega(T),
\end{align*}
i.e. algorithm~\eqref{eq:trivial_alg} suffers from linear regret. This demonstrates sublinear regret as a sensible criterion for ruling out trivial algorithms like~\eqref{eq:trivial_alg}.

\section{Proofs for Section~\ref{sec:theory}}

\subsection{Proof of Proposition~\ref{prop:saregret}}
\label{appendix:proof_saregret}

The proof follows by plugging in the regret bound for \methodBasic{} (Proposition~\ref{prop:ogd_regret}) into \citet[Theorem 2]{jun2017cbce}. Define $u(t) \defeq \max_n \{2^n : t \equiv 0 \text{ mod } 2^n\}$. Fix any $k\in[T]$ and $\tau\in[T-k+1]$. Their proof starts by splitting the interval $[\tau, \tau + k - 1]$ into consecutive sub-intervals $\bar{J}^{(1)}, \ldots, \bar{J}^{(n)}$, where $\bar{J}^{(i)} = [\tau_i, \max\{\tau + k, \tau_i + u(\tau_i)\} - 1]$ is a prefix of expert $\mathcal{A}_{\tau_i}$'s active interval.

We have for any fixed $s^\star\in\R$ that
\begin{align*}
\mathrm{Regret}_\tau^k(s^\star) &\defeq \sum_{t=\tau}^{\tau + k - 1} \ell^{(t)}(\hat{s}_t) - \sum_{t=\tau}^{\tau + k - 1} \ell^{(t)}(s^\star) \\
&= \sum_{i=1}^{n} \sum_{t \in \bar{J}^{(i)}} \qty( \ell^{(t)}(\hat{s}_t) - \ell^{(t)}(\hat{s}_{t,\tau_i}) ) + \sum_{i=1}^{n} \sum_{t \in \bar{J}^{(i)}} \qty( \ell^{(t)}(\hat{s}_{t,\tau_i}) - \ell^{(t)}(s^\star) ) \\
&\le D \sum_{i=1}^{n} \underbrace{\sqrt{\abs{\bar{J}^{(i)}} (7 \log T + 5)}}_{\text{\citet[Lemma 2]{jun2017cbce}}} + D \sum_{i=1}^{n} \underbrace{\sqrt{\abs{\bar{J}^{(i)}} (1 + \sqrt{3})}}_{\text{Proposition~\ref{prop:ogd_regret}}} \\
&\le D \qty(\sqrt{7 \log T + 5} + \sqrt{1 + \sqrt{3}}) \underbrace{\sum_{j=0}^{\infty} \sqrt{k2^{-j}}}_{\text{\citet[Lemma 3]{jun2017cbce}}} \\
&= \frac{D \sqrt{2}}{\sqrt{2} - 1}  \sqrt{k} \qty(\sqrt{7 \log T + 5} + \sqrt{1 + \sqrt{3}}) \le 15 D \sqrt{k (\log T + 1)}
\end{align*}
Taking supremum over all $s^\star\in\R$ and all intervals $[\tau, \tau + k - 1]\subset [T]$, we obtain the desired bound on $\SAReg(T, k)$.
\qed

\subsection{Dynamic Regret for \method{}}
\label{appendix:proof_dynamic_regret}
\begin{proposition}[Dynamic regret bound for \method{}]
\label{prop:cbce_dynamic_regret}
Algorithm~\ref{alg:cbce} achieves the following dynamic regret bound: For any interval $[\tau, \tau+k-1]\subset [T]$ of length $k\in[T]$, we have
\begin{align}
\label{eqn:cbce_dynamic_regret}
 \sum_{t=\tau}^{\tau+k-1} \ell^{(t)}(\hat{s}_t) - \min_{s^\star_{\tau:\tau+k-1}} \sum_{t=\tau}^{\tau+k-1} \ell^{(t)}(s_t^\star) = 
\sum_{t=\tau}^{\tau+k-1} \brac{ \ell^{(t)}(\hat{s}_t) - \ell^{(t)}(S_t) } \le \tO\paren{ D\brac{ V_{[\tau, \tau+k-1]}^{1/3}k^{2/3} + \sqrt{k}} },
\end{align}
where
\begin{align*}
V_{[\tau, \tau+k-1]} \defeq \sum_{t=\tau+1}^{\tau+k-1} \abs{S_t - S_{t-1}}
\end{align*}
is the \emph{path length} of the true radii within $[\tau, \tau+k-1]$.
\end{proposition}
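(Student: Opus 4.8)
The plan is to derive the dynamic regret bound from the strongly adaptive regret bound of Proposition~\ref{prop:saregret} via the standard reduction of~\citet{zhang2018dynamic}, combined with a careful choice of how to partition the interval $[\tau,\tau+k-1]$ into sub-intervals on which a single static comparator is nearly optimal. First I would observe that, because $\grad\ellt(\hat{s}_t)=\alpha-\err_t$ and the quantile loss is minimized (pointwise) at $\hat{s}=S_t$ with $\ellt(S_t)=0$, the left-hand side of~\eqref{eqn:cbce_dynamic_regret} is exactly $\sum_{t=\tau}^{\tau+k-1}\ellt(\hat{s}_t)$, i.e.\ we are comparing against the ideal (but non-realizable) sequence $s_t^\star=S_t$. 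This is why the ``dynamic'' comparator sequence is pinned down and only the path length $V_{[\tau,\tau+k-1]}$ enters.

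Next I would partition $[\tau,\tau+k-1]$ into $m$ contiguous blocks $I_1,\dots,I_m$, and on each block $I_j$ bound $\sum_{t\in I_j}\ellt(\hat{s}_t)$ by (static regret on $I_j$ against the best fixed $s^\star$ on $I_j$) plus (the approximation error of the best fixed $s^\star$ relative to the moving optimum $\{S_t\}_{t\in I_j}$). For the first term I would invoke Proposition~\ref{prop:saregret}, which gives $\SAReg(T,|I_j|)\le 15D\sqrt{|I_j|(\log T+1)}$ \emph{simultaneously for all block lengths}; summing $\sqrt{|I_j|}$ over $j$ and using Cauchy--Schwarz (or concavity of $\sqrt{\cdot}$) yields a total of $\tO(D\sqrt{mk})$. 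For the second term, since $\ellt$ is $1$-Lipschitz in its second argument, the best fixed comparator on $I_j$ incurs at most $\sum_{t\in I_j}|S_t-\bar S_{I_j}|$ for any reference value; choosing the reference to be (say) $S_{t_0}$ for $t_0$ the start of the block, this is at most $|I_j|\cdot(\text{local oscillation of }S\text{ on }I_j)\le |I_j|\cdot V_{I_j}$, and summing gives $\le k\cdot\max_j V_{I_j}$, or more precisely $\sum_j |I_j| V_{I_j}$. To make this clean I would choose the blocks so that each has path length roughly $V/m$ where $V\defeq V_{[\tau,\tau+k-1]}$ (possible since the per-step increments are bounded by $\grad$-bound considerations, so no single block needs to be too small), giving a second-term total of $\tO(k V/m)$. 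Balancing $D\sqrt{mk}$ against $kV/m$ over $m\in[k]$ gives the optimal $m\asymp (V^2 k/D^2)^{1/3}$ and total $\tO(D^{2/3}V^{1/3}k^{2/3})$, and adding back the residual $\tO(D\sqrt{k})$ term (which dominates when $V$ is small, i.e.\ $m=1$) yields the claimed $\tO\big(D[V^{1/3}k^{2/3}+\sqrt{k}]\big)$.

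The main obstacle I anticipate is the bookkeeping in the block-partitioning step: one must exhibit a partition into $m$ blocks such that \emph{both} $\sum_j\sqrt{|I_j|}\le\sqrt{mk}$ (automatic) \emph{and} $\sum_j |I_j| V_{I_j}$ is controlled by $O(kV/m)$, and these two desiderata pull in opposite directions since equalizing path lengths may force very unequal block sizes. The resolution is to equalize path lengths only up to the constraint that each block has length at least $\Theta(k/m)$: whenever the path length accumulates its quota early, one simply pads the block out; since the total length is $k$ this padding is affordable, and the length-equalized bound $\sum_j\sqrt{|I_j|}\le\sqrt{mk}$ still holds after a harmless constant blow-up. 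A secondary technical point is that Proposition~\ref{prop:saregret} as stated bounds $\SAReg(T,k)$ for intervals of a fixed length $k$; I would need each block to have a well-defined length and apply the bound with $k\setto|I_j|$, using that the bound is monotone and the $\sqrt{\log T+1}$ factor is uniform across block lengths, which it is. With these pieces in place the displayed inequality follows by the $m$-optimization described above.
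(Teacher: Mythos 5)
Your proposal is correct in substance, but it takes a different (more self-contained) route than the paper. The paper's proof is essentially two lines: it invokes the black-box reduction from strongly adaptive regret to dynamic regret of \citet[Corollary 5]{zhang2018dynamic} on the interval $[\tau,\tau+k-1]$ (using Lemma~\ref{lem:bounded_iterates_ogd} to ensure the iterates lie in $[-D,2D]$), which yields the bound with the \emph{functional} variation $\wt{V}_{[\tau,\tau+k-1]}=\sum_t\sup_{s'}|\ellt(s')-\elltmone(s')|$ in place of $V$, and then bounds $\wt{V}\le V$ by the $1$-Lipschitzness of $\ell_{1-\alpha}(\cdot,s')$ in the true-radius argument. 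You instead re-derive that reduction from first principles: block decomposition of the interval, Proposition~\ref{prop:saregret} for the static regret on each block, Lipschitzness in the comparator argument to charge the best-fixed-vs.-moving-optimum gap to the within-block path length, and a balance over the number of blocks $m$. This is exactly how the cited result of \citet{zhang2018dynamic} is proved, so the underlying mathematics coincides; what your version buys is self-containedness and it sidesteps any separate boundedness conditions, since everything is absorbed into Proposition~\ref{prop:saregret} and the $S_t\in[0,D]$ assumption. Two small remarks. First, your path-length-equalized partition with padding is unnecessary: with \emph{equal-length} blocks of size $\ceil{k/m}$ one gets $\sum_j|I_j|V_{I_j}\le\ceil{k/m}\sum_j V_{I_j}\le\ceil{k/m}\,V$ automatically (since within-block path lengths sum to at most $V$), which removes the only delicate bookkeeping you anticipated while keeping $\sum_j\sqrt{|I_j|}\le\sqrt{mk}$. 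Second, your balancing gives $\tO\big(D^{2/3}V^{1/3}k^{2/3}+D\sqrt{k}\big)$ (up to $\sqrt{\log T}$), the scale-consistent form $D\,(V/D)^{1/3}k^{2/3}$; this matches the paper's displayed $\tO\big(D[V^{1/3}k^{2/3}+\sqrt{k}]\big)$ when $D=\Theta(1)$ and is in fact sharper for $D\ge 1$, the discrepancy being only the paper's loose tracking of $D$ factors inside $\tO(\cdot)$ (the same looseness is present in its citation-based proof), so I would not regard it as a gap in your argument.
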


We remark that the dynamic regret\footnote{More precisely, the intermediate result with $V_{[\tau, \tau+k-1]}$ replaced by the standard total variation of losses $\wt{V}_{[\tau, \tau+k-1]}$ in~\eqref{eq:total_variation}.} $\tO(V_{[\tau, \tau+k-1]}^{1/3}k^{2/3} + \sqrt{k})$ is the minimax optimal dynamic regret~\citep{besbes2015non} for general online convex optimization problems. 

\paragraph{Comparison of dynamic regret with FACI}
Dividing~\eqref{eqn:cbce_dynamic_regret} by $k$, we obtain the following average dynamic regret bound for \method{} on $[\tau, \tau+k-1]$:
\begin{align*}
\tO\paren{ D\brac{ (V_{[\tau, \tau+k-1]}/k)^{1/3} + 1/\sqrt{k}} },
\end{align*}
simultaneously for all lengths $k$ and $\tau\in[T-k+1]$. 

In comparison, the FACI algorithm (adapted to our setting) with learning rate $\eta$ acheives average dynamic regret bound~\citep[Theorem 3.2]{gibbs2022faci}
\begin{align*}
\tO\paren{ D\brac{ (V_{[\tau, \tau+k-1]}/k)^{1/2} + \eta/D + D/(\eta k) } }.
\end{align*}
When the path length $V_{[\tau, \tau+k-1]}=o(k)$, FACI achieves a better dependence on the average path length $(V_{[\tau, \tau+k-1]}/k=o(1)$, yet a worse dependence on $k$ itself due to the inability to choose the optimal $\eta$ simultaneously for all $k$, similar as the comparison of their SARegret bounds (Section~\ref{sec:saregret}).

\begin{proof-of-proposition}[\ref{prop:cbce_dynamic_regret}]
We apply the dynamic regret bound of~\citet[Corollary 5]{zhang2018dynamic} for the \method{} algorithm on the interval $[\tau, \tau+k-1]$, and note that our iterates $\hat{s}_t\in[-\eta, D+\eta] \subset [-D, 2D]$ by Lemma~\ref{lem:bounded_iterates_ogd} and our choice $\eta=D/\sqrt{3}$ in Algorithm~\ref{alg:cbce}. Therefore we obtain
\begin{align*}
\sum_{t=\tau}^{\tau+k-1} \ell^{(t)}(\hat{s}_t) - \min_{s^\star_{\tau:\tau+k-1}}  \sum_{t=\tau}^{\tau+k-1} \ell^{(t)}(s_t^\star) \le \tO\paren{ D\brac{\wt{V}_{[\tau, \tau+k-1]}^{1/3}k^{2/3} + \sqrt{k} } },
\end{align*}
where
\begin{align}
\label{eq:total_variation}
\wt{V}_{[\tau, \tau+k-1]} = \sum_{t=\tau+1}^{\tau+k-1} \sup_{s'\in[0,D]} \abs{\ellt(s') - \elltmone(s')} \stackrel{(i)}{\le} \sum_{t=\tau+1}^{\tau+k-1} |S_t - S_{t-1}| = V_{[\tau, \tau+k-1]},
\end{align}
where (i) follows by the fact that $\abs{\ellt(s')-\elltmone(s')}=\abs{\ell_{1-\alpha}(s',S_t) - \ell_{1-\alpha}(s',S_{t-1})} \le |S_t - S_{t-1}|$ by the $1$-Lipschitzness of the quantile loss~\eqref{eq:quantile_loss} with respect to the second argument. This proves the desired result.
\end{proof-of-proposition}

\subsection{Proof of Theorem~\ref{thm:ogd_coverage}}
\label{appendix:proof_coverage_ogd}
We first note that, by~\eqref{eq:grad_loss} and~\eqref{eq:ogd_update}, Algorithm~\ref{alg:ogd} simplifies to the update
\begin{align}
\label{eq:ogd_update_closedform}
\hat{s}_{t+1} = \hat{s}_t + \eta \frac{\err_t - \alpha}{\sqrt{\sum_{s=1}^t (\err_s - \alpha)^2}} = \hat{s}_1 
 + \eta \sum_{s=1}^{t} \frac{\err_s - \alpha}{\sqrt{\sum_{i=1}^{s} {(\err_i - \alpha)^2}}}.
\end{align}
Note that we have $\hat{s}_{t+1}\in[-\eta, D+\eta]$ for all $t\ge 0$ (Lemma~\ref{lem:bounded_iterates_ogd}), which implies that
\begin{align*}
\abs{\sum_{t=t_0+1}^{t_f} \frac{\err_t-\alpha}{\sqrt{\sum_{s=1}^t (\err_s-\alpha)^2}}} = \frac{1}{\eta} \abs{\hat{s}_{t_f+1} - \hat{s}_{t_0+1}} \le \frac{D+2\eta}{\eta}~~~\textrm{for any}~0\le t_0<t_f.
\end{align*}
Note that $|\err_t-\alpha|\in[\alpha, 1]$ for all $t$. Therefore, we can invoke Lemma~\ref{lem:sqrt_sum} below with $a_t=\err_t-\alpha$ and $M=(D+2\eta)/\eta$ to obtain that for any $T\ge 1$,

\begin{align*}
    \abs{\frac{1}{T} \sum_{t=1}^{T} \err_t - \alpha} \le 2\paren{\frac{D+3\eta}{\eta} + \alpha^{-2} \log T} T^{-1/4}  \le \mathcal{O}(\alpha^{-2} T^{-1/4} \log T),
\end{align*}
where the later bound holds for any $\eta=\Theta(D)$. This proves Theorem~\ref{thm:ogd_coverage}.
\qed

\begin{lemma}
\label{lem:sqrt_sum}
Suppose the sequence $\{a_t\}_{t\in[T]}\in\R$ satisfies $\alpha\le |a_t|\le 1$ for some $\alpha>0$, and
\begin{align*}
\abs{ \sum_{t=t_0+1}^{t_f} \frac{a_t}{\sqrt{\sum_{s=1}^t a_s^2}} } \le M~~~\textrm{for any}~~0\le t_0<t_f \le T.
\end{align*}
Then we have 
\begin{align*}
\abs{ \frac{1}{T}\sum_{t=1}^{T} a_t } \le 2 \paren{M + 1 + \alpha^{-2} \log T} T^{-1/4}.
\end{align*}
\end{lemma}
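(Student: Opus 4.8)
The plan is to use summation by parts (Abel summation) to convert the hypothesis---which controls partial sums of the \emph{normalized} increments $a_t/\sqrt{\sum_{s\le t}a_s^2}$---into a bound on the raw partial sums $\sum_t a_t$. Write $b_t \defeq \sqrt{\sum_{s=1}^t a_s^2}$ and $c_t \defeq a_t/b_t$, so $a_t = b_t c_t$; since $|a_1|\ge\alpha>0$ the $b_t$ are strictly positive, and since $|a_t|\le 1$ the sequence $\{b_t\}$ is non-decreasing with $\alpha \le b_1$ and $b_T\le\sqrt{T}$. Set $C_t\defeq\sum_{s=1}^t c_s$ and $C_0\defeq 0$. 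Applying the hypothesis with $t_0=0$ gives the uniform bound $|C_t|\le M$ for all $t\in[T]$ (and more generally $|C_{t_f}-C_{t_0}|\le M$ for all $0\le t_0<t_f$).

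I would then expand $\sum_{t=1}^T a_t = \sum_{t=1}^T b_t(C_t-C_{t-1})$ by summation by parts,
\[
\sum_{t=1}^T a_t \;=\; b_T C_T \;-\; \sum_{t=1}^{T-1}(b_{t+1}-b_t)\,C_t ,
\]
and bound the two pieces: the first by $b_T|C_T|\le M\sqrt{T}$, and the second, using that $\{b_t\}$ is non-decreasing so its increments telescope, by $M\sum_{t=1}^{T-1}(b_{t+1}-b_t)=M(b_T-b_1)\le M\sqrt{T}$. Hence $|\sum_{t=1}^T a_t|\le 2M\sqrt{T}$, and dividing by $T$ gives $|\frac1T\sum_{t=1}^T a_t|\le 2M\,T^{-1/2}\le 2M\,T^{-1/4}$ for $T\ge 1$, which is at most the stated $2(M+1+\alpha^{-2}\log T)T^{-1/4}$. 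Note this argument in fact yields the stronger $O(M\,T^{-1/2})$ rate with no $\alpha$-dependence; the weaker stated form with the $\alpha^{-2}\log T$ term would instead fall out of a coarser dyadic-blocking variant of the same idea, where on a block $B$ one writes $\sum_{t\in B}a_t = b_{\max(B)}\sum_{t\in B}c_t + \sum_{t\in B}(b_t-b_{\max(B)})c_t$ and controls the error term through $|c_t|=|a_t|/b_t\le 1/(\alpha\sqrt{t})$.

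There is essentially no hard step here: the only thing to be careful about is the bookkeeping in the summation by parts, i.e.\ recognizing that the monotone weights $b_t$ have total variation exactly $b_T-b_1\le\sqrt{T}$, so that the ``error'' sum $\sum_t(b_{t+1}-b_t)|C_t|$ collapses against the uniform bound $|C_t|\le M$. Conceptually the statement just says that a sequence whose increments, after rescaling by the slowly growing factor $b_t\asymp\sqrt{t}$, have uniformly bounded partial sums must itself have partial sums of order $M\sqrt{T}$.
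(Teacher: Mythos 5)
Your proof is correct, and it takes a genuinely different (and in fact stronger) route than the paper's. The paper's argument partitions $[T]$ into roughly $T^{1/4}$ blocks of length $T^{3/4}$, freezes the normalizer $\sqrt{\sum_{s\le t}a_s^2}$ at the start of each block, and controls the substitution error using the lower bound $|a_t|\ge\alpha$ (which forces the normalizer to grow like $\alpha\sqrt{t}$, so its relative change across a block is small); this is exactly where the $\alpha^{-2}\log T$ factor and the $T^{-1/4}$ rate come from, and it also uses the interval form of the hypothesis on each block. Your Abel-summation argument sidesteps the approximation entirely: writing $\sum_{t=1}^T a_t = b_T C_T - \sum_{t=1}^{T-1}(b_{t+1}-b_t)C_t$ with $b_t=\sqrt{\sum_{s\le t}a_s^2}$ nondecreasing and $|C_t|\le M$ (only the $t_0=0$ case of the hypothesis is needed), the error term telescopes to $M(b_T-b_1)$, giving $\abs{\sum_{t=1}^T a_t}\le 2Mb_T\le 2M\sqrt{T}$ and hence an $O(M T^{-1/2})$ bound with no dependence on $\alpha$ at all, the lower bound $\alpha\le|a_t|$ being used only to ensure the normalized increments are well defined. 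This strictly implies the stated bound since $T^{-1/2}\le T^{-1/4}$ and $1+\alpha^{-2}\log T\ge 0$ for $T\ge1$; if propagated through the paper's argument it would even sharpen the coverage rate in Theorem~\ref{thm:ogd_coverage} from $\cO(\alpha^{-2}T^{-1/4}\log T)$ to $\cO(T^{-1/2})$, so what your approach buys is both simplicity and a better rate, while the paper's blocking technique is the one that generalizes to the multi-expert setting of Theorem~\ref{thm:cbce_coverage_nonexp_full}, where the aggregation weights vary with $t$ and no single monotone normalizer is available to sum by parts against.
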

\begin{proof}
The proof builds on a grouping argument. 
Define integers
\begin{align*}
L = \ceil{T^\beta},~~~K = \ceil{T/L} \le T^{1-\beta} + 1,
\end{align*}
where $\beta\in(0,1)$ is a parameter to be chosen. For any $k\in[K]$, define the $k$-th group to be 
\begin{align}
\label{eq:ogd_grouping}
G_k = \set{t_{k-1}+1,\dots,t_k} \defeq \set{(k-1)L+1,\dots,\min\set{kL,T}},
\end{align}
so that we have $\bigcup_{k=1}^K G_k=[T]$, $|G_k|=L$ for all $k\in[K-1]$, and $|G_K|\le L$.

Next, for any fixed $k\ge 2$, define sums
\begin{align*}
S_k \defeq \sum_{t\in G_k} \frac{a_t}{\sqrt{\sum_{s=1}^t a_s^2}}, ~~~\wt{S}_k \defeq \sum_{t\in G_k} \frac{a_t}{\sqrt{\sum_{s=1}^{t_{k-1}} a_s^2}}.
\end{align*}
By our precondition, we have $\abs{S_k}\le M$ for all $k\in[K]$. Further, we have
\begin{align*}
& \quad \abs{S_k - \wt{S}_k} \le \sum_{t\in G_k} |a_t| \cdot \paren{ \frac{1}{\sqrt{\sum_{s=1}^{t_{k-1}} a_s^2}} - \frac{1}{\sqrt{\sum_{s=1}^t a_s^2}} } \le |G_k| \cdot \paren{ \frac{1}{\sqrt{\sum_{s=1}^{t_{k-1}} a_s^2}} - \frac{1}{\sqrt{\sum_{s=1}^{t_k} a_s^2}} } \\
& \stackrel{(i)}{\le} L \cdot \frac{\sum_{s=t_{k-1}+1}^{t_k} a_s^2}{2\paren{\sum_{s=1}^{t_{k-1}} a_s^2}^{3/2}} \stackrel{(ii)}{\le} L \cdot \frac{L}{2 (\alpha^2(k-1)L) \cdot \paren{\sum_{s=1}^{t_{k-1}} a_s^2}^{1/2}  } = \frac{L}{2\alpha^2(k-1) \cdot \sqrt{\sum_{s=1}^{t_{k-1}} a_s^2} },
\end{align*}
where (i) uses the inequality $\frac{1}{\sqrt{x}}-\frac{1}{\sqrt{x+y}}\le \frac{y}{2x^{3/2}}$ for $x,y\ge 0$, and (ii) uses the bounds $\sum_{s=t_{k-1}+1}^{t_k} a_s^2 \le (t_k - t_{k-1}) \le L$ and $\sum_{s=1}^{t_{k-1}} a_s^2 \ge \alpha^2t_{k-1}= \alpha^2 (k-1)L$.
By the triangle inequality, this implies that
\begin{align*}
\abs{\wt{S}_k} \le \abs{S_k} + \abs{\wt{S}_k - S_k} \le M + \frac{L}{2\alpha^2(k-1) \cdot \sqrt{\sum_{s=1}^{t_{k-1}} a_s^2} },
\end{align*}
and thus for any $k\ge 2$ that
\begin{align*}
& \quad \abs{\sum_{t\in G_k} a_t} = \underbrace{\abs{\sum_{t\in G_k} \frac{a_t}{\sqrt{\sum_{s=1}^{t_{k-1}} a_s^2}}}}_{\abs{\wt{S}_k}} \cdot \sqrt{\sum_{s=1}^{t_{k-1}} a_s^2} \le \paren{ M + \frac{L}{2\alpha^2(k-1) \cdot \sqrt{\sum_{s=1}^{t_{k-1}} a_s^2} } } \cdot \sqrt{\sum_{s=1}^{t_{k-1}} a_s^2} \\
& \le M\sqrt{\sum_{s=1}^{t_{k-1}} a_s^2} + \frac{L}{2\alpha^2(k-1)} \le M\sqrt{(k-1)L} + \frac{L}{2\alpha^2(k-1)}.
\end{align*}
For $k=1$, we have trivially $\abs{\sum_{t\in G_1} a_t} \le |G_1| \le L$. Summing the bounds over $k\in[K]$ yields~\yub{check constants}
\begin{align*}
& \quad \abs{\sum_{t=1}^{T} a_t} \le L + \sum_{k=1}^K \abs{\sum_{t\in G_k} a_t} \le L + M\sqrt{L}\cdot \sum_{k=2}^K \sqrt{k-1} + \frac{L}{2\alpha^2}\sum_{k=2}^K \frac{1}{k-1} \\
& \le L + \frac{2}{3} M\sqrt{L} K^{3/2} + \frac{L}{2\alpha^2} \log_2 K \\
& \le \ceil{T^\beta} + \frac{2}{3} M\sqrt{\ceil{T^{\beta}}} \cdot T^{3(1-\beta)/2} + \frac{1}{2\alpha^2} \ceil{T^\beta} \log_2(T^{1-\beta}) \\
& \le 2 T^\beta + 2 MT^{3/2-\beta} + \frac{2}{\alpha^2} T^\beta\log T.
\end{align*}
Choosing $\beta=3/4$, we obtain
\begin{align*}
\abs{\sum_{t=t_0+1}^{t_f} a_t} \le 2 \paren{M + 1 + \log T/\alpha^2} T^{3/4}.
\end{align*}
Dividing by $T$ on both sides yields the desired result.
\end{proof}

\subsection{Coverage of \method{}}

\begin{theorem}[Coverage bound for \method{}]
\label{thm:cbce_coverage_nonexp_full}
Consider a randomized version of Algorithm~\ref{alg:cbce} where Line~\ref{line:cbce_aggregation} is changed to sampling an expert $i\sim p_{t,\cdot}\in\Delta([t])$ and outputting radius $\hat{s}_{t,i}$. Consider the corresponding expected miscoverage error
\begin{align}
\label{eq:expected-err}
\wt{\err}_t \defeq \sum_{i=1}^t p_{t,i} \underbrace{\indic{\hat{s}_{t,i}<S_t}}_{\defeq \err_{t, i}}.
\end{align}
Then we have for any $T\ge 1$ that
\begin{align*}
\abs{ \frac{1}{T}\sum_{t=1}^{T} \wt{\err}_t - \alpha } \le \cO\Bigg( \inf_{\beta\in(1/2, 1)}\Bigg\{ T^{1/2-\beta} + T^{\beta-1} \times \underbrace{\bigg( 1 + \sum_{j=2}^{\ceil{T^{1-\beta}}} \max_{t\in G_j} \sum_{i=1}^t \abs{p_{t,i} - p_{t_{j-1}, i}\frac{G^i_{i:t_{j-1}}}{G^i_{i:t}}} \bigg)}_{S_\beta( \{p_t\}_{t\in[T]}, \{\sum_{\tau=i}^{t}\ltwos{\grad \elltau(\hat{s}_{\tau,i})}^2\}_{i\le t}\} ) \eqdef S_\beta(T) } \Bigg\} \Bigg)
\end{align*}
(understanding $p_{t_{j-1},i}\defeq 0$ for any $i>t_{j-1}$), where for each $\beta\in(1/2,1)$, $\set{G_j}_{j=1}^{\ceil{T^{1-\beta}}}$ with $|G_j|\le \ceil{T^\beta}$, $G_j=\set{t_{j-1}+1,\dots, \min\set{t_j, T}}$ is the even grouping of $[T]$ as in~\eqref{eq:ogd_grouping}, and
\begin{align*}
G^{i}_{i:t} \defeq \sqrt{ \sum_{\tau=i}^{t} \ltwo{\grad\elltau(\hat{s}_{\tau,i})}^2 } = \sqrt{ \sum_{\tau=i}^{t} (\err_{\tau, i} - \alpha)^2 }
\end{align*}
is the cumulative squared gradients received by expert $\cA_i$ for any $t>i$ (understanding experts as running until time $T$ even after they become inactive).
\end{theorem}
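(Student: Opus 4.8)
The plan is to run the grouping argument behind Theorem~\ref{thm:ogd_coverage} (Lemma~\ref{lem:sqrt_sum}) \emph{per expert}, then aggregate through the weights $p_{t,\cdot}$; note $\frac1T\sum_t\wt{\err}_t-\alpha$ is a deterministic quantity, so the randomization in the algorithm plays no role in the bound. First I would record that each expert $\cA_i$ --- viewed as an SF-OGD instance initialized at $\hat{s}_{i-1}$ and, for the purpose of analysis, continued up to time $T$ --- has iterates confined to $[-\eta,D+\eta]$ by Lemma~\ref{lem:bounded_iterates_ogd}; this is legitimate because an easy induction on $t$ shows every meta-iterate $\hat{s}_{t}$ is a convex combination of in-range expert iterates, hence itself in range. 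Combined with the closed-form SF-OGD update $\hat{s}_{i,t+1}=\hat{s}_{i,i}+\eta\sum_{\tau=i}^{t}(\err_{\tau,i}-\alpha)/G^i_{i:\tau}$ (well defined since $G^i_{i:\tau}\ge|\err_{i,i}-\alpha|>0$), this gives the per-expert telescoping estimate
\begin{align*}
\Bigl|\sum_{t=t_0+1}^{t_f}\frac{\err_{t,i}-\alpha}{G^i_{i:t}}\Bigr|=\frac1\eta\bigl|\hat{s}_{i,t_f+1}-\hat{s}_{i,t_0+1}\bigr|\le\frac{D+2\eta}{\eta}\eqdef M \qquad (i\le t_0<t_f\le T),
\end{align*}
with $M=\cO(1)$ when $\eta=\Theta(D)$; this is the analogue of the precondition in Lemma~\ref{lem:sqrt_sum}.

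Next, fix $\beta\in(1/2,1)$, set $L=\ceil{T^\beta}$, $K=\ceil{T/L}$, and form the even grouping $G_1,\dots,G_K$ of $[T]$, $G_j=\set{t_{j-1}+1,\dots,t_j}$, as in~\eqref{eq:ogd_grouping}. Using $\sum_i p_{t,i}=1$ and $|\err_{t,i}-\alpha|\le1$, the $j=1$ block contributes at most $L$, so it remains to bound $\sum_{t\in G_j}\sum_i p_{t,i}(\err_{t,i}-\alpha)$ for each $j\ge2$. The crucial move is to compare $p_{t,i}$ not with $p_{t_{j-1},i}$ but with the \emph{scaled} weight $\bar p_{t,i}\defeq p_{t_{j-1},i}\,G^i_{i:t_{j-1}}/G^i_{i:t}$ (with the convention $p_{t_{j-1},i}\defeq0$ for $i>t_{j-1}$), and to split
\begin{align*}
\sum_{t\in G_j}\sum_i p_{t,i}(\err_{t,i}-\alpha)=\underbrace{\sum_{t\in G_j}\sum_i(p_{t,i}-\bar p_{t,i})(\err_{t,i}-\alpha)}_{\text{(A)}}+\underbrace{\sum_{i\le t_{j-1}}p_{t_{j-1},i}\,G^i_{i:t_{j-1}}\sum_{t\in G_j}\frac{\err_{t,i}-\alpha}{G^i_{i:t}}}_{\text{(B)}}.
\end{align*}
Term (A) is at most $|G_j|\max_{t\in G_j}\sum_i|p_{t,i}-\bar p_{t,i}|\le L\max_{t\in G_j}\sum_i|p_{t,i}-\bar p_{t,i}|$, i.e.\ $L$ times the $j$-th summand of $S_\beta(T)$. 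In Term (B), for each $i\le t_{j-1}$ the inner sum telescopes (via the closed form above, since $G_j=\set{t_{j-1}+1,\dots,t_j}$) to $\tfrac1\eta(\hat{s}_{i,t_j+1}-\hat{s}_{i,t_{j-1}+1})$, which is at most $M$ in absolute value; moreover $G^i_{i:t_{j-1}}=\sqrt{\sum_{\tau=i}^{t_{j-1}}(\err_{\tau,i}-\alpha)^2}\le\sqrt T$ and $\sum_{i\le t_{j-1}}p_{t_{j-1},i}\le1$, so $|\text{(B)}|\le M\sqrt T$.

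Summing over $j$ and using $L\le2T^\beta$, $K\le2T^{1-\beta}$, and that the sum defining $S_\beta(T)$ runs to $\ceil{T^{1-\beta}}\ge K$ gives $|\sum_{t=1}^T(\wt{\err}_t-\alpha)|\le 2T^\beta S_\beta(T)+2MT^{3/2-\beta}$; dividing by $T$ and taking $\inf_{\beta\in(1/2,1)}$ yields the stated bound. I expect the only real difficulty to be book-keeping: forcing (A) to reduce \emph{exactly} to the $S_\beta(T)$ quantity is what dictates the $G^i_{i:t_{j-1}}/G^i_{i:t}$ factor in $\bar p_{t,i}$ --- it is precisely this factor that makes (B) telescope per expert, whereas the naive choice $\bar p_{t,i}=p_{t_{j-1},i}$ would leave $\sum_{t\in G_j}(\err_{t,i}-\alpha)$, which the bounded-iterate estimate does not control. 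One must also correctly absorb experts born inside a group (for which $\bar p_{t,i}=0$, so they land harmlessly in (A)) and the analytic continuation of dead experts needed so that $G^i_{i:t}$ and the telescoping remain meaningful past each expert's lifetime.
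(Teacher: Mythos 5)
Your proposal is correct and follows essentially the same route as the paper's proof: the same even grouping, the same comparison of $p_{t,i}$ with the scaled weight $p_{t_{j-1},i}G^i_{i:t_{j-1}}/G^i_{i:t}$ (your terms (A) and (B) are exactly the paper's two sums, written via $\err_{t,i}-\alpha$ instead of the increments $\hat{s}_{t+1,i}-\hat{s}_{t,i}$), the same per-expert telescoping controlled by bounded iterates, and the same $G^i_{i:t_{j-1}}\le\sqrt{T}$ and group-size bookkeeping. Your explicit justification that each expert's initialization $\hat{s}_{t-1}$ lies in $[-\eta,D+\eta]$ is a small point the paper glosses over, but otherwise the arguments coincide.
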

\begin{proof}
Fix any $i\in[T]$. As Algorithm~\ref{alg:cbce} chooses each expert $\cA_i$ to be \methodBasic{} (Algorithm~\ref{alg:ogd}), we have by~\eqref{eq:ogd_update} that for all $t\ge i$,
\begin{align}
\label{eq:s_diff}
\hat{s}_{t+1, i} - \hat{s}_{t,i} = \frac{\eta}{G^i_{i:t}} \cdot (\err_{t,i} - \alpha).
\end{align}

Now fix any $\beta\in(1/2, 1)$. For any group $2\le j\le \ceil{T^{1-\beta}}$ and $t\in G_j$, plugging the above into definition~\eqref{eq:expected-err} gives that
\begin{align*}
& \quad \wt{\err}_t - \alpha = \sum_{i=1}^t p_{t,i}(\err_{t,i} - \alpha) = \frac{1}{\eta} \sum_{i=1}^t p_{t,i} G^i_{i:t}(\hat{s}_{t+1,i} - \hat{s}_{t,i}) \\
& = \frac{1}{\eta} \sum_{i=1}^{t_{j-1}} p_{t_{j-1},i} G^i_{i:t_{j-1}}(\hat{s}_{t+1,i} - \hat{s}_{t,i}) + \frac{1}{\eta} \sum_{i=1}^t \paren{ p_{t,i}G^i_{i:t} - p_{t_{j-1},i} G^i_{i:t_{j-1}}} (\hat{s}_{t+1,i} - \hat{s}_{t,i}) \\
& = \frac{1}{\eta} \sum_{i=1}^{t_{j-1}} p_{t_{j-1},i} G^i_{i:t_{j-1}}(\hat{s}_{t+1,i} - \hat{s}_{t,i}) + \frac{1}{\eta} \sum_{i=1}^t \paren{ p_{t,i} - p_{t_{j-1},i} \frac{G^i_{i:t_{j-1}}}{G^i_{i:t}} } \cdot G^i_{i:t}(\hat{s}_{t+1,i} - \hat{s}_{t,i}).
\end{align*}
Summing this over $t\in G_j$ and noting that the coefficients $p_{t_{j-1},i} G^i_{i:t_{j-1}}$ in the first sum does not depend on $t$, we get
\begin{align*}
& \quad \abs{\sum_{t\in G_j} (\wt{\err}_t - \alpha)} \\
& \le \abs{ \frac{1}{\eta} \sum_{i=1}^{t_{j-1}} p_{t_{j-1},i} G^i_{i:t_{j-1}}(\hat{s}_{t_j+1,i} - \hat{s}_{t_{j-1}+1,i})} + |G_j|\cdot \max_{t\in G_j} \abs{ \frac{1}{\eta} \sum_{i=1}^t \paren{ p_{t,i} - p_{t_{j-1},i} \frac{G^i_{i:t_{j-1}}}{G^i_{i:t}} } \cdot G^i_{i:t}(\hat{s}_{t+1,i} - \hat{s}_{t,i}) } \\
& \le \frac{1}{\eta}\max_{i\in [t_{j-1}]} G^i_{i:t_{j-1}} \abs{\hat{s}_{t_j+1,i} - \hat{s}_{t_{j-1}+1, i}} + |G_j| \cdot \max_{t\in G_j} \sum_{i=1}^t \abs{ \frac{1}{\eta}  \paren{ p_{t,i} - p_{t_{j-1},i} \frac{G^i_{i:t_{j-1}}}{G^i_{i:t}} } \cdot G^i_{i:t}(\hat{s}_{t+1,i} - \hat{s}_{t,i}) } \\
& \stackrel{(i)}{\le} \frac{D+2\eta}{\eta} \sqrt{T} + |G_j| \cdot \max_{t\in G_j} \sum_{i=1}^t \abs{ p_{t,i} - p_{t_{j-1},i} \frac{G^i_{i:t_{j-1}}}{G^i_{i:t}} } \\
& \stackrel{(ii)}{\le} C\sqrt{T} + |G_j| \cdot \max_{t\in G_j} \sum_{i=1}^t \abs{ p_{t,i} - p_{t_{j-1},i} \frac{G^i_{i:t_{j-1}}}{G^i_{i:t}} }
\end{align*}
Above, (i) used $G^i_{i:t_{j-1}}\le \sqrt{t_{j-1}-i+1}\le \sqrt{T}$ by the definition of $G^i_{i:t_{j-1}}$, the bound $\abs{\hat{s}_{t_j+1,i} - \hat{s}_{t_{j-1}+1, i}}\le (D+2\eta)$ which follows by the fact that each expert is initialized within $[-\eta, D+\eta]$ and applying Lemma~\ref{lem:bounded_iterates_ogd}, and the bound $\abs{G^i_{i:t}(\hat{s}_{t+1,i} - \hat{s}_{t,i})} \le \eta$ by~\eqref{eq:s_diff}; (ii) used the fact that $\eta=D/\sqrt{3}$ in Algorithm~\ref{alg:cbce}, so that $(D+2\eta)/\eta=2+\sqrt{3}\eqdef C$ is an absolute constant. Also, note that for group $j=1$, we directly have
\begin{align*}
\abs{\sum_{t\in G_1} (\wt{\err}_t - \alpha)} \le |G_1|.
\end{align*}
Summing all the above bounds over $j\in\brac{ \ceil{T^{1-\beta}} }$ gives
\begin{align*}
& \quad \abs{\sum_{t=1}^T (\wt{\err}_t - \alpha)} \le \sum_{j=1}^{\ceil{T^{1-\beta}}} \abs{\sum_{t\in G_j} (\wt{\err}_t - \alpha)} \\
& \le \cO\Bigg( T^{3/2-\beta} + |G_1| + \sum_{j=2}^{\ceil{T^{1-\beta}}} |G_j| \times \max_{t\in G_j} \sum_{i=1}^t \abs{ p_{t,i} - p_{t_{j-1},i} \frac{G^i_{i:t_{j-1}}}{G^i_{i:t}} } \Bigg) \\
& \le \cO\Bigg( T^{3/2-\beta} + T^\beta \Bigg( 1 + \sum_{j=2}^{\ceil{T^{1-\beta}}} \max_{t\in G_j} \sum_{i=1}^t \abs{ p_{t,i} - p_{t_{j-1},i} \frac{G^i_{i:t_{j-1}}}{G^i_{i:t}}} \Bigg) \Bigg)
\end{align*}
Dividing both sides by $T$ proves the desired bound for this fixed $\beta$. Further taking supremum over $\beta\in(1/2, 1)$ gives the desired result.

\end{proof} 

\subsubsection{Discussions \& \methodBasic{} as a Special Case}
\label{appendix:subsume_discussion}

We first note that, the proof of Theorem~\ref{thm:cbce_coverage_nonexp_full} does not rely on the specific structure of either the expert weights $\set{p_{t,i}}_{i<t}$ or the active intervals. Therefore, the result of Theorem~\ref{thm:cbce_coverage_nonexp_full} holds generically for any other aggregation scheme over experts with arbitrary active intervals, in addition to that specified in Algorithm~\ref{alg:cbce}.

In particular, by setting $p_{t,1}=1$ and $p_{t,i}=0$ for $i\ge 2$, and defining the first expert $\cA_1$ to be active over $[T]$, Algorithm~\ref{alg:cbce} (either with or without the randomization, since there is only one active expert) recovers Algorithm~\ref{alg:ogd}. In this case, we show that $S_\beta(T)\le \tO(\alpha^{-2})$ for any $\beta\in(1/2,1)$, so that Theorem~\ref{thm:cbce_coverage_nonexp_full} (and its informal version in Theorem~\ref{thm:cbce_coverage_nonexp}) indeed subsumes Theorem~\ref{thm:ogd_coverage} as a special case by choosing $\beta=3/4$, as claimed in Section~\ref{sec:coverage}. 

We have
\begin{align}
\label{eq:sbeta_bound}
& \quad S_\beta(T) = 1 + \sum_{j=2}^{\ceil{T^{1-\beta}}} \max_{t\in G_j} \sum_{i=1}^t \abs{ p_{t,i} - p_{t_{j-1},i} \frac{G^i_{i:t_{j-1}}}{G^i_{i:t}}} \stackrel{(i)}{=} 1 + \sum_{j=2}^{\ceil{T^{1-\beta}}} \max_{t\in G_j} \abs{ 1 - \frac{G^1_{1:t_{j-1}}}{G^1_{1:t}}},
\end{align}
where (i) used the fact that $p_{t,1}=1$ and $p_{t,i}=0$ for $i\ge 2$. For any $t\in G_j$, we have
\begin{align*}
\abs{ 1 - \frac{G^1_{1:t_{j-1}}}{G^1_{1:t}}} = 1 - \frac{ \sqrt{\sum_{s=1}^{t_{j-1}} (\err_s - \alpha)^2} }{ \sqrt{\sum_{s=1}^{t} (\err_s - \alpha)^2} } \stackrel{(i)}{\le} \frac{ \sum_{s=t_{j-1}+1}^{t} (\err_s - \alpha)^2 }{ 2\sum_{s=1}^{t_{j-1}} (\err_s - \alpha)^2 } \stackrel{(ii)}{\le} \frac{t-t_{j-1}}{2\alpha^2 t_{j-1}} \stackrel{(iii)}{\le} \frac{\ceil{T^\beta}}{2\alpha^2 \cdot (j-1)\ceil{T^\beta}} = \frac{1}{2\alpha^2(j-1)},
\end{align*}
where (i) follows from the inequality 
$1-\frac{\sqrt{x}}{\sqrt{x+y}} = \frac{\sqrt{x+y}-\sqrt{x}}{\sqrt{x+y}}\le \frac{\sqrt{x+y}-\sqrt{x}}{\sqrt{x}} = \sqrt{1+\frac{y}{x}}-1\le \frac{y}{2x}$ for any $x,y\ge 0$; (ii) follows by the bound $|\err_s-\alpha|\in[\alpha, 1]$ for any $s$; (iii) follows by definition of the grouping~\eqref{eq:ogd_grouping}. Plugging the above bound into~\eqref{eq:sbeta_bound}, we obtain
\begin{align*}
S_\beta(T) \le 1 + \sum_{j=2}^{\ceil{T^{1-\beta}}} \frac{1}{2\alpha^2(j-1)} \le \cO\paren{ \alpha^{-2}\log T } = \tO(\alpha^{-2}),
\end{align*}
proving the claim.

\section{Distribution-Aware Coverage Guarantees for \method{}}
\label{appendix:proof_coverage_cbce}

In this section, we show that under mild density lower bound assumptions on the true radii, a probabilistic variant of the coverage error of \method{} (Algorithm~\ref{alg:cbce}) is bounded by $\tO(k^{-1/(2q)}) + \tO(({\rm Var}_k/k)^{1/q})$ for every interval of length $k$, where $q \ge 2$ is a parameter of the density lower bound assumption, and $\mathrm{Var}_k$ measures a certain variance (over intervals of length $k$) in the $1-\alpha$ conditional quantiles of the true radii. The proof builds on the strongly adaptive regret guarantee (in the quantile loss) for \method{} (Proposition~\ref{thm:cbce}), and bounding parameter estimation errors by excess quantile losses using a \emph{self-calibration inequality} type argument~\citep{steinwart2011pinball}.

\paragraph{Setting}
We consider the online conformal prediction setting described in Section~\ref{sec:related_conformal_ts}. For any $t\ge 1$, let $\mathcal{F}_t \defeq \sigma(\{X_i, \hat{s}_i, S_i\}_{i \in [t-1]}, X_t)$ be the $\sigma$-algebra by all observed data $\{(X_i,\hat{s}_i,S_i)\}_{i\le t-1}$ as well as $X_t$. Note that by definition of the online conformal prediction setting, the predicted radius $\hat{s}_t$ can only depend on information within $\cF_t$ as well as (possibly) external randomness. Consequently, we have $S_t \perp\!\!\!\perp \hat{s}_t \mid \mathcal{F}_{t}$, i.e.\ $S_t$ and $\hat{s}_t$ are conditionally independent given $\mathcal{F}_{t}$. 

We now state our assumptions on the distributions of the true radii.

\begin{assumption}[Density upper bounds]
\label{ass:pdf_ub}
For all $t \in [T]$, there exists a constant $L > 0$ such that $S_t \mid \mathcal{F}_{t}$ is a continuous random variable that is bounded within $[0, D]$ and has a density $f_t: [0, D] \to \R_{\ge 0}$ with $f_t(s) \le L / D$ for all $s \in [0, D]$.
\end{assumption}

\begin{assumption}[Density lower bounds]
\label{ass:pdf_lb}
For all $t \in [T]$, $S_t \mid \mathcal{F}_{t}$ is a continuous random variable that is bounded within $[0, D]$ and has a density $f_t: [0, D] \to \R_{\ge 0}$. With probability one, there exist constants $b > 0, q \ge 2, \Delta_t > 0$ such that
\begin{align}
\label{eqn:pdf_lower_fi}
    f_t(s) \ge \frac{2b}{D} \abs{\frac{2(s - s^\star_t)}{D}}^{q - 2}
\end{align}
for all $s \in [s^\star_t - \Delta_t, s^\star_t + \Delta_t]$, where
\begin{align}
  s^\star_t \defeq Q_{1-\alpha}(S_t \mid \mathcal{F}_{t})
\end{align}
is the $1 - \alpha$ conditional quantile of $S_t$.
\end{assumption}

As examples for Assumption~\ref{ass:pdf_lb}, the case where $q=2$ corresponds to a constant lower bound on the conditional density $f_t(\cdot)$ \emph{locally} around $s_t^\star$, which holds e.g.\ if each $f_t(\cdot)$ itself has a constant lower bound over $[0,D]$ (this is the assumption made by~\citet{gibbs2022faci}). A larger $q$ makes the density lower bound~\eqref{eqn:pdf_lower_fi} easier to satisfy and thus specifies a more relaxed assumption. We also note that $s_t^\star$ is itself a random variable which is measurable on $\cF_t$.

For any interval $I = [\tau, \tau + k - 1] \subseteq [T]$, our coverage result depends on a certain variance between $s_{\tau}^\star, \ldots, s_{\tau+k-1}^\star$. Concretely, define the interval quantile variation
\begin{align}
\label{eqn:interval_variation}
\mathrm{Var}_I \defeq \sum_{t = \tau}^{\tau+k-1} \E\qty[\bigg(\frac{s_t^\star}{D} - \frac{1}{k D} \sum_{i=\tau}^{\tau+k-1} \E[s_i^\star \mid \mathcal{F_{\tau}}]\bigg)^2].
\end{align} 
Then, the expected absolute difference between SAOCP's predictions $\hat{s}_\tau, \ldots, \hat{s}_{\tau+k-1}$ and the true conditional quantiles $s_\tau^\star, \ldots, s_{\tau+k-1}^\star$ is $\tO(k^{-1/(2q)}) + \tO(k^{-1/q} \mathrm{Var}_I^{1/q})$. Due to the Lipschitzness of the CDFs (by Assumption~\ref{ass:pdf_ub}), the coverage error has a similar form. So \method{} achieves better coverage when the interval quantile variation is lower, and it achieves approximately valid coverage as long as $\mathrm{Var}_I \le o(\abs{I})$. More formally, we have:

\begin{theorem}
\label{thm:cbce_coverage}
Let Assumptions~\ref{ass:pdf_ub} \&~\ref{ass:pdf_lb} hold. Fix any interval $I = [\tau, \tau + k - 1] \subseteq [T]$. Then, letting $\Delta_I = \min\{\Delta_t : t \in I\}$, Algorithm~\ref{alg:cbce} achieves quantile estimation error
\begin{align*}
\frac{1}{\abs{I}} \sum_{t \in I} \E\qty[\abs{\frac{\hat{s}_t - s_t^\star}{D}}] 
\le \mathcal{O}\qty(\frac{1}{b^{1/q}}\frac{D}{\Delta_I} \qty(\frac{\log T}{\abs{I}})^{1/(2q)}) + \mathcal{O}\qty(\frac{L^{1/q}}{b^{1/q}} \frac{D}{\Delta_I} \qty(\frac{\mathrm{Var}_I}{\abs{I}})^{1/q})
\end{align*}
and interval miscoverage error
\begin{align*}
    \frac{1}{\abs{I}} \sum_{t \in I} \abs{\P[Y_t \in \hat{C}_t(X_t)] - (1 - \alpha)} \le \mathcal{O}\qty(\frac{L}{b^{1/q}} \frac{D}{\Delta_I} \qty(\frac{\log T}{\abs{I}})^{1/(2q)}) + \mathcal{O}\qty(\frac{L^{1+1/q}}{b^{1/q}} \frac{D}{\Delta_I} \qty(\frac{\mathrm{Var}_I}{\abs{I}})^{1/q} ).
\end{align*}
\end{theorem}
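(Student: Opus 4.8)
The plan is to combine the strongly adaptive regret bound of \method{} in the quantile loss (Proposition~\ref{thm:cbce}) with a \emph{self-calibration inequality} that converts excess quantile loss into quantile-estimation error, and then pass from the latter to coverage error via Lipschitzness of the conditional CDFs. For each $t$, write $L_t(s) \defeq \E[\ell_{1-\alpha}(S_t, s)\mid\mathcal{F}_t]$ for the conditional population quantile loss and $F_t$ for the conditional CDF of $S_t$. Since $L_t'(s) = F_t(s) - (1-\alpha) = F_t(s) - F_t(s^\star_t)$, the function $L_t$ is convex and minimized at $s^\star_t$. Integrating the density lower bound~\eqref{eqn:pdf_lower_fi} twice gives, for $|s - s^\star_t|\le\Delta_t$,
\begin{align*}
L_t(s) - L_t(s^\star_t) \ge \frac{2^{q-1}b}{q(q-1)}\cdot\frac{|s - s^\star_t|^q}{D^{q-1}},
\end{align*}
and (up to the constant) convexity extends this to all $s$ in the bounded iterate range $[-\eta, D+\eta]\subset[-D,2D]$ — every $\hat{s}_t$ produced by Algorithm~\ref{alg:cbce} lies there, being a convex combination of \methodBasic{} iterates covered by Lemma~\ref{lem:bounded_iterates_ogd} — in the unified form
\begin{align*}
\abs{\frac{s - s^\star_t}{D}} \le C_q\,\frac{D}{\Delta_t}\paren{\frac{L_t(s) - L_t(s^\star_t)}{bD}}^{1/q}
\end{align*}
for an absolute constant $C_q$ depending only on $q$; the extra factor $D/\Delta_t$ absorbs the case $|s - s^\star_t| > \Delta_t$, where convexity only gives linear growth but forces $L_t(s) - L_t(s^\star_t) \ge \tfrac{2^{q-1}b}{q(q-1)}\Delta_t^q/D^{q-1}$ while $|s - s^\star_t| \le \cO(D)$.

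Next I would control the average excess population loss on the interval $I = [\tau,\tau+k-1]$ using Proposition~\ref{thm:cbce} with length $k = |I|$. On \emph{every} realization, $\sum_{t\in I}\ell^{(t)}(\hat{s}_t) \le \sum_{t\in I}\ell^{(t)}(\bar{s}_I) + 15D\sqrt{|I|(\log T + 1)}$ for the $\mathcal{F}_\tau$-measurable comparator $\bar{s}_I \defeq \tfrac{1}{|I|}\sum_{i\in I}\E[s^\star_i\mid\mathcal{F}_\tau]$. Since $\hat{s}_t$ and $\bar{s}_I$ are $\mathcal{F}_t$-measurable for $t\ge\tau$ while $S_t \perp \hat{s}_t \mid \mathcal{F}_t$, taking expectations and using $\E[\ell^{(t)}(s)\mid\mathcal{F}_t] = L_t(s)$ for $\mathcal{F}_t$-measurable $s$ yields
\begin{align*}
\sum_{t\in I}\E[L_t(\hat{s}_t) - L_t(s^\star_t)] \le \sum_{t\in I}\E[L_t(\bar{s}_I) - L_t(s^\star_t)] + 15D\sqrt{|I|(\log T+1)}.
\end{align*}
For the approximation term, the density upper bound (Assumption~\ref{ass:pdf_ub}) makes $F_t$ be $(L/D)$-Lipschitz, hence $L_t(\bar{s}_I) - L_t(s^\star_t) \le \tfrac{L}{2D}(\bar{s}_I - s^\star_t)^2$; and by the choice of $\bar{s}_I$ and the definition~\eqref{eqn:interval_variation} one has $\sum_{t\in I}\E[(\bar{s}_I - s^\star_t)^2] = D^2\,\mathrm{Var}_I$, so this term is at most $\tfrac{LD}{2}\mathrm{Var}_I$.

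To assemble: divide the self-calibration inequality by $|I|$, use $\Delta_t\ge\Delta_I$ on $I$, apply Jensen to the concave map $x\mapsto x^{1/q}$, substitute the bound from the previous step, and split via $(a+b)^{1/q}\le a^{1/q}+b^{1/q}$; this produces the stated bound on $\tfrac{1}{|I|}\sum_{t\in I}\E[|(\hat{s}_t - s^\star_t)/D|]$ (with $\log T+1 = \cO(\log T)$). For coverage, note $\P[Y_t\in\hat{C}_t(X_t)] = \P[S_t\le\hat{s}_t] = \E[F_t(\hat{s}_t)]$ by $S_t\perp\hat{s}_t\mid\mathcal{F}_t$, while $1-\alpha = F_t(s^\star_t)$; so $|\P[Y_t\in\hat{C}_t(X_t)] - (1-\alpha)| \le L\,\E[|(\hat{s}_t - s^\star_t)/D|]$ by Lipschitzness of $F_t$, and averaging over $t\in I$ gives the interval miscoverage bound.

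The main obstacle is Step~1: producing a \emph{single} self-calibration inequality valid over the whole iterate range with the correct $D/\Delta_I$ scaling, which is exactly where the two-sided and possibly-degenerate density lower bound~\eqref{eqn:pdf_lower_fi} must be turned into a clean polynomial growth condition, and the bookkeeping for $|s - s^\star_t| > \Delta_t$ (only linear growth available) is the easiest place to lose the right power of $D/\Delta_I$. A secondary subtlety is the interchange of expectation and the hindsight infimum in Step~2, which is legitimate only because the SARegret bound holds on every realization and hence against the random, $\mathcal{F}_\tau$-measurable comparator $\bar{s}_I$.
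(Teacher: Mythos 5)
Your proposal is correct and follows essentially the same route as the paper's proof: a self-calibration inequality for the pinball loss (which you re-derive by integrating the density lower bound~\eqref{eqn:pdf_lower_fi} and extending past $\Delta_t$ by convexity, where the paper instead invokes \citet[Theorem 2.7]{steinwart2011pinball} via Lemma~\ref{lem:regret_distance}) converts excess conditional quantile loss into quantile-estimation error with the $D/\Delta_I$ factor, the excess loss over $I$ is split into the pathwise SARegret bound of Proposition~\ref{prop:saregret} plus a term bounded by $\tfrac{LD}{2}\mathrm{Var}_I$ using the density upper bound, and Lipschitzness of $F_t$ yields the coverage statement. Your comparison against the random $\mathcal{F}_\tau$-measurable comparator $\bar{s}_I$ is precisely how the $\mathrm{Var}_I$ term (defined with the $\mathcal{F}_\tau$-conditional means) must be produced, and is if anything a more careful rendering of the paper's ``$\inf_s$'' step, so there is no gap.
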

In Theorem~\ref{thm:cbce_coverage}, $q$ is a parameter quantifying the difficulty of lower bounding the distribution of $S_t \mid \mathcal{F}_t$ away from its $1 - \alpha$ conditional quantile $s^\star_t$. If $q$ is higher, then closeness to $s^\star_t$ is less correlated with the expected regret on the quantile loss~\eqref{eq:quantile_loss}. Meanwhile, the term $D/\Delta_I$ grows larger as $\alpha$ grows smaller. The inclusion of this term mirrors the inclusion of $\alpha^{-2}$ in Theorem~\ref{thm:ogd_coverage}, and it indicates that more extreme quantiles are harder to learn.

\begin{proof-of-theorem}[\ref{thm:cbce_coverage}]
The key ingredient is the technical Lemma~\ref{lem:regret_distance}, which uses the expected dynamic regret of a sequence $\hat{s}_\tau, \ldots, \hat{s}_{\tau + k - 1}$ to upper bound the expected distance between that sequence and the true $1 - \alpha$ conditional quantiles of $S_\tau, \ldots, S_{\tau+k-1}$. We decompose the expected dynamic regret into the interval regret of Algorithm~\ref{alg:cbce} (which we can upper bound by Proposition~\ref{prop:saregret}) and a term which we can use $\mathrm{Var}_I$ to upper bound. The desired coverage bound follows by the Lipschitzness of the CDFs of the $S_t$'s (implied by the density upper bound Assumption~\ref{ass:pdf_ub}).  In this proof, we use $\E_{\mathcal{F}_t}[X]$ as short-hand for the conditional expectation $\E[X \mid \mathcal{F}_t]$.
\begin{lemma}[Bounding quantile estimation error by dynamic regret]
    \label{lem:regret_distance}
    Fix any interval $I = [\tau, \tau + k - 1] \subseteq [T]$, and let $\Delta_I = \min\{\Delta_t : t \in I\}$. Under Assumption~\ref{ass:pdf_lb}, we have
    \begin{align*}
    \sum_{t \in I} \E\qty[\abs{\frac{\hat{s}_t - s^\star_t}{D}}]^q &\le 
    \frac{2q(q-1)}{bD} \qty(\frac{D}{2\Delta_I})^q \sum_{t \in I} \E\qty[\ell_{1-\alpha}(S_t, \hat{s}_t) - \ell_{1-\alpha}(S_t, s_t^\star)].
    \end{align*}
\end{lemma}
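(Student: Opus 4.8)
The plan is to prove a \emph{self-calibration inequality} for the pinball loss, in the spirit of~\citet{steinwart2011pinball}, pointwise in $t$, and then sum over the interval. Since $x\mapsto x^q$ is convex on $\R_{\ge 0}$ for $q\ge 2$, Jensen's inequality gives $\E\brac{\abs{(\hat s_t - s_t^\star)/D}}^q \le \E\brac{\abs{(\hat s_t - s_t^\star)/D}^q}$ for each $t$, so it suffices to bound $\sum_{t\in I}\E\brac{\abs{(\hat s_t - s_t^\star)/D}^q}$ by the stated right-hand side, which by linearity reduces to a per-$t$ statement. Condition on $\mathcal{F}_t$ and write $h_t(s)\defeq\E\brac{\ell_{1-\alpha}(S_t,s)\mid\mathcal{F}_t}$ and $F_t$ for the conditional CDF of $S_t\mid\mathcal{F}_t$; since $\hat s_t$ is conditionally independent of $S_t$ given $\mathcal{F}_t$, we have $\E\brac{\ell_{1-\alpha}(S_t,\hat s_t)-\ell_{1-\alpha}(S_t,s_t^\star)\mid\mathcal{F}_t}=\E\brac{h_t(\hat s_t)\mid\mathcal{F}_t}-h_t(s_t^\star)$, so it is enough to show, almost surely,
\begin{align*}
\abs{\frac{\hat s_t - s_t^\star}{D}}^q \le \frac{2q(q-1)}{bD}\paren{\frac{D}{2\Delta_I}}^q\paren{h_t(\hat s_t)-h_t(s_t^\star)}.
\end{align*}

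For the self-calibration step, note that $h_t$ is convex and $C^1$ with $h_t'(s)=\E\brac{\alpha-\indic{s<S_t}\mid\mathcal{F}_t}=F_t(s)-(1-\alpha)$ --- exactly the conditional expectation of the loss gradient in~\eqref{eq:grad_loss} --- and that $F_t$ is continuous with $F_t(s_t^\star)=1-\alpha$ by Assumption~\ref{ass:pdf_lb}, so $s_t^\star$ minimizes $h_t$. Assume without loss of generality $\hat s_t\ge s_t^\star$ (the case $\hat s_t<s_t^\star$ is symmetric, using the left half of the window in Assumption~\ref{ass:pdf_lb}). Then, setting $\delta_t\defeq\min\{\hat s_t-s_t^\star,\Delta_t\}$ and using $F_t(u)\ge F_t(s_t^\star)$ for $u\ge s_t^\star$,
\begin{align*}
h_t(\hat s_t)-h_t(s_t^\star)=\int_{s_t^\star}^{\hat s_t}\paren{F_t(u)-F_t(s_t^\star)}\,du \ge \int_{s_t^\star}^{s_t^\star+\delta_t}\paren{F_t(u)-F_t(s_t^\star)}\,du.
\end{align*}
For $u\in[s_t^\star,s_t^\star+\Delta_t]$ the density lower bound~\eqref{eqn:pdf_lower_fi} gives $F_t(u)-F_t(s_t^\star)=\int_{s_t^\star}^u f_t(v)\,dv\ge\frac{b}{q-1}\paren{2(u-s_t^\star)/D}^{q-1}$; integrating this once more over $[s_t^\star,s_t^\star+\delta_t]$ yields $h_t(\hat s_t)-h_t(s_t^\star)\ge\frac{bD}{2q(q-1)}\paren{2\delta_t/D}^q$.

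It remains to replace $\delta_t$ by $\abs{\hat s_t-s_t^\star}$: if $\hat s_t-s_t^\star\le\Delta_t$ then $\delta_t=\hat s_t-s_t^\star$; otherwise $\delta_t=\Delta_t\ge\Delta_I$, which combined with the bounded-iterates bound $\abs{\hat s_t-s_t^\star}\le D$ (from Lemma~\ref{lem:bounded_iterates_ogd} applied to each SF-OGD expert together with $s_t^\star\in[0,D]$; the $O(\eta)=O(D)$ slack in the iterate range $[-\eta,D+\eta]$ is absorbed into constants) gives $\delta_t\ge(\Delta_I/D)\abs{\hat s_t-s_t^\star}$. In either case $2\delta_t/D\ge(2\Delta_I/D)\cdot\abs{\hat s_t-s_t^\star}/D$, so substituting into the bound above and rearranging gives the desired per-$t$ inequality; taking conditional then total expectation, summing over $t\in I$, and applying Jensen termwise finishes the proof. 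The main obstacle is precisely this last case analysis: the density lower bound of Assumption~\ref{ass:pdf_lb} only controls $F_t$ inside the window $[s_t^\star-\Delta_t,s_t^\star+\Delta_t]$, so when $\hat s_t$ falls outside it one must truncate the integral at the window edge and then recover the full distance $\abs{\hat s_t-s_t^\star}$ using boundedness of the iterates --- which is why the constant is $D/\Delta_I$ rather than absolute. A secondary technical point is the identity $h_t'=F_t-(1-\alpha)$ with minimizer $s_t^\star$, which uses continuity of $S_t\mid\mathcal{F}_t$ and the explicit subgradient formula~\eqref{eq:grad_loss}.
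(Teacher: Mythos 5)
Your argument is correct and follows the same overall skeleton as the paper's proof (condition so that $\hat s_t$ and $s_t^\star$ are fixed, apply a pointwise self-calibration inequality for the pinball loss, then take expectations, use Jensen to pass from $\E[\abs{\cdot}]^q$ to $\E[\abs{\cdot}^q]$, and sum over $t\in I$), but you execute the key step differently: the paper rescales $S_t$ to $[-1,1]$ and simply \emph{cites} \citet[Theorem 2.7]{steinwart2011pinball} for the self-calibration inequality, whereas you re-derive it from scratch by writing $h_t'(u)=F_t(u)-(1-\alpha)$, integrating the density lower bound of Assumption~\ref{ass:pdf_lb} twice, truncating the integral at the window edge $\Delta_t$, and recovering the full distance via boundedness of the iterates. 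Your version is self-contained and makes explicit where the factor $D/\Delta_I$ comes from (the truncation at the window edge), which the paper leaves hidden inside the cited theorem; the paper's version is shorter and inherits the constants directly. (The paper also conditions on $\mathcal{G}_t=\sigma(\mathcal{F}_t,\hat s_t)$ rather than using conditional independence as you do, but these are equivalent.)

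Two small caveats, neither fatal. First, your case analysis needs $\Delta_I\le D$ for the step $2\delta_t/D\ge(2\Delta_I/D)\abs{\hat s_t-s_t^\star}/D$ when $\delta_t=\hat s_t-s_t^\star$; this is implicit in the setting (the support is $[0,D]$, and the cited Steinwart--Christmann result likewise requires the window parameter to be at most the support width), so it is a shared, benign assumption rather than a gap. Second, as you acknowledge, $\abs{\hat s_t-s_t^\star}\le D$ is not literally true: the aggregated iterates lie in $[-\eta,D+\eta]$ with $\eta=D/\sqrt{3}$, so your case-two bound really yields the lemma with $(D/(2\Delta_I))^q$ replaced by $((D+\eta)/(2\Delta_I))^q$, i.e.\ the stated constant only up to a factor $(1+3^{-1/2})^q$. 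This is immaterial for Theorem~\ref{thm:cbce_coverage}, which only uses $\mathcal{O}(\cdot)$ bounds, and the paper's own proof has the mirror-image wrinkle of applying the cited self-calibration inequality at a prediction $\hat s_t$ that may fall slightly outside $[0,D]$; still, if you want the lemma's constant exactly you should either clip $\hat s_t$ to $[0,D]$ (which can only decrease both the distance to $s_t^\star$ and the excess loss, wait---only the distance decreases for free, the excess loss also decreases since $s_t^\star\in[0,D]$ and $h_t$ is convex with minimizer $s_t^\star$, so clipping is a legitimate reduction) or state the constant with $D+\eta$ in place of $D$.
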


To prove Theorem~\ref{thm:cbce_coverage}, we follow a similar technique to \citet{zhang2018dynamic} and decompose the expected dynamic regret
\begin{align*}
    \E\qty[\sum_{t \in I} \ell_{1-\alpha}(S_t, \hat{s}_t) - \ell_{1-\alpha}(S_t, s_t^\star)] = \underbrace{\E\qty[\sum_{t \in I} \ell_{1-\alpha}(S_t, \hat{s}_t) - \inf_s \sum_{t \in I} \ell_{1-\alpha}(S_t, s)]}_{A} + \underbrace{\E\qty[\inf_s \sum_{t \in I} \ell_{1-\alpha}(S_t, s) - \ell_{1-\alpha}(S_t, s_t^\star)]}_{B}.
\end{align*}
We first observe that term $A$ is simply the expected interval regret on $I$. Since Proposition~\ref{prop:saregret} bounds the strongly adaptive regret with probability one, we can bound $A \le 15 D \sqrt{\abs{I} (\log T + 1)}$. Now, we analyze term $B$, and note that
\begin{align*}
\E\qty[\inf_s \sum_{t \in I} \ell_{1-\alpha}(S_t, s) - \ell_{1-\alpha}(S_t, s_t^\star)] &\le \inf_s \E\qty[\sum_{t \in I} \E_{\mathcal{F}_t}[\ell_{1-\alpha}(S_t, s) - \ell_{1-\alpha}(S_t, s_t^\star))]] 
\end{align*}
by Jensen's inequality and the tower property of conditional expectation. Now, for any $t \in I$ and $s \in [0, D]$,
\begin{align*}
&\E_{\mathcal{F}_t}[\ell_{1-\alpha}(S_t, s) - \ell_{1-\alpha}(S_t, s_t^\star)] \\
&= \E_{\mathcal{F}_t}[ (1 - \alpha) (S_t - s) \one[S_t > s] + \alpha (s - S_t) \one[S_t \le s] - (1 - \alpha)(S_t - s_t^\star)\one[S_t > s_t^\star] - \alpha(s_t^\star - S_t) \one[S_t \le s_t^\star] ] \\
&= \begin{cases}
\E_{\mathcal{F}_t}[(S_t - s)\one[S_t > s] - \alpha(S_t - s) - \alpha (s_t^\star - S_t)] & S_t \le s_t^\star \\
\E_{\mathcal{F}_t}[(s - S_t)\one[S_t \le s] - (1 - \alpha) (s - S_t) - (1 - \alpha) (s_t^\star - S_t)] & S_t > s_t^\star
\end{cases} \\
&= \begin{cases}
\E_{\mathcal{F}_t}[(S_t - s)\one[s < S_t \le s_t^\star] + \alpha(s - s_t^\star)] & S_t \le s_t^\star \\
\E_{\mathcal{F}_t}[(s - S_t)\one[s_t^\star \le S_t \le s] - (1 - \alpha) (s - s_t^\star)] & S_t > s_t^\star \\
\end{cases} \\
&\overset{(i)}{=} \E_{\mathcal{F}_t}[(S_t - s)\one[s \le S_t \le s_t^\star] + (s - S_t)\one[s_t^\star \le S_t \le s]] + \alpha (s - s_t^\star) \P_{\mathcal{F}_t}[S_t \le s_t^\star] - (1 - \alpha) (s - s_t^\star) \P_{\mathcal{F}_t}[S_t \le s_t^\star] \\
&\overset{(ii)}{=} \E_{\mathcal{F}_t}[(S_t - s)\one[s \le S_t \le s_t^\star] + (s - S_t)\one[s_t^\star \le S_t \le s]] \\
&=\abs{\int_{s}^{s_t^\star} (x - s) f_t(x) \dd x} \overset{(iii)}{\le} \frac{L(s_t^\star - s)^2}{2D}
\end{align*}
Above, (i) uses the fact that $s_t^\star$ is $\mathcal{F}_t$-measurable, (ii) uses the fact that $\P_{\mathcal{F}_t}[S_t \le s_t^\star] = 1 - \alpha$ by definition, and (iii) uses the density upper bound $f_t(x) \le \frac{L}{D}$ (Assumption~\ref{ass:pdf_ub}). Therefore, taking the infimum over $s\in\R$ and by definition of ${\rm Var}_I$~\eqref{eqn:interval_variation}, we have $B \le \frac{LD}{2} \mathrm{Var}_I$, and
\begin{align*}
    \E\qty[\sum_{t \in I} \ell_{1-\alpha}(S_t, \hat{s}_t) - \ell_{1-\alpha}(S_t, s_t^\star)] &\le 15D \sqrt{\abs{I} (\log T + 1)} + \frac{LD}{2} \mathrm{Var}_I.
\end{align*}
We combine this result with the power-mean inequality, Lemma~\ref{lem:regret_distance}, and the facts that $(q(q - 1))^{1/q} = \mathcal{O}(1)$ and $(x + y)^{1/q} \le x^{1/q} + y^{1/q}$ to prove the first part of Theorem~\ref{thm:cbce_coverage},
\begin{align}
\label{eq:cbce_quantile_distance_full}
\frac{1}{\abs{I}} \sum_{t \in I} \E\qty[\abs{\frac{\hat{s}_t - s_t^\star}{D}}] 
&\le \qty(\frac{1}{\abs{I}} \sum_{t \in I} \E\qty[\abs{\frac{\hat{s}_t - s_t^\star}{D}}]^q)^{1/q}
\le \mathcal{O}\qty(\frac{D}{b^{1/q} \Delta_I} \qty(\frac{\log T}{\abs{I}})^{1/(2q)}) + \mathcal{O}\qty(\frac{D}{\Delta_I} \qty( \frac{L \mathrm{Var}_I}{b \abs{I}})^{1/q}).
\end{align}
To prove the desired coverage bound, we note that
\begin{align*}
\frac{1}{\abs{I}} \sum_{t \in I} \abs{\P[Y_t \in \hat{C}_t(X_t)] - (1 - \alpha)}
&= \frac{1}{\abs{I}} \sum_{t \in I} \abs{\E\qty[ \E_{\mathcal{F}_t}[ \one[S_t \le \hat{s}_t] - \one[S_t \le s_t^\star]] ]}
\le \frac{1}{\abs{I}} \sum_{t \in I} \E\qty[ \abs{F_t(\hat{s}_t) - F_t(s_t^\star)} ],
\end{align*}
where the final inequality uses Jensen's inequality and the fact that $\E_{\mathcal{F}_t}[\one[S_t \le s]] = F_t(s)$ is the CDF of $S_t \mid \mathcal{F}_t$. The result follows by combining~\eqref{eq:cbce_quantile_distance_full} with Assumption~\ref{ass:pdf_ub}, which implies that the CDF $F_t$ is $L/D$-Lipschitz.
\end{proof-of-theorem}

\begin{proof-of-lemma}[\ref{lem:regret_distance}]
We first consider the following general situation, where $S$ is any continuous random variable bounded in $[0, D]$ with a density $f$. Define $s^\star = \q{1-\alpha}{S}$. Let $g$ be the density of the normalized random variable $X = \frac{2S - D}{D} \in [-1, 1]$, and let $x^\star = \q{1-\alpha}{X} = \frac{2s^\star - D}{D}$. As in \citet[Example 2.3]{steinwart2011pinball}, assume that there exist constants $b > 0, q \ge 2, \Delta > 0$ such that
\begin{align}
\label{eqn:pdf_s}
    g(x) \ge b \abs{x -x^\star}^{q-2} \iff f(s) \ge \frac{2b}{D} \abs{\frac{2(s - s^\star)}{D}}^{q - 2} 
\end{align}
for all $s \in [s^\star - \Delta, s^\star + \Delta]$. Let $\beta = \frac{b}{q-1}$ and  $\gamma = \beta (\frac{2\Delta}{D})^{q-1}$. By \citet[Theorem 2.7]{steinwart2011pinball},
\begin{align*}
    \abs{x - x^\star} &\le 2^{1-1/q} q^{1/q} \gamma^{-1/q} \qty(\E[\ell_{1-\alpha}(X, x) - \ell_{1-\alpha}(X, x^\star)] )^{1/q}  \\
    &= 2 \qty(\frac{q}{2 \gamma} \E[\ell_{1-\alpha}(X, x) - \ell_{1-\alpha}(X, x^\star)])^{1/q} \\
    &= 2 \qty(\frac{q(q-1)}{b} \qty(\frac{D}{2 \Delta})^{q-1} \E[\ell_{1-\alpha}(X, x) - \ell_{1-\alpha}(X, x^\star)] )^{1/q}
\end{align*}
Since $\abs{x - x^\star} = \frac{2}{D} \abs{s - s^\star}$, $\ell_{1-\alpha}(X, x) = \frac{2}{D}\ell_{1-\alpha}(S, s)$, and we can obtain
\begin{align*}
    \abs{\frac{s - s^\star}{D}}^q &\le \frac{2q(q-1)}{bD} \qty(\frac{D}{2 \Delta})^q \E[\ell_{1-\alpha}(S, s) - \ell_{1-\alpha}(S, s^\star)].
\end{align*}

Lemma~\ref{lem:regret_distance} now 
follows by fixing any $t \in I$, defining $\mathcal{G}_t = \sigma(\mathcal{F}_t, \hat{s}_t)$ and noticing that $S_t \mid \mathcal{G}_t \overset{\mathrm{dist}}{=} S_t \mid \mathcal{F}_t$ (as $\mathcal{G}_t$ only involves possibly an additional ``external'' randomness of the online prediction algorithm over $\cF_t$), so $s^\star_t = Q_{1-\alpha}(S_t \mid \mathcal{F}_t) = Q_{1-\alpha}(S_t \mid \mathcal{G}_t)$. Since $\hat{s}_t$ and $s_t^\star$ are $\mathcal{G}_t$-measurable, we can bound
\begin{align*}
    \E_{\mathcal{G}_t}\qty[\abs{\frac{\hat{s}_t - s_t^\star}{D}}^q] \le \frac{2q(q-1)}{bD} \qty(\frac{D}{2 \Delta_I})^q \E_{\mathcal{G}_t}[\ell_{1-\alpha}(S_t, \hat{s}_t) - \ell_{1-\alpha}(S_t, s_t^\star)].
\end{align*}
The result follows by taking unconditional expectations of both sides, observing that $\E[\abs{s_t - s_t^\star}]^q \le \E[\abs{s_t - s_t^\star}^q]$ by Jensen's inequality, and summing over all $t \in I$.
\end{proof-of-lemma}

\section{Additional Time Series Experiments}
\label{appendix:more_ts}
Here, we report the results of our time series experiments (as described in Section~\ref{sec:exp_time_series}) on M4 Weekly and NN5 Daily (the two smaller datasets) in Tables~\ref{tab:m4_weekly} and \ref{tab:nn5_daily}, respectively. The results on M4 Weekly (Table~\ref{tab:m4_weekly}) are quite similar to those on M4 Daily (Table~\ref{tab:m4_daily}). All methods do reasonably well on NN5. Considering that even split conformal attains strong worst-case local coverage error, the residuals likely have a near-exchangeable distribution on NN5 \citep[Theorems 2a, 3]{barber2022nexcp}.

\begin{table*}
    \centering
\begin{tabular}{l|cccc|cccc|cccc}
\toprule
 & \multicolumn{4}{c|}{LGBM (MAE = 0.19)} & \multicolumn{4}{c|}{ARIMA (MAE = 0.09)} & \multicolumn{4}{c}{Prophet (MAE = 0.41)} \\
Method & Coverage & Width & $\mathrm{LCE}_k$ & $\SAReg_k$ & Coverage & Width & $\mathrm{LCE}_k$ & $\SAReg_k$ & Coverage & Width & $\mathrm{LCE}_k$ & $\SAReg_k$ \\
\midrule
SCP & \color{red} 0.706 & 0.288 & 0.443 & 0.040 & \color{ForestGreen} 0.911 & 0.203 & 0.257 & 0.017 & \color{red} 0.555 & 0.453 & 0.571 & 0.058 \\
NExCP & \color{red} 0.764 & 0.268 & 0.411 & 0.014 & \color{ForestGreen} 0.904 & 0.185 & 0.254 & 0.009 & \color{red} 0.681 & 0.462 & 0.508 & 0.017 \\
FACI & \color{red} 0.822 & 0.254 & 0.282 & 0.008 & \color{ForestGreen} 0.899 & 0.169 & 0.205 & \bfseries 0.007 & \color{red} 0.776 & 0.458 & 0.338 & 0.007 \\
\methodBasic{} & \color{ForestGreen} 0.872 & 0.262 & 0.208 & 0.011 & \color{ForestGreen} 0.901 & 0.170 & \underline{0.191} & 0.010 & \color{ForestGreen} 0.871 & 0.475 & 0.209 & 0.009 \\
FACI-S & \color{ForestGreen} 0.863 & \bfseries 0.240 & \underline{0.207} & \underline{0.010} & \color{ForestGreen} 0.891 & \bfseries 0.152 & 0.197 & 0.010 & \color{ForestGreen} 0.856 & \bfseries 0.459 & \underline{0.197} & \bfseries 0.007 \\
\method{} & \color{ForestGreen} 0.872 & \underline{0.248} & \bfseries 0.170 & \bfseries 0.008 & \color{ForestGreen} 0.886 & \underline{0.155} & \bfseries 0.178 & \bfseries 0.007 & \color{ForestGreen} 0.868 & \underline{0.473} & \bfseries 0.158 & \bfseries 0.007 \\
\bottomrule
\end{tabular}
\caption{Results on M4 Weekly (359 time series) with target coverage $1 - \alpha = 0.9$ and interval size $k = 20$. Best results are {\bfseries bold}, while second best are \underline{underlined}, as long as the method's global coverage is in $(0.85, 0.95)$ (green). For all base predictors, \method{} achieves the best local coverage error, best strongly adaptive regret, and second-best width. The only methods which achieve global coverage in $(0.85, 0.95)$ for LGBM and Prophet are the ones that predict $\hat{s}_{t+1}$ directly, not as a quantile of $S_1, \ldots, S_t$.}
\label{tab:m4_weekly}
\end{table*}

\begin{table*}
    \centering
\begin{tabular}{l|cccc|cccc|cccc}
\toprule
 & \multicolumn{4}{c|}{LGBM (MAE = 0.08)} & \multicolumn{4}{c|}{ARIMA (MAE = 0.07)} & \multicolumn{4}{c}{Prophet (MAE = 0.11)} \\
Method & Coverage & Width & $\mathrm{LCE}_k$ & $\SAReg_k$ & Coverage & Width & $\mathrm{LCE}_k$ & $\SAReg_k$ & Coverage & Width & $\mathrm{LCE}_k$ & $\SAReg_k$ \\
\midrule
SCP & \color{ForestGreen} 0.932 & 0.205 & \bfseries 0.123 & 0.012 & \color{ForestGreen} 0.938 & 0.199 & \underline{0.126} & 0.012 & \color{ForestGreen} 0.912 & 0.221 & 0.154 & 0.010 \\
NExCP & \color{ForestGreen} 0.922 & 0.187 & 0.133 & 0.011 & \color{ForestGreen} 0.922 & 0.175 & 0.136 & 0.012 & \color{ForestGreen} 0.908 & 0.208 & 0.146 & 0.010 \\
FACI & \color{ForestGreen} 0.910 & \bfseries 0.179 & 0.130 & \bfseries 0.010 & \color{ForestGreen} 0.906 & \bfseries 0.162 & 0.132 & \bfseries 0.011 & \color{ForestGreen} 0.900 & \bfseries 0.200 & 0.131 & \bfseries 0.009 \\
\methodBasic{} & \color{ForestGreen} 0.904 & 0.190 & \bfseries 0.123 & 0.011 & \color{ForestGreen} 0.901 & 0.176 & 0.130 & 0.012 & \color{ForestGreen} 0.898 & 0.216 & \underline{0.128} & 0.011 \\
FACI-S & \color{ForestGreen} 0.909 & \bfseries 0.179 & 0.127 & \bfseries 0.010 & \color{ForestGreen} 0.910 & \underline{0.166} & \bfseries 0.123 & \bfseries 0.011 & \color{ForestGreen} 0.904 & \underline{0.203} & \bfseries 0.125 & \bfseries 0.009 \\
\method{} & \color{ForestGreen} 0.892 & \bfseries 0.179 & 0.125 & 0.012 & \color{ForestGreen} 0.895 & \underline{0.166} & 0.127 & 0.012 & \color{ForestGreen} 0.885 & 0.207 & \underline{0.128} & 0.011 \\
\bottomrule
\end{tabular}
\caption{Results on NN5 Daily (111 time series) with target coverage $1 - \alpha = 0.9$ and interval size $k = 20$. Best results are {\bfseries bold}, while second best are \underline{underlined}, as long as the method's global coverage is in $(0.85, 0.95)$ (green). All methods perform similarly well.}
\label{tab:nn5_daily}
\end{table*}

\section{Additional Experimental Details}
\label{appendix:exp_details}
We provide specific implementation details of all methods here. We use $Q_{1-\alpha}(\cdot)$ to denote the (empirical) $(1-\alpha)$-th quantile of a set of scalars, defined as
\begin{align}
\label{eq:def_Q}
Q_{1-\alpha}\paren{ \set{S_\tau}_{\tau=1}^t } \defeq \inf\set{s\in\R: \frac{1}{t}\sum_{\tau=1}^t \indic{S_\tau \le s} \ge 1-\alpha}.
\end{align}    

\begin{enumerate}[topsep=0pt, itemsep=0pt]
    \item SCP: Split conformal prediction \citep{vovk2005alrw} predicts $\hat{s}_{t+1} = Q_{1-\alpha}(\frac{1}{t}\sum_{\tau=1}^{t} \delta_{S_{t}})$.
    \item NExCP: Non-exchangeable conformal prediction extends SCP by using a {\em weighted} quantile function $\hat{s}_{t+1} = Q_{1-\alpha}(\frac{1}{t}\sum_{\tau=1}^{t} w_\tau \delta_{S_{t}})$. \citet{barber2022nexcp} suggest using geometrically decaying weights to adapt NExCP to situations with distribution shift, so we use $w_t = (1 - 3 \alpha / 4)^{1-t} w_1$.
    \item FACI: Fully Adaptive Conformal Inference has 4 hyperparameters: the individual expert learning rates $\gamma_1, \ldots, \gamma_N$; a target interval length $k$; and the meta-algorithm learning rate $\eta$; and a smoothing parameter $\sigma$. We set $k = 100$ and follow \citet{gibbs2022faci} to set $N = 8$, $\sigma = \frac{1}{2k}$, $\gamma = \{0.001, 0.002, 0.004, 0.008, 0.016, 0.032, 0.064, 0.128\}$, and
    \[\eta_t = \sqrt{\frac{\log(Nk) + 2}{\sum_{\tau=t-k}^{t-1}\E[\ell_\alpha(\beta_t, \alpha_t)^2]}},\]
    where the expectation is over $\alpha_t$. We also tried $k = 20$ for the time series experiments (to match our evaluation metrics), but the results were worse.
    \item \methodBasic{}: Scale-Free Online Gradient Descent. The only hyperparameter is the maximum radius $D$. For the time series experiments (Section~\ref{sec:exp_time_series}, Appendix~\ref{appendix:ensemble}), we set $D / \sqrt{3}$ for each horizon $h$ equal to the largest $h$-step residual observed on the calibration split of the training data. For the $m$-way image classification experiments (Section~\ref{sec:exp_image}, Appendix~\ref{appendix:more_cv}), we set $D = 1 + \lambda \sqrt{m - k_{reg}}$, where $\lambda$ and $k_{reg}$ are the width regularization parameters in~\eqref{eq:raps}.
    \item FACI-S: FACI applied to $S_t$ rather than $\alpha_t$. The hyperparameters are the same, except the losses used to compute $\eta_t$ are $\ell_{1-\alpha}(S_t, \hat{s}_t)$, and the learning rates are multiplied by $D$. We set $D$ in the same way as \methodBasic{}.
    \item \method{}: There are 2 hyperparameters: the maximum radius $D$ and the lifetime multiplier $g$ in~\eqref{eq:lifetime}. We set $D$ in the same way as \methodBasic{}. We set $g = 8$ for the time series experiments and $g = 32$ for the image classification experiments.
\end{enumerate}

\begin{table}[t]
\centering
\begin{tabular}{l|cccc|cccc|cccc}
\toprule
 & \multicolumn{4}{c|}{LGBM (MAE = 0.05)} & \multicolumn{4}{c|}{ARIMA (MAE = 0.14)} & \multicolumn{4}{c}{Prophet (MAE = 0.08)} \\
Method & Coverage & Width & $\mathrm{LCE}_k$ & $\SAReg_k$ & Coverage & Width & $\mathrm{LCE}_k$ & $\SAReg_k$ & Coverage & Width & $\mathrm{LCE}_k$ & $\SAReg_k$ \\
\midrule
EnbPI & \color{red} 0.803 & 0.088 & 0.331 & 0.017 & \color{ForestGreen} 0.916 & 0.220 & 0.186 & 0.032 & \color{red} 0.834 & 0.141 & 0.299 & 0.018 \\
EnbNEx & \color{ForestGreen} 0.864 & 0.097 & 0.218 & 0.010 & \color{ForestGreen} 0.907 & 0.198 & 0.193 & \underline{0.027} & \color{ForestGreen} 0.892 & 0.151 & 0.195 & 0.010 \\
EnbFACI & \color{ForestGreen} 0.856 & \bfseries 0.081 & 0.200 & \underline{0.007} & \color{ForestGreen} 0.900 & \bfseries 0.181 & \underline{0.161} & \bfseries 0.021 & \color{ForestGreen} 0.884 & \bfseries 0.130 & 0.164 & \underline{0.007} \\
Enb\methodBasic{} & \color{ForestGreen} 0.871 & 0.098 & \underline{0.173} & 0.008 & \color{ForestGreen} 0.906 & 0.201 & 0.165 & \underline{0.027} & \color{ForestGreen} 0.898 & 0.145 & \underline{0.144} & 0.008 \\
Enb\method{} & \color{ForestGreen} 0.884 & \underline{0.091} & \bfseries 0.134 & \bfseries 0.005 & \color{ForestGreen} 0.893 & \underline{0.192} & \bfseries 0.150 & 0.053 & \color{ForestGreen} 0.888 & \bfseries 0.130 & \bfseries 0.130 & \bfseries 0.005 \\
\bottomrule
\end{tabular}
\caption{Ensemble results on M4 Hourly with target coverage $1 - \alpha = 0.9$ and interval size $k = 20$. Best results are {\bfseries bold}, while second best are \underline{underlined}, as long as the method's global coverage is in $(0.85, 0.95)$ (green). Enb\method{} achieves the best local coverage error and strongly adaptive regret for all models, except ARIMA where its strongly adaptive regret is somewhat high.}
\label{tab:enb_m4_hourly}
\end{table}

\begin{table}[t]
    \centering
\begin{tabular}{l|cccc|cccc|cccc}
\toprule
 & \multicolumn{4}{c|}{LGBM (MAE = 0.11)} & \multicolumn{4}{c|}{ARIMA (MAE = 0.10)} & \multicolumn{4}{c}{Prophet (MAE = 0.18)} \\
Method & Coverage & Width & $\mathrm{LCE}_k$ & $\SAReg_k$ & Coverage & Width & $\mathrm{LCE}_k$ & $\SAReg_k$ & Coverage & Width & $\mathrm{LCE}_k$ & $\SAReg_k$ \\
\midrule
EnbPI & \color{red} 0.512 & 0.093 & 0.716 & 0.077 & \color{ForestGreen} 0.894 & 0.121 & 0.289 & 0.056 & \color{red} 0.791 & 0.288 & 0.374 & 0.062 \\
EnbNEx & \color{red} 0.749 & 0.142 & 0.520 & 0.020 & \color{ForestGreen} 0.894 & 0.116 & 0.293 & 0.035 & \color{red} 0.847 & 0.343 & 0.367 & 0.029 \\
EnbFACI & \color{red} 0.776 & 0.130 & 0.392 & 0.015 & \color{ForestGreen} 0.887 & \underline{0.100} & 0.248 & \bfseries 0.015 & \color{ForestGreen} 0.853 & \bfseries 0.232 & 0.230 & 0.025 \\
Enb\methodBasic{} & \color{red} 0.798 & 0.143 & 0.396 & 0.021 & \color{ForestGreen} 0.898 & 0.106 & \underline{0.241} & 0.047 & \color{ForestGreen} 0.900 & 0.292 & \underline{0.195} & \underline{0.023} \\
Enb\method{} & \color{ForestGreen} 0.875 & \bfseries 0.138 & \bfseries 0.203 & \bfseries 0.007 & \color{ForestGreen} 0.908 & \bfseries 0.096 & \bfseries 0.187 & \underline{0.034} & \color{ForestGreen} 0.917 & \underline{0.233} & \bfseries 0.139 & \bfseries 0.011 \\
\bottomrule
\end{tabular}
    \caption{Ensemble on M4 Daily with target coverage $1 - \alpha = 0.9$ and interval size $k = 20$. Best results are {\bfseries bold}, while second best are \underline{underlined}, as long as the method's global coverage is in $(0.85, 0.95)$ (green). Enb\method{} achieves the best or second best width, local coverage error, and strongly adaptive regret for all models. It is also the only method which achieves valid coverage for LGBM.}
    \label{tab:enb_m4_daily}
\end{table}

\begin{table}[t]
    \centering
\begin{tabular}{l|cccc|cccc|cccc}
\toprule
 & \multicolumn{4}{c|}{LGBM (MAE = 0.12)} & \multicolumn{4}{c|}{ARIMA (MAE = 0.07)} & \multicolumn{4}{c}{Prophet (MAE = 0.16)} \\
Method & Coverage & Width & $\mathrm{LCE}_k$ & $\SAReg_k$ & Coverage & Width & $\mathrm{LCE}_k$ & $\SAReg_k$ & Coverage & Width & $\mathrm{LCE}_k$ & $\SAReg_k$ \\
\midrule
EnbPI & \color{red} 0.540 & 0.118 & 0.596 & 0.079 & \color{ForestGreen} 0.893 & 0.174 & 0.266 & 0.021 & \color{red} 0.785 & 0.288 & 0.369 & 0.048 \\
EnbNEx & \color{red} 0.720 & 0.168 & 0.483 & 0.015 & \color{ForestGreen} 0.910 & 0.161 & 0.252 & 0.012 & \color{ForestGreen} 0.875 & 0.375 & 0.305 & 0.027 \\
EnbFACI & \color{red} 0.784 & 0.159 & 0.336 & 0.010 & \color{ForestGreen} 0.905 & \underline{0.142} & 0.193 & \underline{0.009} & \color{ForestGreen} 0.881 & \underline{0.235} & 0.201 & \underline{0.017} \\
Enb\methodBasic{} & \color{red} 0.795 & 0.172 & 0.329 & 0.016 & \color{ForestGreen} 0.908 & 0.152 & \underline{0.184} & 0.012 & \color{ForestGreen} 0.915 & 0.304 & \underline{0.164} & 0.023 \\
Enb\method{} & \color{ForestGreen} 0.874 & \bfseries 0.165 & \bfseries 0.173 & \bfseries 0.005 & \color{ForestGreen} 0.909 & \bfseries 0.131 & \bfseries 0.151 & \bfseries 0.008 & \color{ForestGreen} 0.907 & \bfseries 0.232 & \bfseries 0.133 & \bfseries 0.012 \\
\bottomrule
\end{tabular}
\caption{Ensemble results on M4 Weekly with target coverage $1 - \alpha = 0.9$ and interval size $k = 20$. Best results are {\bfseries bold}, while second best are \underline{underlined}, as long as the method's global coverage is in $(0.85, 0.95)$ (green). Enb\method{} achieves the best width, local coverage error, and strongly adaptive regret for all models. It is also the only method which achieves valid coverage for LGBM.}
\label{tab:enb_m4_weekly}
\end{table}

\begin{table}[t]
\centering
\begin{tabular}{l|cccc|cccc|cccc}
\toprule
 & \multicolumn{4}{c|}{LGBM (MAE = 0.09)} & \multicolumn{4}{c|}{ARIMA (MAE = 0.07)} & \multicolumn{4}{c}{Prophet (MAE = 0.07)} \\
Method & Coverage & Width & $\mathrm{LCE}_k$ & $\SAReg_k$ & Coverage & Width & $\mathrm{LCE}_k$ & $\SAReg_k$ & Coverage & Width & $\mathrm{LCE}_k$ & $\SAReg_k$ \\
\midrule
EnbPI & \color{ForestGreen} 0.859 & \bfseries 0.164 & 0.225 & 0.012 & \color{ForestGreen} 0.910 & \underline{0.163} & 0.159 & \underline{0.010} & \color{ForestGreen} 0.904 & \underline{0.163} & 0.160 & \bfseries 0.010 \\
EnbNEx & \color{ForestGreen} 0.882 & \underline{0.177} & 0.177 & \underline{0.010} & \color{ForestGreen} 0.920 & 0.174 & 0.138 & 0.012 & \color{ForestGreen} 0.923 & 0.178 & 0.126 & 0.012 \\
EnbFACI & \color{ForestGreen} 0.883 & \underline{0.177} & 0.166 & \bfseries 0.009 & \color{ForestGreen} 0.905 & \bfseries 0.156 & 0.131 & \bfseries 0.009 & \color{ForestGreen} 0.907 & \bfseries 0.156 & 0.123 & \bfseries 0.010 \\
Enb\methodBasic{} & \color{ForestGreen} 0.895 & 0.194 & \underline{0.132} & \underline{0.010} & \color{ForestGreen} 0.912 & 0.176 & \bfseries 0.120 & 0.012 & \color{ForestGreen} 0.912 & 0.178 & \underline{0.120} & 0.013 \\
Enb\method{} & \color{ForestGreen} 0.886 & 0.188 & \bfseries 0.131 & 0.012 & \color{ForestGreen} 0.906 & 0.168 & \underline{0.122} & 0.012 & \color{ForestGreen} 0.905 & 0.170 & \bfseries 0.118 & 0.013 \\
\bottomrule
\end{tabular}
\caption{Ensemble results on NN5 Daily with target coverage $1 - \alpha = 0.9$ and interval size $k = 20$. Best results are {\bfseries bold}, while second best are \underline{underlined}, as long as the method's global coverage is in $(0.85, 0.95)$ (green). Enb\method{} and Enb\methodBasic{} achieve the best and second-best worst-case local coverage error error at the cost of having slightly wider intervals.}
\label{tab:enb_nn5_daily}
\end{table}

\section{Time Series Experiments with Ensemble Models}
\label{appendix:ensemble}
In this section, we replicate the experiments of Section~\ref{sec:exp_time_series} using ensemble models trained with the method of EnbPI \citep{xu2021enbpi}. Specifically, we train base learner $\hat{f}^{(b)}$ on $(X_t, Y_T)_{t \in I_b}$, where $I_b$ is sampled randomly from $[T]$. Then, we obtain the residual $S_t^y = |y - \phi(\hat{f}^{(b)}(X_t) : I_b \not\ni t)|$ by aggregating all models not trained on $(X_t, Y_t)$. Finally, the residual of a new observation $(X_{T+1}, Y_{T+1})$ is $|Y_{T+1} - \phi(\hat{f}^{(b)}(X_{T+1}) : b \in [B])|$. We use $B = 5$ models in the ensemble.

EnbPI predicts the radius $\hat{s}_{t+1}$ as the $1 - \alpha$ empirical quantile of the previously observed residuals, as in split conformal prediction. However, these prediction sets can be obtained via an arbitrary function of the scores, including NExCP, FACI, \methodBasic{}, or \method{}. Besides EnbPI, we call these hybrid methods EnbNEx, EnbFACI, Enb\methodBasic{}, and Enb\method{} respectively. We use the same hyperparameters as described in Appendix~\ref{appendix:exp_details}.

This approach puts the other methods on even footing with EnbPI, because EnbPI removes the need for a train/calibration split and changes the underlying model from a single learner to a more accurate ensemble. We consider the contributions of EnbPI orthogonal to our own, and this section shows that their method can successfully be combined with ours.

The results mirror those of Section~\ref{sec:exp_time_series}. Enb\method{} generally obtains the best or second-best interval width, worst-case local coverage error, and strongly adaptive regret on all M4 datasets (Tables~\ref{tab:enb_m4_hourly}, \ref{tab:enb_m4_daily}, \ref{tab:enb_m4_weekly}). On NN5 (Table~\ref{tab:enb_nn5_daily}), all methods obtain similar strongly adaptive regret. However, Enb\methodBasic{} and Enb\method{} obtain the best and second-best worst-case local coverage error at the cost of having slightly wider intervals. Across the board, Enb\method{} has narrower intervals than Enb\methodBasic{}.

\begin{figure*}[t]
    \centering
    \includegraphics[width=0.9\textwidth,trim={0 0.7cm 0 0}]{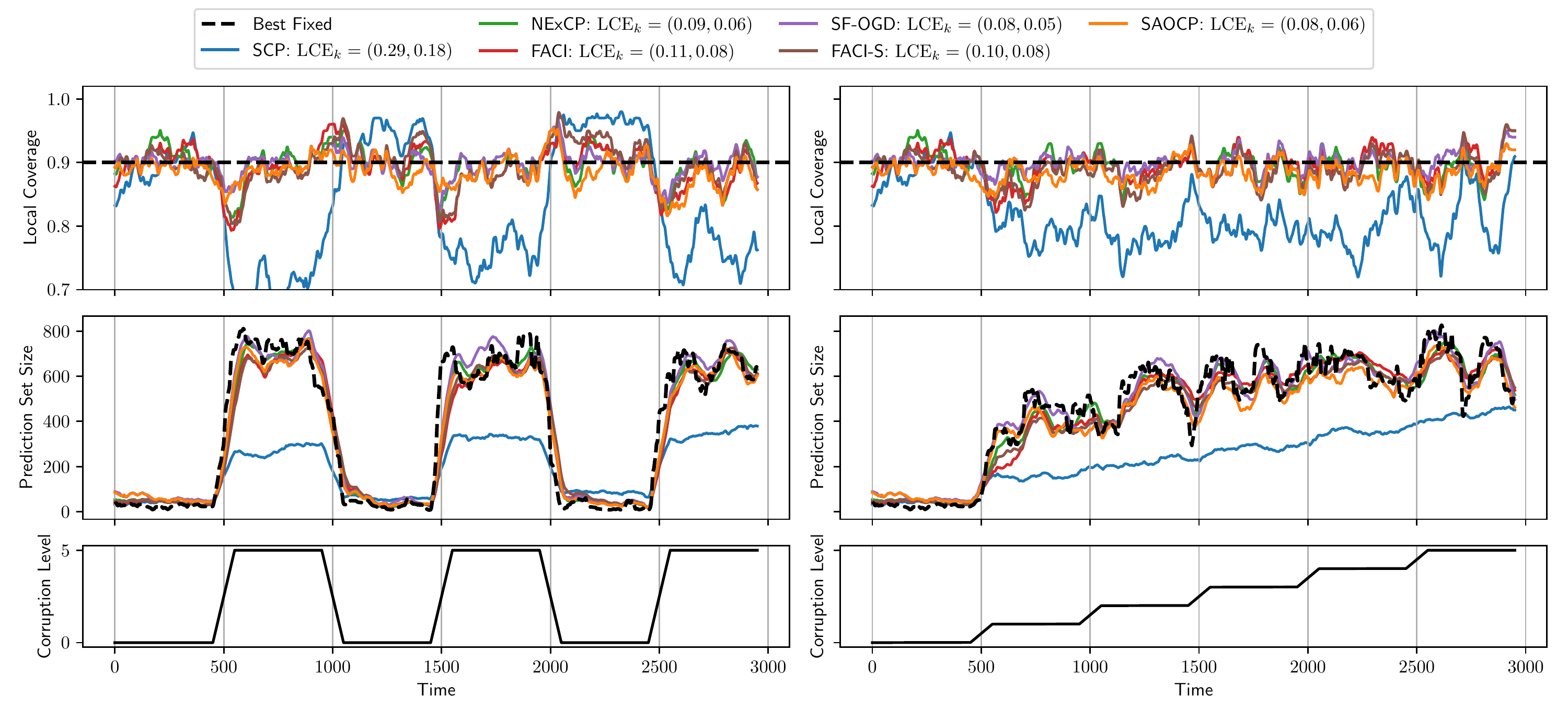}
    \caption{Local coverage (top row) and prediction set size (second row) achieved by various UQ methods when the distribution shifts between ImageNet and ImageNet-C every 500 steps. We plot moving averages with window size $k = 100$. Left: sudden shifts between corruption level 0 and 5. Right: gradual shift from level 0 to 5. \method{} and \methodBasic{}'s local coverage remain the closest to the target of 0.9, especially at the change points. While the two methods attain similar local coverage, \method{} returns smaller prediction sets than \methodBasic{}.
    }
    \label{fig:imagenet}
\end{figure*}

\section{Image Classification on TinyImageNet/TinyImageNet-C}
\label{appendix:more_cv}
We replicate the experiments of Section~\ref{sec:exp_image} using a ResNet-50 classifier on the TinyImageNet \citep{Le2015TinyIV} base dataset and its corrupted version TinyImageNet-C \citep{hendrycks2019benchmarking}. We train the model using SGD with learning rate 0.1 (annealed by a factor of 10 every 7 epochs), momentum 0.9, batch size 256, and early stopping if validation accuracy stops improving for 10 epochs. The model achieved a final test accuracy of 52.8\%. Figure~\ref{fig:tinyimagenet} shows the results. 

When performing uncertainty quantification, we use the conformal score~\eqref{eq:raps} with width regularization parameters $\lambda = 0.01$ and $k_{reg} = 20$. As in Section~\ref{sec:exp_image}, \method{} and \methodBasic{}'s local coverages remain the closest to the target of 0.9. The differences are most apparent when the distribution shift is sudden, suggesting that they are able to adapt to these distribution shifts more quickly than other methods. While all methods attain similar prediction set sizes, NExCP and FACI adapt more slowly to the best fixed prediction set size than \method{} and \methodBasic{}. \method{} also has better coverage than \methodBasic{}.

\end{document}